\newtheorem{theorem}{Theorem}[section]
\newtheorem{proposition}{Proposition}[section]
\newtheorem{lemma}{Lemma}[section]
\newtheorem{corollary}{Corollary}[section]
\newtheorem{remark}{Remark}[section]
\newtheorem*{maintheorem*}{Main Theorem}
\numberwithin{equation}{section}
\def\R{\mathbb R}
\def\R{\mathbb R}
\def\to{\rightarrow}
\newcommand{\vecsigma}{\bm{\sigma}}
\DeclareMathOperator{\Imag}{Im}
\DeclareMathOperator*{\esssup}{ess\,sup}
\DeclareMathOperator{\loss}{loss}
\DeclareMathOperator{\erf}{erf}
\DeclareMathOperator{\GELU}{GELU}
\DeclareMathOperator{\argmin}{argmin}
\DeclareMathOperator{\ReLU}{ReLU}
\newcommand{\tnorm}[1]{\left\vert\!\left\vert\!\left\vert #1 \right\vert\!\right\vert\!\right\vert}
\def\R{\mathbb R}
\def\to{\rightarrow}
\newcommand\thankssymb[1]{\textsuperscript{\@fnsymbol{#1}}}
\begin{document}

\title[Constructive Universal Approximation and Memorization by  Deep Networks]{Constructive Universal Approximation and Finite Sample Memorization by Narrow Deep ReLU Networks}

\author[M. Hern\'{a}ndez]{Mart\' in Hern\' andez\thankssymb{2}}
\email{martin.hernandez@fau.de}

\author[E. Zuazua]{Enrique Zuazua \thankssymb{1}\thankssymb{2}\thankssymb{3}}

\thanks{\thankssymb{2}
 Chair for Dynamics, Control, Machine Learning, and Numerics, Alexander von Humboldt-Professorship, Department of Mathematics,  Friedrich-Alexander-Universit\"at Erlangen-N\"urnberg,
91058 Erlangen, Germany.}

\thanks{\thankssymb{1} 
 Departamento de Matem\'{a}ticas,
Universidad Aut\'{o}noma de Madrid,
28049 Madrid, Spain.
}

\thanks{\thankssymb{3} 
Chair of Computational Mathematics, University of Deusto. Av. de las Universidades, 24,
48007 Bilbao, Basque Country, Spain.
}
\email{\texttt{enrique.zuazua@fau.de}}

\subjclass[2020]{68T07, 93C10, 34H05}
\keywords{Deep neuronal network; Finite sample memorization; Simultaneous controllability; Nonlinear discrete dynamics; Universal approximation theorem}

\begin{abstract}
We present a fully constructive analysis of deep ReLU neural networks for classification and function approximation tasks. First, we prove that any dataset with $N$ distinct points in $\mathbb{R}^d$ and $M$ output classes can be exactly classified using a multilayer perceptron (MLP) of width $2$ and depth at most $2N + 4M - 1$, with all network parameters constructed explicitly. This result is sharp with respect to width and is interpreted through the lens of simultaneous or ensemble controllability in discrete nonlinear dynamics.

Second, we show that these explicit constructions yield uniform bounds on the parameter norms and, in particular, provide upper estimates for minimizers of standard regularized training loss functionals in supervised learning. As the regularization parameter vanishes, the trained networks converge to exact classifiers with bounded norm, explaining the effectiveness of overparameterized training in the small-regularization regime.

We also prove a universal approximation theorem in $L^p(\Omega; \R_+)$ for any bounded domain $\Omega \subset \mathbb{R}^d$ and $p \in [1, \infty)$, using MLPs of fixed width $d + 1$. The proof is constructive, geometrically motivated, and provides explicit estimates on the network depth when the target function belongs to the Sobolev space $W^{1,p}$. We also extend the approximation and depth estimation results to $L^p(\Omega; \R^m)$ for any $m \geq 1$. 

Our results offer a unified and interpretable framework connecting controllability, expressivity, and training dynamics in deep neural networks.

\end{abstract}

\maketitle
\setcounter{tocdepth}{1}

\tableofcontents

\section{Introduction and main results}
\subsection{Motivation and summary of the main results} Given a training dataset $\{x_i, y_i\}_{i=1}^N \subset \mathcal{X} \times \mathcal{Y}$, where each $x_i$ represents an input data point and $y_i$ its corresponding label or output, and a model $\phi(x,\theta)$ parameterized by $\theta$, the property of \emph{finite sample memorization}~\cite{yamasaki1993lower,yun2019small} holds if the model $\phi$ can correctly assign the label $y_i$ to each training instance $x_i$, i.e., 
\begin{align*}
   \phi(x_i,\theta) = y_i, \qquad \text{for every } i \in \{1, \dots, N\}.
\end{align*}

We analyze the finite sample memorization property when $\phi(\cdot, \theta)$ corresponds to the output of a neural network, with $\mathcal{X} = \mathbb{R}^d$ for $d \geq 1$, and $\mathcal{Y} = \mathbb{R}^m$ for $m \geq 1$.

When $\phi(x, \theta)$ is determined as the output of a continuous or discrete dynamical system, the problem can also be interpreted as an \emph{ensemble} or \emph{simultaneous controllability} problem, ensuring that the initial data $\{x_i\}$ are mapped simultaneously to the corresponding targets $\{y_i\}$~\cite{dom_zuauza_neuralode,MR3564127,MR1766422,MR3479654}.

This memorization property is particularly valuable for classification and interpolation tasks involving an unknown function $f: \mathcal{X} \to \mathcal{Y}$,  as it guarantees exact fitting at the data points. Once pointwise interpolation is achieved, one can extend this construction to approximate $f$ in $L^p$-norms for  $p \in [1, \infty)$, aligning with universal approximation theorems. These results establish that various neural network architectures are dense in functional spaces, such as $L^p$ or Sobolev spaces, thus enabling global approximation from finite data~\cite{Cybenko1989, devore2021neural, hardt2016identity, Hornik1989MultilayerFN, park2021provable, park2020minimum, vardi2021optimal, yun2019small}.

\begin{figure}
\centering
\includegraphics[width=0.6\textwidth]{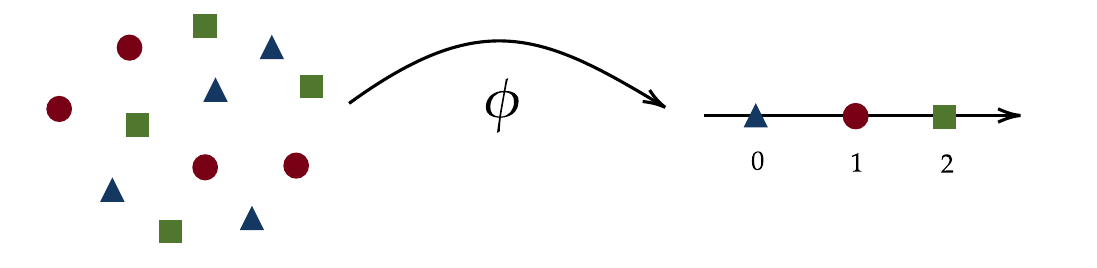}
\caption{Classification of a two-dimensional dataset via the neural network map $\phi$, with input dimension $d=2$ and and scalar output $m=1$.}
\label{fig:my_label}
\end{figure}

In this article, we present the following main results. 
\begin{itemize}
\item The first shows one that a ReLU multilayer perceptron with a width  $2$ and at most $2N + 4M - 1$ layers satisfies the finite sample memorization (or universal interpolation) property for $N$ input points and $M$ classes. Equivalently, the discrete dynamics generated by this MLP fulfils the property of simultaneous and/or ensemble controllable. This result is sharp, as memorization cannot be achieved by MLPs with width  $1$.

Our proof constructs the network parameters in a systematic manner, based on a geometric and dynamical interpretation of the neural network's architecture. Specifically, the parameters at each layer define hyperplanes that partition the state space into regions where the nonlinear activation function induces distinct dynamical behaviors. By strategically selecting these parameters and iteratively applying them across layers, we ensure that the network satisfies the memorization property (see Section~\ref{section_preliminars}). To the best of our knowledge, such a constructive and purely geometric interpretation of how narrow MLPs achieve finite sample memorization has not been previously explored in the literature (see Section~\ref{sec:related_work}).

\item We then examine the implications of our constructive approach in the context of supervised training with $\ell^2$-regularization. Although the networks we construct are not obtained through optimization (but rather through geometric and dynamical considerations), we demonstrate that the explicit interpolating parameters from our first result can be used to establish upper bounds on the optimal value of the regularized empirical loss. In particular, we prove that minimizing a standard training objective with $\ell^2$-regularization produces parameter norms that are uniformly bounded by those of our explicit construction. Moreover, in the vanishing regularization limit, the resulting parameters converge to a minimal-norm interpolating network. This result offers a theoretical explanation for the behavior of trained networks in the small-$\lambda$ regime and reinforces the idea that exact data fitting can be achieved without uncontrolled growth in parameter magnitude.

\item Our final contribution establishes a universal approximation theorem in $L^p(\Omega; \mathbb{R}_+)$, where $\Omega \subset \mathbb{R}^d$ is bounded and $p \in [1, \infty)$. This result is obtained using an MLP of fixed width $d + 1$. As in our first result, the network parameters are constructed explicitly, without relying on any optimization procedure. The proof is geometrically motivated and nonlinear, marking a departure from prior approaches to universal approximation with fixed-width networks~\cite{cai2022achieve, kidger2020universal, kim2024minimum, article2, park2020minimum}. Crucially, our explicit construction allows for quantitative estimates on the required depth when the target function belongs to $W^{1,p}(\Omega; \mathbb{R}_+)$. As a corollary, we extend this approximation result to the vector-valued settings $L^p(\Omega; \mathbb{R}_+^m)$ and $L^p(\Omega; \mathbb{R}^m)$ for any $m \geq 1$, with corresponding estimates on the necessary network width.

\end{itemize}

\subsection{Problem formulation}
Let $x\in\R$ and define the ReLU activation function as $\sigma(x)=\max\{0,x\}$. We consider a sequence of positive integers $\{d_j\}_{j=1}^L$. For each $j\in \{1,\dots, L\}$ and $x=(x^{(1)},\dots,x^{(d_j)})^{\top
}$ in $\R^{d_j}$, we introduce the vector-valued version of $\sigma$, defined by
\begin{align}\label{eq:vector_valuated_relu}
    \vecsigma_j:\R^{d_j}\to \R^{d_j},\qquad\vecsigma_{j}(x)=\left(\sigma(x^{(1)})\ ,\dots ,\sigma(x^{(d_j)})\right).
\end{align}
Given positive integers $L,\,d,\,N,\,M$, and the dataset $\lbrace x_i,y_i\rbrace_{i=1}^N\subset\R^{d}\times \{0,\dots,M-1\}$. We consider the following multilayer perceptron:
\begin{align}\label{discrete_dynamics}
\begin{cases}
x^{j}_i=\vecsigma_{j}(W_j x^{j-1}_i+b_j),\quad &\text{for } j\in \{1,\dots,L\},\\
x^0_i=x_i,
\end{cases}
\end{align}
where $i\in \{1,\dots,N\}$. In this context, $W_j\in \R^{d_{j}\times d_{j-1}}$ and $b_j\in\R^{d_{j}}$, for $j\in \{1,\dots,L\}$, represent the weight matrices and biases, respectively. Each $d_j$ determines the width of the $j$-layer, i.e., the dimension of the Euclidean space where the data reside at each iteration. The depth or the number of hidden layers of the neural network is represented by $L$, which is the number of iterations in the discrete dynamical system \eqref{discrete_dynamics},  referred to as a $L-$hidden layer neural network
 (see Figure \ref{fig:example_nn}). \begin{figure}[H]
\centering
\includegraphics[height=4.2cm]{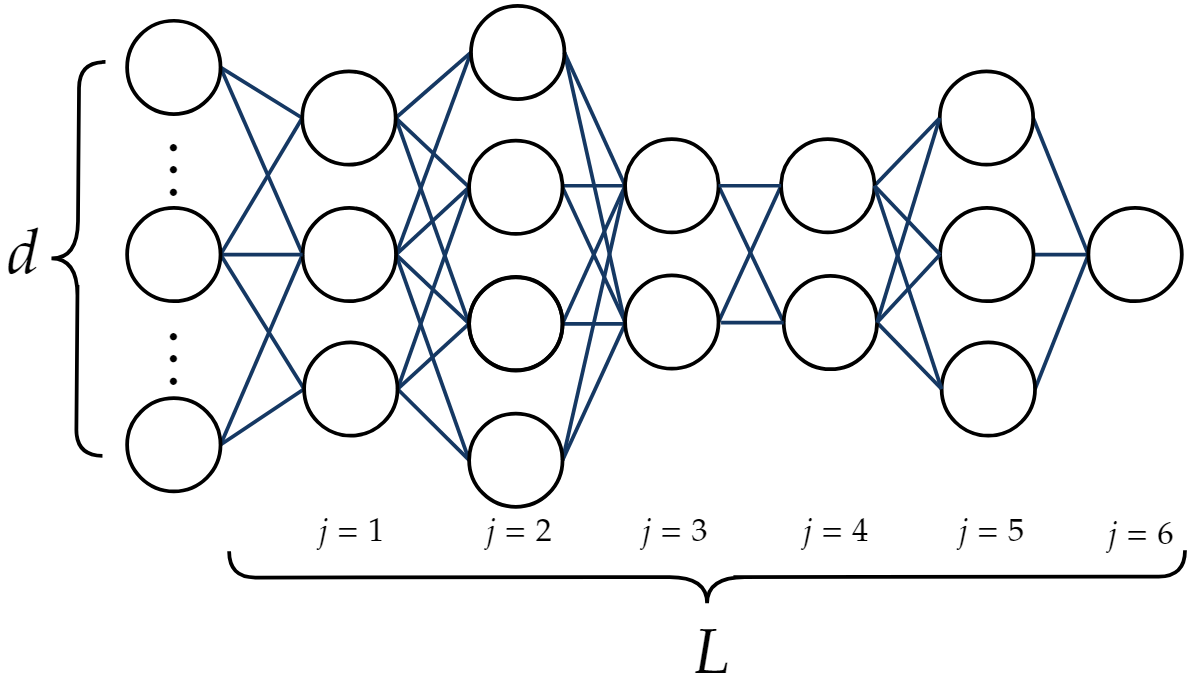}
\caption{Example of a deep neural network defined by the architecture \eqref{discrete_dynamics}. Here, $d$ indicates the dimension of the input data, while $j$ is the index of the layer, $L=6$ being the total number of layers, i.e., the depth of the neural network. In this particular example, we have $d_1=3$, $d_2=4$, $d_3=2$, etc. Moreover, the maximum width of the neural network is $4$, determined by the second layer.}
\label{fig:example_nn}
\end{figure}

In the following, we denote the sequences of weights and biases defining the neural network \eqref{discrete_dynamics} as $\mathcal{W}^L=\{W_j\}_{j=1}^L$ and $\mathcal{B}^L=\{b_j\}_{j=1}^L$, respectively. The \emph{width} of the neural network is defined as $w_{max}=\max_{j\in \{1,\dots,L\}}\{d_j\}$, i.e., the number of neurons in the widest layer. System \eqref{discrete_dynamics} is said to be a \emph{$w_{max}$-wide deep neural network}. The width and depth of a neural network are determined by its architecture and serve as an intrinsic measure of its complexity and approximation capacity.

\subsection{Statement of the main results and strategies of the proofs.}\label{sec:main_results}
\subsubsection{Finite Sample Memorization.}  Let us define the input-output map $\phi:\R^d\to\R$ of the neural network \eqref{discrete_dynamics} as
\begin{align*}
\phi^L(x_i):= \phi(\mathcal{W}^L,\mathcal{B}^L,x_i)=x^{L}_i, \qquad\text{for every }i\in\{1,\dots,N\},
\end{align*}
where $x^{L}$ represents the output of \eqref{discrete_dynamics}. 

Note that we are considering the particular case in which the output lies in $\R$, which means that $W_L\in \R^{1\times d_{L-1}}$.

With this definition, we present our first main theorem.

\begin{theorem}[Finite Sample Memorization]\label{multiclass_theorem}
Let the integers $d,\,N,\,M\geq 1$ and consider the dataset $\lbrace x_i,y_i\rbrace_{i=1}^N\subset \R^{d}\times\{0,\dots,M-1\}$. Assume that $x_i\neq x_j$ if $i\neq j$. Then, there exist parameters $\mathcal{W}^L$ and $\mathcal{B}^L$  with  width  $w_{max}=2$ and depth $L=2N+4M-1$  such that the input-output map of \eqref{discrete_dynamics} satisfies
\begin{align}\label{finite_sample_equality}
\phi(\mathcal{W}^L,\mathcal{B}^L,x_i)=y_i,\qquad\text{for every }i\in\{1,\dots,N\}.
\end{align}
Moreover, this result is sharp since the memorization property cannot be achieved with width $1$.
\end{theorem}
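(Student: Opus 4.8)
The plan is to split the proof into two parts: the constructive upper bound (width $2$, depth $2N+4M-1$ suffices) and the sharpness claim (width $1$ cannot memorize even very simple datasets).

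\textbf{Construction with width $2$.} First I would reduce the multidimensional input to one dimension. Since the $x_i \in \R^d$ are distinct, a generic linear functional $x \mapsto \langle a, x\rangle$ separates them, so after one affine layer (composed with ReLU, using a large shift to keep everything positive) we may assume we work with $N$ distinct scalars $t_1 < t_2 < \dots < t_N$ in $\R_+$, each carrying label $y_i \in \{0,\dots,M-1\}$. The key geometric idea is that a width-$2$ ReLU layer acting on $(s, c) \in \R^2$, where $s$ is the ``active'' coordinate holding the current scalar state and $c$ is an accumulator holding the partial output value, can implement piecewise-affine maps of $s$ while leaving $c$ untouched (or updating it in a controlled way). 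The plan is: process the points one at a time in increasing order of $t_i$. Using one or two layers per point, apply a map that ``collapses'' a small interval around the current smallest unprocessed point to a single value while moving it out of the way (e.g. sending it to a sink), and simultaneously — when that point's class needs to be recorded — adds the appropriate increment to the accumulator coordinate $c$. A ReLU hinge $s \mapsto s - \sigma(s - \tau) + \dots$ lets us flatten $[\tau, \infty)$ or translate a tail; stacking two such hinges isolates a bounded interval. After $2N$ layers all $N$ points have been merged according to their labels, and with at most $4M-1$ further layers one rearranges the at-most-$M$ resulting values onto the integers $0,1,\dots,M-1$ in the output coordinate. I would make the bookkeeping precise by an induction on the number of processed points, with invariant ``after step $k$, the images of $x_1,\dots,x_k$ lie in a set of at most $M$ prescribed points corresponding to their labels, and the images of $x_{k+1},\dots,x_N$ remain distinct and separated from that set.''

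\textbf{Sharpness.} For the lower bound I would argue that a width-$1$ network computes, at each layer, a map of the form $x \mapsto \sigma(w x + b)$ with $w,b \in \R$, i.e. the input-output map $\phi$ is a composition of scalar maps each of which is monotone (non-decreasing if $w>0$, non-increasing if $w<0$) and has range contained in $[0,\infty)$. A composition of monotone functions is monotone, hence $\phi:\R\to\R$ is a monotone function of the one-dimensional reduced input — and more to the point, for genuinely multivariate data one composes with the first linear map $\langle a,x\rangle + b$, so $\phi$ factors through a single affine functional and is monotone in it. Therefore $\phi$ cannot take a value, then a different value, then the first value again along any line in input space. I would exhibit a three-point dataset in $\R^d$, e.g. $x_1, x_2, x_3$ with labels $y_1 = 0$, $y_2 = 1$, $y_3 = 0$, positioned so that for \emph{every} affine functional $\ell$ the middle point's value $\ell(x_2)$ lies strictly between $\ell(x_1)$ and $\ell(x_3)$ is impossible to avoid for at least one ordering — more carefully, take $x_2$ in the interior of the segment $[x_1,x_3]$, so $\ell(x_2)$ is always between $\ell(x_1)$ and $\ell(x_3)$; then monotonicity of $\phi$ forces $\phi(x_2)$ between $\phi(x_1)=0$ and $\phi(x_3)=0$, i.e. $\phi(x_2)=0 \neq 1$, a contradiction. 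This shows width $1$ fails.

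\textbf{Main obstacle.} The routine linear-algebra reduction and the sharpness argument are straightforward; the real work is the explicit width-$2$ construction — in particular, guaranteeing that the hinge maps used to ``park'' already-processed points do not later get reactivated or perturbed when we process subsequent points, and that the accumulator coordinate is never corrupted by the ReLU nonlinearity (which requires keeping it nonnegative and keeping the transformations acting on it affine with nonnegative pre-activation). Getting the depth count to land exactly at $2N+4M-1$ rather than some larger multiple will require being economical: reusing layers to do double duty (advancing a point and updating the accumulator in the same layer where possible) and handling the final relabelling of the $M$ classes with the claimed $4M-1$ layers. I expect the bulk of the paper's Section~\ref{section_preliminars} to be devoted to making this layer-by-layer induction rigorous, likely via a sequence of elementary lemmas describing the action of a single width-$2$ ReLU layer on configurations of points on the line.
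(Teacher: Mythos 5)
Your sharpness argument is correct, and in fact cleaner than the paper's own informal justification (Remark \ref{remark_minimal_width}): a width-$1$ network is a monotone function of a single affine functional of the input, so three collinear points labelled $0,1,0$ cannot be memorized. The one-dimensional reduction via a generic functional is also exactly the paper's first step (Lemma \ref{proyeccion_de_datos}). The genuine gap is in the core width-$2$ construction. The mechanism you propose — park the points one at a time at a sink while an ``accumulator'' coordinate $c$ receives the label of the point being parked and is preserved otherwise — is obstructed by the ReLU structure itself. Suppose at some stage two parked points carry distinct positive labels, i.e.\ their states are $(0,y_a)$ and $(0,y_b)$ with $y_a\neq y_b>0$, the unprocessed points have states $(s_k,0)$ with $s_k>0$, and you want to park a further point $(s_i,0)$ with label $y_i>0$. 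The accumulator row of the layer computes $c'=\sigma(\alpha s+\beta c+\gamma)$. Preserving the two parked states forces $\beta y_a+\gamma=y_a$ and $\beta y_b+\gamma=y_b$, hence $\beta=1$, $\gamma=0$; keeping the unprocessed accumulators at zero then requires $\sigma(\alpha s_k)=0$, i.e.\ $\alpha\le 0$, while writing $y_i>0$ requires $\sigma(\alpha s_i)=y_i>0$, i.e.\ $\alpha>0$ --- a contradiction. So the invariant you state (processed points sit at \emph{prescribed} points corresponding to their labels) cannot be maintained by any recording layer once two positive labels have been written, which happens for generic datasets as soon as $M\ge 3$; you flag this as the main obstacle, but the proposal does not contain the idea that removes it.

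The paper's resolution is precisely to decouple the two tasks you try to perform simultaneously: its $2N$-layer compression stage only merges each class to an \emph{emergent, unconstrained} representative point in $\R^2$ (alternating one ``structuring'' and one ``collapsing'' layer per point, never attempting to encode label values), and only afterwards converts the $M$ representatives into the labels $0,\dots,M-1$. That conversion is itself nontrivial, because the assignment representative $\mapsto$ label is in general a non-monotone bijection on the line, which no single width-$2$ ReLU layer can realize; the paper spends $2M+1$ layers sorting the representatives and $2M-3$ further layers rescaling them onto the integers, and this is exactly where your unexplained ``at most $4M-1$ further layers'' would have to come from. In short, the skeleton (1D projection, two layers per point, final relabelling) matches the paper and the sharpness proof is fine, but the two load-bearing constructive steps — writing labels into an accumulator during parking, and the final relabelling within $4M-1$ layers — are respectively obstructed and unsubstantiated as written.
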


\begin{remark}\label{remark_theorem1} Some comments are in order.
\begin{itemize}
    \item Theorem \ref{multiclass_theorem} guarantees that there exists a $2-$wide deep neural network satisfying the finite sample memorization property, or equivalently, that system  \eqref{discrete_dynamics} is simultaneously or ensemble controllable.

\item Obviously, Theorem \ref{multiclass_theorem} also ensures finite sample memorization with $w_{max}\geq 2$.

\item Theorem \ref{multiclass_theorem} provides an estimate for the number of layers sufficient for the neural network to exhibit the finite sample memorization property. The depth of the network is directly related to both the number of data points $N$ and distinct labels $M$. However, it is independent of the dimension $d$, to which the data set belongs.  

\item  The neural network depth estimation is obtained from the constructive proof of \Cref{multiclass_theorem}, which is based on the worst-case scenario. This construction does not guarantee optimality in the estimated depth $L$, and for specific datasets, memorization could be achieved with fewer layers.

\item Although the width of the neural network is $2$, some layers have only one neuron, as in \Cref{fig:NN_DIAGRAM}. Namely, the total number of neurons and parameters in our neural network is $4N+6M+d-2$ and  $8N+12M+2d-4$, respectively. 

\item In Theorem \ref{multiclass_theorem}, we considered $\{y_i\}_{i=1}^N\subset\{0,\dots,M-1\}$ to simplify the exposition. However, the labels $\{0,\dots,M-1\}$ could be replaced by any other choice of $M$ distinct values in $\R_+$. This does not impact the width and depth of the neural network needed for memorization.
\end{itemize}
\end{remark}
\begin{figure}
\centering
\includegraphics[height=3.2cm]{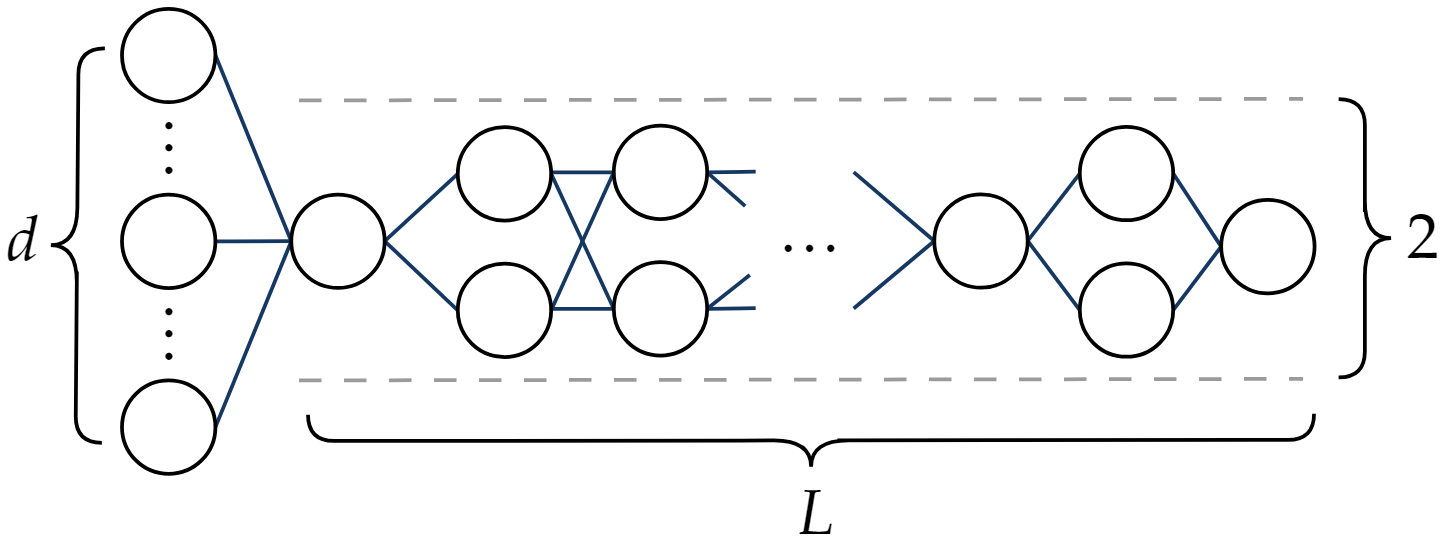}
\caption{
Deep neural network   of width  $w_{max}=2$ as in  Theorem \ref{multiclass_theorem}.}
\label{fig:NN_DIAGRAM}
\end{figure}

\emph{Strategy of proof of Theorem \ref{multiclass_theorem}.} The proof of this result is grounded in three fundamental tools, all of which are derived from the geometrical properties of the system \eqref{discrete_dynamics}:
\smallbreak
\begin{itemize}
\item {\bf Dimension Reduction:} Given a family of distinct points $\{x_k\}_{k=1}^N\subset\R^{d}$, $d\geq 1$, we can construct a projection $\phi^1:\R^{d}\to \R$ so that its images  are  all  different.
\smallbreak

\item { \bf Distance Scaling:} Let $w\in\R^d$ and $b\in\R$. For $x^0\in\R^d$, if $w\cdot x^0+b>0$, the value of $\sigma(w\cdot x^0+b)\in\R$ corresponds to $\|w\|d(x_0,H)$, where $\|w\|$ is the Euclidean norm of $w$ and $d(x_0,H)$ is the distance between $x_0$ and the hyperplane 
    \begin{align}\label{hyperplane_example}
        H:=\{ x\in\R^d\,:\,w\cdot x+b=0\}.
    \end{align}
\smallbreak

\item { \bf Collapse:} The hyperplane \eqref{hyperplane_example} divides the space into two half-spaces and the function $\sigma(w\cdot x+b)$ collapses the half-space  $w\cdot x+b\leq 0$ into the null point.
\medbreak
\end{itemize}

A more detailed discussion of these tools can be found in Section \ref{section_preliminars}. The map $\phi$ in Theorem \ref{multiclass_theorem} is built in four steps: 
\smallbreak
\begin{enumerate}
    \item \textbf{Preconditioning of the data:} Data is driven from the $d-$dimensional space to the one-dimensional one to reduce complexity.

\item \textbf{Compression process:} A recursive process is built to drive the $N$ data points to $M$ representative elements, according to their labels.

\item \textbf{Data sorting:} Data are mapped to ordered one-dimensional points according to their labels.

\item \textbf{Mapping to the respective labels:} Finally, each data point is mapped to its corresponding label.
\end{enumerate}

In each of these steps, we employ an input-output map of the neuronal network \eqref{discrete_dynamics}, utilizing at most two neurons per layer. As this is a purely constructive process, we can determine the number of layers required at each stage, and therefore, we can estimate the depth of the neural network. Further details on these key steps can be found in Section \ref{summary_proof}.
\medbreak
Let us consider the norms
\begin{align*}
    \tnorm{(\mathcal{W}^L,\mathcal{B}^L)}_2 := \left(\sum_{j=1}^L\|W_j\|_F^2 + \|b_j\|_2^2\right)^{1/2},
\end{align*}
and
\begin{align*}
    \tnorm{(\mathcal{W}^L,\mathcal{B}^L)}_\infty := \sup_{j\in\{1,\dots,L\}} \left\{ \|W_j\|_\infty, \|b_j\|_\infty \right\},
\end{align*}
where $\|\cdot\|_F$ denotes the Frobenius norm, $\|\cdot\|_2$ the $\ell^2$-norm, and $\|\cdot\|_\infty$ the $\ell^\infty$-norm. Then, due to the explicit construction of the parameters in \Cref{multiclass_theorem}, we obtain the following estimate for their norms.

\begin{corollary}\label{coro:estimation_norms}
Let $d,\,N,\,M > 1$ be integers, and consider the dataset $\{(x_i,y_i)\}_{i=1}^N \subset B_{R_x}^d(0)\times B_{R_y}^1(0)$, where $B_{R_x}^d(0) \subset \R^d$ and $B_{R_y}^1(0) \subset \R$ denote balls of radius $R_x > 0$ and $R_y > 0$ centered at the origin. Then, the parameters $\mathcal{W}^L$ and $\mathcal{B}^L$ provided by \Cref{multiclass_theorem} satisfy
\begin{align}
\tnorm{(\mathcal{W}^L,\mathcal{B}^L)}_2 &\leq C(1 + R_x\sqrt{N} + R_x N \sqrt{M} + R_y M), \\
\tnorm{(\mathcal{W}^L,\mathcal{B}^L)}_\infty &\leq C(R_x N + M + R_y),
\end{align}
where $C > 0$ is a constant independent of $M$, $N$, $R_x$, $R_y$, and $d$, but depends on $\min_{i \neq j \in \{1,\dots,N\}} \|x_i - x_j\|$.
\end{corollary}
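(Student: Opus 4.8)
The plan is to track the parameters produced at each of the four stages of the construction behind Theorem~\ref{multiclass_theorem} and to bound their Frobenius/$\ell^\infty$ norms stage by stage, then sum. I would first fix notation: let $\delta := \min_{i\neq j}\|x_i-x_j\|$ and recall that the construction only ever uses layers of width $1$ or $2$, so each $W_j$ has at most $4$ entries and each $b_j$ at most $2$ entries; hence $\|W_j\|_F \le 2\|W_j\|_\infty$ and the $\ell^2$ norm and $\ell^\infty$ norm of the whole network differ by at most a factor $\sqrt{L} = \sqrt{2N+4M-1}$ times a constant, which already absorbs into the stated form if the per-layer $\ell^\infty$ bounds are right. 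The real work is to get a uniform per-layer bound and then count how many layers of each "size" appear.

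\emph{Stage-by-stage bookkeeping.} Step~1 (preconditioning): the projection $\phi^1:\R^d\to\R$ is a single linear map $x\mapsto w\cdot x$ chosen so the $N$ images are distinct; a generic $w$ can be taken with $\|w\|_\infty \le 1$ (e.g. a unit vector), and the separation of the projected points can be guaranteed to be at least a constant multiple of $\delta$ — this is where the dependence on $\delta$ enters and must be made quantitative (a Vandermonde-type or volume argument gives a $w$ with $\|w\|=1$ and pairwise image gaps $\gtrsim \delta/\sqrt N$ or similar). After this step the data lie in an interval of length $\lesssim R_x$. Step~2 (compression): the recursive "collapse + distance scaling" moves points together in groups; each such layer is a width-$\le 2$ affine map followed by $\sigma$, and by the Distance Scaling lemma the weights are ratios of distances between current point positions, so their size is controlled by (current diameter)$/$(current minimal gap); since the minimal gap never shrinks below a constant times $\delta$ and the diameter never exceeds a constant times $R_x$, each such weight is $O(R_x/\delta)$ and each bias $O(R_x)$, and there are $O(N)$ such layers — giving the $R_x N$ and $R_x N\sqrt M$ terms after the $\ell^2$ sum. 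Step~3 (sorting) rearranges $M$ representative values into ordered ones: again width-$\le2$ affine maps with entries $O(1)$ in the rescaled coordinates, contributing the lower-order $\sqrt M$-type terms. Step~4 (mapping to labels) is a final short block of $O(M)$ layers whose job is to send the $M$ representatives to $0,1,\dots,M-1$; the weights here are $O(1)$ but the biases can be as large as $O(M)$ (to place a point at height $\sim M$) and, accounting for the output radius, $O(R_y)$ — this produces the $R_y M$ and $R_y$ terms. Summing $\|W_j\|_F^2 + \|b_j\|_2^2$ over the $O(N)$ compression layers (each $O((R_x/\delta)^2)$, but $\delta$ absorbed into $C$) and the $O(M)$ label layers (each $O(M^2 + R_y^2)$) and taking square roots yields exactly the claimed $\tnorm{\cdot}_2 \lesssim 1 + R_x\sqrt N + R_x N\sqrt M + R_y M$; the $\ell^\infty$ bound is just the maximum over the same per-layer estimates, i.e. $\lesssim R_x N + M + R_y$ (here $R_x/\delta$ appears but is hidden in $C$, and the $N$ in $R_xN$ comes from the worst-case single-layer weight in the compression stage after repeated rescalings, not from the layer count).

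\emph{Main obstacle.} The delicate point is that the constants in "minimal gap $\ge c\,\delta$" and "diameter $\le C\,R_x$" are \emph{preserved} through the entire recursive compression, rather than degrading geometrically with each of the $\Theta(N)$ steps; if each step could shrink the gap by a constant factor $<1$, the weights would blow up exponentially in $N$ and the polynomial bound would fail. So I would need to verify from the Section~\ref{summary_proof} construction that each collapse step is designed to keep the surviving points' spacing uniformly comparable to the original $\delta$ (up to a global constant), and that the rescalings used are normalizing rather than compounding. A secondary, more routine obstacle is making the Step~1 choice of $w$ effective — producing an explicit $w$ with $\|w\|_\infty\le 1$ whose induced gaps are bounded below by a computable function of $\delta$ and $N$ — but this is standard (perturb a coordinate projection, or use moment/Vandermonde coordinates) and only affects the constant $C$ and its stated dependence on $\delta$.
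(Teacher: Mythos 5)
Your overall plan (stage-by-stage bookkeeping of per-layer Frobenius/$\ell^\infty$ norms, then summing) is the same as the paper's, but the quantitative accounting has a genuine gap, and it is internally inconsistent precisely at the point that produces the dominant terms. You assert that throughout the $O(N)$ compression layers ``the minimal gap never shrinks below a constant times $\delta$ and the diameter never exceeds a constant times $R_x$,'' so every weight is $O(R_x/\delta)$ and every bias $O(R_x)$. Under that assumption the $\ell^2$ sum over the compression block is only $O(R_x\sqrt N)$ (up to the $\delta$-dependent constant) and no parameter of size $R_xN$ ever appears, so your own hypotheses cannot yield the $R_xN\sqrt M$ and $R_xN$ terms you then claim they ``give''; your later remark that $R_xN$ ``comes from the worst-case single-layer weight after repeated rescalings'' directly contradicts the uniform diameter/gap control you just assumed. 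In the paper these terms do not come from the compression weights at all: the compression weights are $O(1)$ and the biases $O(R_x)$, but the \emph{positions of the data drift additively}, via the crude recursion $\|z_i^{(\ell)}\|_\infty\le\|z_i^{(\ell-1)}\|_\infty+M_1$ with $M_1\le 3R_x$, so after $2N$ layers the representatives lie in a ball of radius $R_z\le 2R_x(2N+1)$. It is the \emph{sorting stage} (which you dismiss as contributing only ``lower-order $\sqrt M$-type terms'') whose $O(M)$ layers have biases of size up to $R_z\sim R_xN$, producing $(3M+1)(1+R_z^2)\sim R_x^2N^2M$ in the $\ell^2$ sum (hence $R_xN\sqrt M$) and $R_xN$ in the $\ell^\infty$ bound; the label-mapping stage contributes weights of size $M+R_y$ (a weight, not only a bias), giving the $R_yM$, $M$, and $R_y$ terms. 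Without tracking this growth of the intermediate data radius — or else actually proving your unproved claim that the diameter stays $O(R_x)$, which would in fact give a different (stronger) estimate than the one stated — the claimed bounds are not established.

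Relatedly, the ``main obstacle'' you single out (that the minimal gap might degrade geometrically over $\Theta(N)$ collapse steps) is not where the difficulty lies in the paper's argument: the dependence on the minimal separation is simply absorbed into the constant $C$ (the paper's $C_\xi$), which the statement explicitly allows. The step that actually needs verification from the explicit construction is the control of $\max_i\|z_i^{(\ell)}\|_\infty$ layer by layer, and your proposal does not address it.
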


\indent\emph{Strategy of the proof of Corollary~\ref{coro:estimation_norms}.} The result follows directly from the explicit construction given in \Cref{multiclass_theorem}. Since the parameters $\mathcal{W}^L$ and $\mathcal{B}^L$ are specified layer by layer, the norm bounds are obtained by computing their contributions at each step and applying standard estimates.

\begin{remark}
    As mentioned in Remark~\ref{remark_theorem1}, Theorem~\ref{multiclass_theorem} ensures exact classification for any $w_{\max} \geq 2$ by zero-padding the parameters. Consequently, Corollary~\ref{coro:estimation_norms} also holds for all $w_{\max} \geq 2$.
\end{remark}

\begin{remark}[Other activation functions]\label{remark:other_af}
Our techniques can also be applied to other activation functions $\sigma$ that satisfy the following conditions:
\smallbreak
\begin{itemize}
    \item  $\sigma$ is monotonically non-decreasing on $\R_+$, being strictly monotonic in a subinterval $T$ of $\R_+$. This permits scaling distances between different points.

\item There exists and open subset $S$ of $\R_-$ in which $\sigma$ vanishes. This allows the collapse of different points to merge them according to their labels.

\end{itemize}
The essential features of the activation function employed are described below in \Cref{summary_proof} (see Figure \ref{fig:general-activation}).
\begin{figure}[H]
\centering
\includegraphics[height=3.2cm]{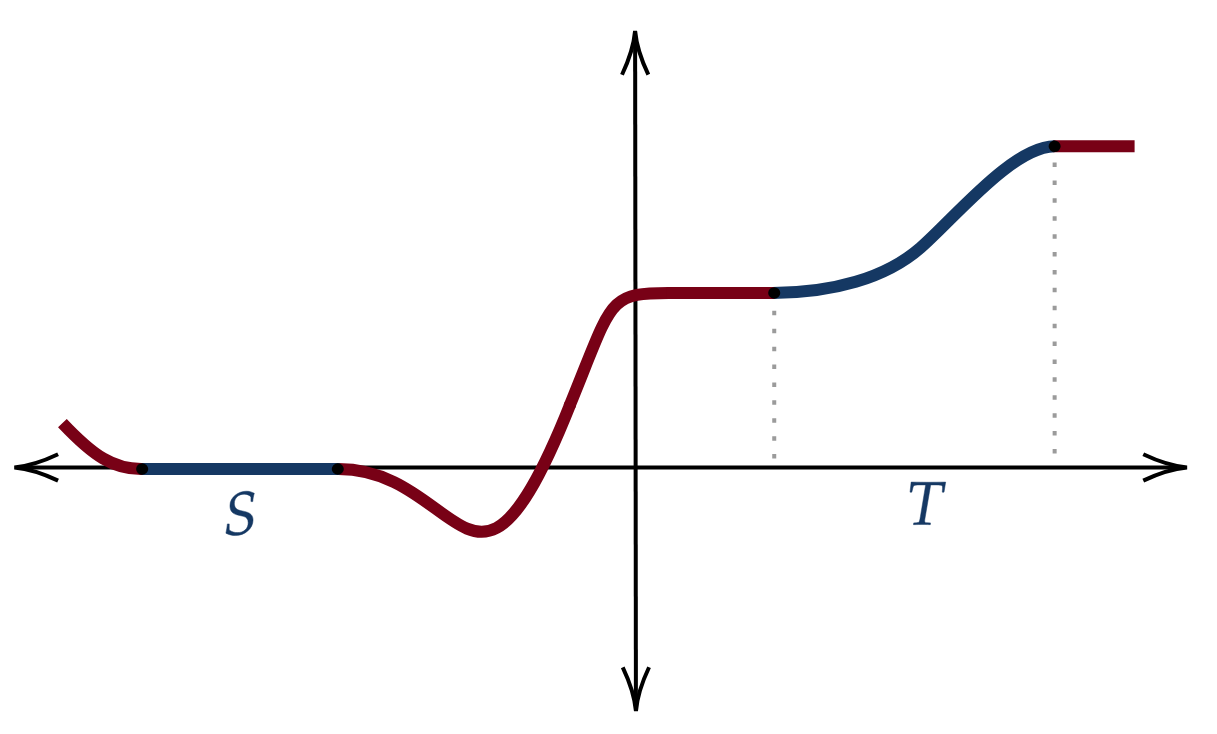}
\caption{
Activation functions for which the results of this paper can be generalized.}
\label{fig:general-activation}
\end{figure}

Note, however, that despite this generalization being possible, its practical interest is limited. The weights and biases will be more difficult to construct and will generally have larger norms compared to those provided by Corollary \ref{coro:estimation_norms}, which uses the ReLU activation function to ensure minimal complexity.

\end{remark}

As a consequence of Theorem~\ref{multiclass_theorem}, we can address the case of $m$-dimensional labels for $m \geq 1$, that is, when $\{y_i\}_{i=1}^N$ is a subset of $M$ distinct points in $\mathbb{R}^m$. This leads to the following corollary.

\begin{corollary}[Finite Sample Memorization for $m-$dimensional labels]\label{corollary_multi_classification}
Let the integers $d,\,N,\,M,\, m\geq 1$ and a dataset $\lbrace x_i,y_i\rbrace_{i=1}^N$ so that $\lbrace x_i\rbrace_{i=1}^N\subset \R^{d}$, $x_i\neq x_j$ if $i\neq j$, and $\lbrace y_i\rbrace_{i=1}^N\subset \lbrace\ell_k\rbrace_{k=0}^{M-1}$ with $ \ell_k \in \R^{m}_{+}$. 

Then, there exist parameters $\mathcal{W}^L$ and $\mathcal{B}^L$ with $L=2N+4M-1$ and $w_{max}=2m$, such that the input-output map of \eqref{discrete_dynamics} satisfies
\begin{align*}
\phi(\mathcal{W}^L,\mathcal{B}^L,x_i)=y_i,\qquad\text{for every }i\in
\{1,\dots,N\}.
\end{align*}
\end{corollary}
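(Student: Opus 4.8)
The plan is to reduce Corollary~\ref{corollary_multi_classification} to the scalar case of Theorem~\ref{multiclass_theorem} by arguing coordinatewise in the output. The key observation is that the first two stages of the proof of Theorem~\ref{multiclass_theorem} — the preconditioning of the data, which drives $\{x_i\}_{i=1}^N\subset\R^d$ to $N$ distinct points on the real line, and the compression process, which merges these into $M$ distinct representatives $p_0,\dots,p_{M-1}\in\R$, one per class — depend only on which class $k(i)\in\{0,\dots,M-1\}$ is assigned to $x_i$, and neither on the numerical labels nor on the output dimension $m$. I would keep this part untouched: it is a width-$\le 2$ block of the architecture~\eqref{discrete_dynamics} producing, for each $i$, the scalar $z_i=p_{k(i)}$.

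For the last two stages — data sorting and mapping to labels — I would run $m$ independent copies, indexed by the output coordinate $r\in\{1,\dots,m\}$. The $r$-th copy is precisely the sorting-and-mapping construction of Theorem~\ref{multiclass_theorem}, applied to the distinct one-dimensional points $\{p_k\}_{k=0}^{M-1}$ with prescribed targets $\ell_0^{(r)},\dots,\ell_{M-1}^{(r)}\in\R_+$, so that it has width $\le 2$ and sends $z_i\mapsto \ell_{k(i)}^{(r)}=y_i^{(r)}$. It is essential here — and harmless — that the scalars $\ell_k^{(r)}$ need not be pairwise distinct: the sorting step orders the $p_k$, which \emph{are} distinct, so the construction never uses distinctness of the targets, only of the sources. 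Arranging these $m$ copies with block-diagonal weight matrices and biases — a structure preserved by the coordinatewise ReLU — and composing them after the shared block above (whose scalar output $z_i$ is fed to all $m$ branches through the first weight matrix of the stacked block), yields a single network of the form~\eqref{discrete_dynamics} of width $w_{max}=2m$ whose output at $x_i$ is $(y_i^{(1)},\dots,y_i^{(m)})=y_i$.

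It remains to check the depth. The shared preconditioning-and-compression block and each parallel sorting-and-mapping branch are used with exactly the numbers of layers they occupy in the scalar proof (the case $m=1$), so the total is again $L=2N+4M-1$; no padding is needed, because the $M$ representatives $p_k$ are genuinely distinct, so every branch runs at the full worst-case depth regardless of coincidences among the $\ell_k^{(r)}$ — and were some branch shorter, appending identity layers would equalize the lengths, which is legitimate since ReLU outputs are nonnegative and the $\ell_k$ lie in $\R^m_+$. The one place where the argument is not a purely formal parallel composition is the claim, used above, that the sorting-and-mapping stage of Theorem~\ref{multiclass_theorem} realizes an arbitrary assignment $p_k\mapsto v_k\in\R_+$ with width $\le 2$ within its layer budget even when the $v_k$ coincide; this follows by inspecting the construction described in Section~\ref{summary_proof}, and is the main (minor) obstacle, together with the bookkeeping that the shared block plus one branch sum to exactly $2N+4M-1$ layers.
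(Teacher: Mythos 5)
Your overall plan — reduce to the scalar case and stack width-$2$ blocks block-diagonally to get width $2m$ and depth $2N+4M-1$ — is the paper's route: the paper simply runs $m$ fully independent copies of the network of Theorem~\ref{multiclass_theorem}, one per output coordinate, in parallel. Your refinement of sharing the preconditioning-and-compression block and parallelizing only the sorting and label-mapping stages is where the trouble lies. It hinges on the claim that the sorting-and-mapping stage of Theorem~\ref{multiclass_theorem} realizes an arbitrary assignment $p_k\mapsto v_k\in\R_+$ of the $M$ distinct representatives even when the targets $v_k$ coincide, and you justify this only "by inspection". Inspection of the construction in Section~\ref{formal_proof} shows the opposite: in the "mapping to the respective labels" stage, the initial projection layer sends the two smallest sorted representatives to their targets by a scalar affine map whose slope is proportional to the difference of those two targets (this is how Remark~\ref{remark_theorem1} generalizes the targets $0,1,\dots,M-1$ to arbitrary \emph{distinct} values). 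If two labels share the same $r$-th component — the generic situation for vector labels — that slope is zero and the layer collapses \emph{all} points to one value, destroying the classes that still have to be separated; no single scalar ReLU layer can send two distinct points to the same value without doing so. More generally, the inductive invariants of that stage (mapped points parked at $(0,v_k)$, unmapped points kept strictly beyond the last target, all images nonnegative) are established only for strictly increasing targets, so "the construction never uses distinctness of the targets" is not correct, and this is a genuine gap in your argument. Your fallback of appending identity layers only equalizes depths; it does not address the tie problem.

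The gap is repairable, and the cheapest repair is exactly the paper's proof: for each coordinate $r$ apply Theorem~\ref{multiclass_theorem} as a black box to the scalar dataset $\{(x_i,y_i^{(r)})\}_{i=1}^N$, whose distinct labels number $M_r\le M$, obtaining a width-$2$ network of depth $2N+4M_r-1$; then stack the $m$ copies block-diagonally, padding the shorter ones with identity layers (harmless since all intermediate states are nonnegative), for total width $2m$ and depth $2N+4M-1$. If you insist on the shared front block, you must add an argument: in branch $r$, first merge the representatives whose $r$-th components coincide using the compression mechanism (two layers per merge), then sort and map the $M_r$ remaining points with distinct targets; a count gives $2(M-M_r)+(2M_r+1)+(2M_r-3)=2M+2M_r-2\le 4M-2$, so it fits the branch budget, but this bookkeeping is precisely what is missing from your proposal rather than a consequence of the scalar construction as written.
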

\indent\emph{Strategy of proof of Corollary \ref{corollary_multi_classification}.}
Corollary~\ref{corollary_multi_classification} follows directly from Theorem~\ref{multiclass_theorem} by considering $m$ independent neural networks, each constructed as in Theorem~\ref{multiclass_theorem} to handle one component of the $m$-dimensional labels, and combining them in parallel to obtain a single network with vector-valued outputs (see Figure \ref{fig:figure_net_d_y}).
\medbreak
\begin{figure}[H]
\centering
\includegraphics[height=5.7cm]{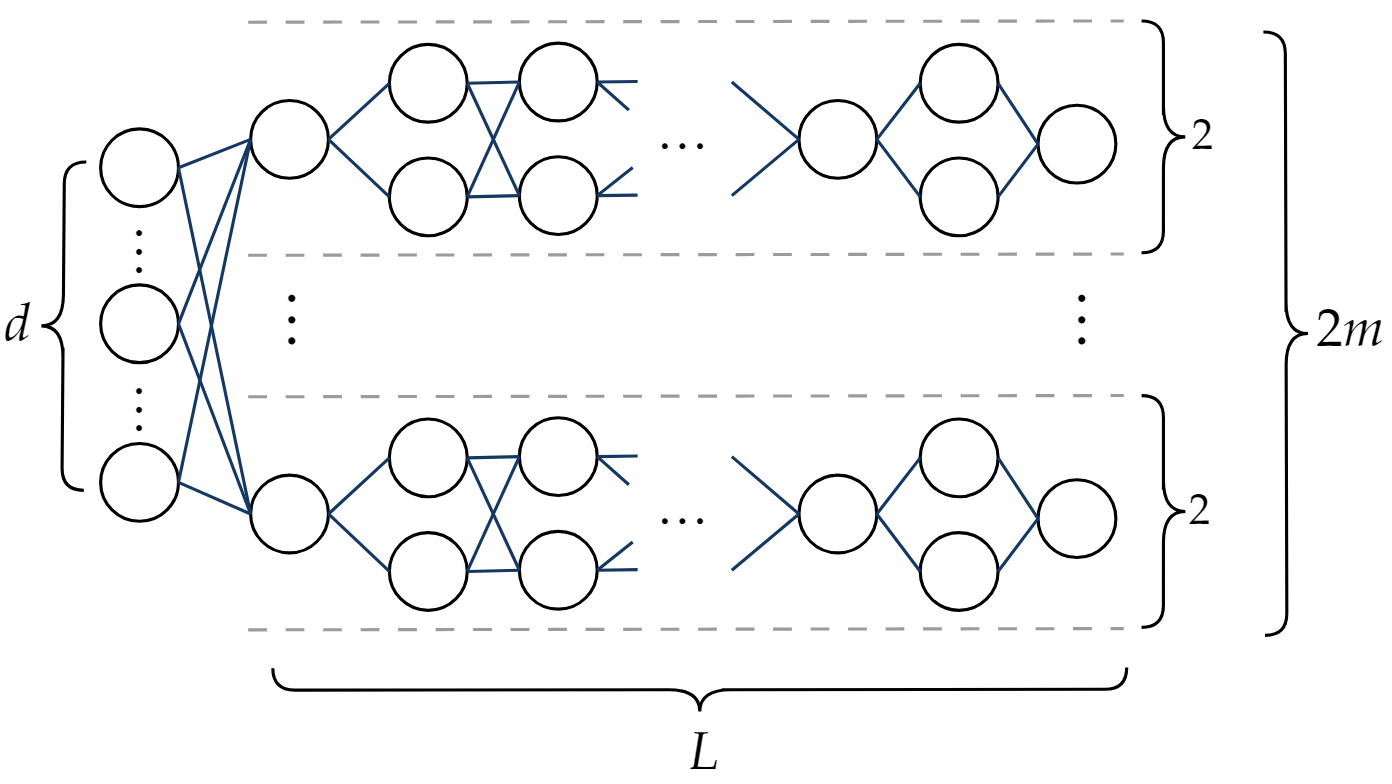}
\caption{Deep neural network of width $w_{max}=2m$ as  in Corollary \ref{corollary_multi_classification}.}
\label{fig:figure_net_d_y}
\end{figure}

\medbreak

In the previous results, all the labels were assumed to be non-negative. However, it is reasonable to consider the case where the signs on the labels may vary. Let $\Imag(\phi^{L})$ denote the image of the input-output map defined by \eqref{discrete_dynamics}. Since $\Imag(\phi^L)\subset\R_+$,  data cannot be mapped to negative labels by the architecture defined in \eqref{discrete_dynamics}. However, we can consider the following MLP 
\begin{align}\label{discrete_dynamics_2}
\begin{cases}
x^{j}_i=A_j\vecsigma_{j}(W_j x^{j-1}_i+b_j),\quad &\text{for } j\in \{1,\dots,L\},\\
x^0_i=x_i,
\end{cases}
\end{align}
where the extra parameters $A_j\in \R^{d_j\times d_j}$ for $ j\in \{1,\dots,L\}$ allow the neural network, in particular, to map data to negative values. We denote by $\mathcal{A}^L=\{A_j\}_{j=1}^L$ this new sequence of parameters.
\smallbreak

The following corollary states that the neural network architecture \eqref{discrete_dynamics_2} satisfies the finite sample memorization for labels in $\R$. 

\begin{corollary}[Finite Sample Memorization for real labels]\label{corollary_clasification_R}
Consider the integers $d,\,N,\,M\geq 1$ and a dataset $\lbrace x_i,y_i\rbrace_{i=1}^N\subset \R^{d}\times\{\alpha_0,\dots,\alpha_{M-1}\}$ with $\{\alpha_k\}_{k=0}^{M-1}\subset\R$. Assume that $x_i\neq x_j$ if $i\neq j$. Then, for $L=2N+4M$ and $w_{max}=2$, there exist parameters $\mathcal{A}^L$, $\mathcal{W}^L$ and $\mathcal{B}^L$ such that the input-output map of \eqref{discrete_dynamics_2} satisfies
\begin{align}\label{finite_sample_equality_2}
\phi(\mathcal{A}^L,\mathcal{W}^L,\mathcal{B}^L,x_i)=y_i,\qquad\text{for every }i\in\{1,\dots,N\}.
\end{align}
\end{corollary}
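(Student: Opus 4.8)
The plan is to reduce Corollary~\ref{corollary_clasification_R} to Theorem~\ref{multiclass_theorem} by a single global shift of the labels, and then to undo that shift with one extra layer, exploiting the post-activation matrices $\mathcal A^L$ — which, unlike the $W_j$, sit \emph{after} the ReLU and may therefore carry negative entries. Since every output of \eqref{discrete_dynamics} lies in $\R_+$ (indeed $\Imag(\phi^L)\subset\R_+$), the matrices $A_j$ are genuinely indispensable here. Put $C:=1+\max_{0\le k\le M-1}|\alpha_k|>0$ and set $\beta_k:=\alpha_k+C$, so that $\beta_0,\dots,\beta_{M-1}$ are $M$ distinct \emph{positive} reals ($\beta_k\ge 1$). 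By Theorem~\ref{multiclass_theorem}, together with the relabelling freedom recorded in Remark~\ref{remark_theorem1}, there exist weights $\mathcal W^{L'},\mathcal B^{L'}$ of width $2$ and depth $L':=2N+4M-1$ for which the network \eqref{discrete_dynamics} sends $x_i\mapsto\beta_{k_i}$, where $y_i=\alpha_{k_i}$. Taking $A_j=I_{d_j}$ for $j=1,\dots,L'$ exhibits this network as the first $L'$ layers of a network of the form \eqref{discrete_dynamics_2}, with unchanged dynamics.

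It then remains to append one layer, $j=L=L'+1$, realizing the affine map $\beta\mapsto\beta-C$ on the scalar output of the previous stage; note that $\alpha_{k_i}=\beta_{k_i}-C$ may well be negative, which is precisely the obstruction the $A_j$ are meant to overcome. Concretely, take this layer of width $2$: choose $W_L=(1,0)^\top\in\R^{2\times1}$ and $b_L=(0,1)^\top$, so that the pre-activation vector equals $(\beta_{k_i},\,1)^\top$, whose entries are both strictly positive and hence fixed by $\vecsigma_L$; then set the post-activation matrix
\begin{align*}
A_L=\begin{pmatrix} 1 & -C \\ 0 & 1 \end{pmatrix},
\end{align*}
which gives $x^L_i=(\beta_{k_i}-C,\,1)^\top=(\alpha_{k_i},\,1)^\top$. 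Reading the output off the first coordinate yields $\phi(\mathcal A^L,\mathcal W^L,\mathcal B^L,x_i)=\alpha_{k_i}=y_i$, establishing \eqref{finite_sample_equality_2}. The depth is $L'+1=2N+4M$ and every layer has width $\le2$, so $w_{max}=2$, as claimed; the remaining checks — that padding with $A_j=I$ preserves the dynamics of Theorem~\ref{multiclass_theorem}, and that the width count is correct — are immediate.

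The one delicate point, and the only place the argument could break, is that the last activation $\vecsigma_L$ stands between the network state and $A_L$: if its argument were allowed to be negative in some coordinate, that coordinate would be collapsed to $0$ before $A_L$ could act, and the sign information would be lost irreversibly. This is exactly why we pass to the strictly positive surrogates $\beta_k\ge1$ and introduce the auxiliary constant neuron (the second coordinate, permanently equal to $1>0$): together they promote the merely linear action of $A_L$ to the genuine affine correction $z\mapsto z-C$, allowing the output to land anywhere in $\R$. I expect the rest to be routine bookkeeping, with no serious technical content beyond the arithmetic of the depth $2N+4M=(2N+4M-1)+1$.
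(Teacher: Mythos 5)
Your proposal is correct and follows essentially the same route as the paper: shift the labels by a constant to obtain positive surrogates, apply Theorem~\ref{multiclass_theorem} with $A_j=\mathrm{Id}$ for the first $2N+4M-1$ layers, and append one extra width-$2$ layer whose post-activation matrix realizes the (possibly negative) affine correction, giving depth $2N+4M$. The only cosmetic difference is in that last layer: the paper takes the post-activation matrix to be the row $(-1,1)$ acting on $\bigl(\sigma(-z+y_0),\sigma(z-y_0)\bigr)^{\top}$ so the output is the scalar $y_i$, whereas your square $A_L$ together with the constant neuron leaves the output as the vector $(y_i,1)^{\top}$; replacing $A_L$ by the row $(1,\,-C)$ yields the scalar output literally required by \eqref{finite_sample_equality_2}.
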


\indent\emph{Strategy of Proof for Corollary \ref{corollary_clasification_R}.} The proof hinges on two key observations: first, that the architecture \eqref{discrete_dynamics_2} can drive data to negative labels; and second, that this architecture coincides with \eqref{discrete_dynamics} when \(A_j = \text{Id}_{d_j}\) (the identity matrix in $\mathbb{R}^{d_j \times d_j}$). Initially, we define a set of auxiliary positive labels. Then, taking \(A_j = \text{Id}_{d_j}\), we apply Theorem \ref{multiclass_theorem} to map the data points to these auxiliary labels by using $2N+4M-1$ layers. Finally, by using a particular matrix \(A\), we construct a $2$-wide, one-layer neural network that maps the auxiliary labels to the original ones. The proof concludes by composing these two neural networks, obtaining a 2-wide neural network with $2N+4M$ layers. 

\begin{remark}
Following the same approach used in the proof of Corollary~\ref{corollary_multi_classification}, we can extend Corollary~\ref{corollary_clasification_R} to establish the finite sample memorization property for labels in $\mathbb{R}^m$. In this case, the construction yields a neural network of width $2m$ and depth $2N + 4M$.
\end{remark}

\subsubsection{Implications for Neural Network Training}

The training of neural networks is typically formulated as the minimization of a regularized empirical risk functional. Given a dataset $\{x_i,y_i\}_{i=1}^N\subset\R^d\times \R^m$, the standard training objective reads
\begin{align}\label{eq:training_problem}
    \min_{(\mathcal{W}^L, \mathcal{B}^L)}\left\{\mathcal{J}_{\lambda}(\mathcal{W}^L,\mathcal{B}^L)= \lambda\tnorm{(\mathcal{W}^L,\mathcal{B}^L)}^2_2+\frac{1}{N}\sum_{i=1}^N\loss\left(\phi(\mathcal{W}^L,\mathcal{B}^L,x_i), y_i\right)\right\},
\end{align}
where the first term penalizes large weights with a parameter $\lambda>0$, and the second term measures the prediction error on the training dataset through the input-output map $\phi(\mathcal{W}^L,\mathcal{B}^L,\cdot)$ defined in \eqref{discrete_dynamics}. Here, $\loss(\cdot, \cdot)$ is a given continuous and nonnegative function such that $\loss(x,x)=0$. For instance, $\loss(x, y) := \|x-y\|^2_{p}$ with $p\in\{1, 2\}$ is commonly used for regression tasks, while $\loss(x,y)= \log(1+e^{x}) - yx$, with $y\in\{0,1\}$ and $x\in \R$, is used for binary classification (binary logistic loss). The regularization parameter $\lambda$ controls the trade-off between data fidelity and model complexity. 

Note that for every $\lambda > 0$, the functional $\mathcal{J}_\lambda$ is coercive. Moreover, since both $\loss$ and the activation functions $\vecsigma_j$ are continuous, the Bolzano–Weierstrass theorem ensures the existence of optimal parameters for every $\lambda > 0$.

Recall that Theorem \ref{multiclass_theorem} establishes the existence of $(\mathcal{W}_*^L, \mathcal{B}_*^L)$ such that exact interpolation of arbitrary finite datasets is possible. In particular, this guarantees that
\begin{align}\label{eq:functional_zero}
   \mathcal{J}_{0}(\mathcal{W}_*^L, \mathcal{B}_*^L) = 0,
\end{align}
and since $\mathcal{J}_{0} \geq 0$, we deduce that $(\mathcal{W}_*^L, \mathcal{B}_*^L)$ is a minimizer of $ \mathcal{J}_{0}$. Moreover, we have the following result:

\begin{theorem}\label{th:estima_optimal _control_Jlambda}
Let $d,\,N,\,m \in \mathbb{N}$ with $d,\,N,\,m > 1$, and let $\{(x_i,y_i)\}_{i=1}^N \subset \mathbb{R}^d \times \mathbb{R}^m$. Let $(\mathcal{W}_\lambda^L, \mathcal{B}_\lambda^L)$ be a minimizer of $\mathcal{J}_\lambda$, and let $(\mathcal{W}_*^L,\mathcal{B}_*^L)$ be a minimizer of $\mathcal{J}_0$ for the ReLU activation function. Then, we have
\begin{align}\label{eq:estimation_functional_1}
    \mathcal{J}_{\lambda}(\mathcal{W}^L_\lambda,\mathcal{B}^L_\lambda) \leq \lambda \tnorm{(\mathcal{W}_*^L,\mathcal{B}_*^L)}_2^2, \quad \text{for all } \lambda > 0.
\end{align}
In particular, we have
\begin{align}\label{eq:limit_lambda_to_zero}
    \frac{1}{N} \sum_{i=1}^N \loss\left(\phi(\mathcal{W}^L_\lambda,\mathcal{B}^L_\lambda,x_i), y_i\right) \to 0 \quad \text{as } \lambda \to 0,
\end{align}
and
\begin{align}\label{eq:uniform_estimatin_control_lambda}
    \tnorm{(\mathcal{W}_\lambda^L,\mathcal{B}_\lambda^L)}_2 \leq \tnorm{(\mathcal{W}_*^L,\mathcal{B}_*^L)}_2, \quad \text{for all } \lambda > 0.
\end{align}
Moreover, the family of parameters $\{(\mathcal{W}_\lambda^L,\mathcal{B}_\lambda^L)\}_{\lambda>0}$ admits a subsequence that converges to a minimal-norm minimizer of $\mathcal{J}_0$.
\end{theorem}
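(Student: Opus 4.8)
\textbf{Proof proposal for Theorem~\ref{th:estima_optimal _control_Jlambda}.}
The plan is to exploit the fact, recorded in \eqref{eq:functional_zero}, that the explicit interpolant from Theorem~\ref{multiclass_theorem} is a feasible competitor in the minimization problem \eqref{eq:training_problem}. First I would observe that, since $(\mathcal{W}_\lambda^L,\mathcal{B}_\lambda^L)$ minimizes $\mathcal{J}_\lambda$, we have in particular
\[
\mathcal{J}_\lambda(\mathcal{W}_\lambda^L,\mathcal{B}_\lambda^L)\le \mathcal{J}_\lambda(\mathcal{W}_*^L,\mathcal{B}_*^L)
=\lambda\tnorm{(\mathcal{W}_*^L,\mathcal{B}_*^L)}_2^2+\frac1N\sum_{i=1}^N\loss\!\left(\phi(\mathcal{W}_*^L,\mathcal{B}_*^L,x_i),y_i\right),
\]
and the loss sum on the right vanishes by \eqref{finite_sample_equality} together with the hypothesis $\loss(x,x)=0$. (One small caveat: Theorem~\ref{multiclass_theorem} is stated for $m=1$; for general $m$ I would invoke Corollary~\ref{corollary_multi_classification}, noting that its parallel construction still yields an exact interpolant with a norm that can play the role of the right-hand side — so the statement should really reference that corollary, or one restricts to the componentwise bound.) This establishes \eqref{eq:estimation_functional_1}.

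Next, since every term of $\mathcal{J}_\lambda$ is nonnegative, \eqref{eq:estimation_functional_1} immediately gives the two one-sided bounds
\[
\frac1N\sum_{i=1}^N\loss\!\left(\phi(\mathcal{W}_\lambda^L,\mathcal{B}_\lambda^L,x_i),y_i\right)\le \lambda\tnorm{(\mathcal{W}_*^L,\mathcal{B}_*^L)}_2^2
\qquad\text{and}\qquad
\lambda\tnorm{(\mathcal{W}_\lambda^L,\mathcal{B}_\lambda^L)}_2^2\le \lambda\tnorm{(\mathcal{W}_*^L,\mathcal{B}_*^L)}_2^2 .
\]
Dividing the second by $\lambda>0$ yields \eqref{eq:uniform_estimatin_control_lambda}, and letting $\lambda\to0$ in the first yields \eqref{eq:limit_lambda_to_zero}. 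Note the right-hand side $\tnorm{(\mathcal{W}_*^L,\mathcal{B}_*^L)}_2^2$ is a fixed finite number independent of $\lambda$ — this is exactly where the \emph{constructive} nature of Theorem~\ref{multiclass_theorem} is used, and where Corollary~\ref{coro:estimation_norms} can be plugged in to make the bound fully explicit in $N,M,R_x,R_y$.

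For the final compactness assertion I would argue as follows. By \eqref{eq:uniform_estimatin_control_lambda} the family $\{(\mathcal{W}_\lambda^L,\mathcal{B}_\lambda^L)\}_{\lambda>0}$ is bounded in the finite-dimensional parameter space, so along some sequence $\lambda_n\to0$ it converges to a limit $(\overline{\mathcal{W}}^L,\overline{\mathcal{B}}^L)$. Continuity of $\phi$ in its parameters (a composition of the continuous maps $\vecsigma_j$ and affine maps) and continuity of $\loss$ give $\frac1N\sum_i\loss(\phi(\overline{\mathcal{W}}^L,\overline{\mathcal{B}}^L,x_i),y_i)=\lim_n\frac1N\sum_i\loss(\phi(\mathcal{W}_{\lambda_n}^L,\mathcal{B}_{\lambda_n}^L,x_i),y_i)=0$ by \eqref{eq:limit_lambda_to_zero}, so the limit is a minimizer of $\mathcal{J}_0$. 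To see it has minimal norm among such minimizers, let $(\mathcal{W}^L,\mathcal{B}^L)$ be any minimizer of $\mathcal{J}_0$ (hence a zero-loss interpolant); then $\mathcal{J}_{\lambda_n}(\mathcal{W}_{\lambda_n}^L,\mathcal{B}_{\lambda_n}^L)\le \mathcal{J}_{\lambda_n}(\mathcal{W}^L,\mathcal{B}^L)=\lambda_n\tnorm{(\mathcal{W}^L,\mathcal{B}^L)}_2^2$, whence $\tnorm{(\mathcal{W}_{\lambda_n}^L,\mathcal{B}_{\lambda_n}^L)}_2^2\le\tnorm{(\mathcal{W}^L,\mathcal{B}^L)}_2^2$; passing to the limit and using lower semicontinuity of the norm gives $\tnorm{(\overline{\mathcal{W}}^L,\overline{\mathcal{B}}^L)}_2\le\tnorm{(\mathcal{W}^L,\mathcal{B}^L)}_2$. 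The only genuine subtlety — the ``hard part'' — is the passage to the limit in the compactness step: it relies on $\mathcal{J}_0$ actually attaining its infimum $0$ and on the joint continuity of $\phi$ in the parameters, both of which hold here but should be stated carefully; everything else is a one-line consequence of optimality plus nonnegativity.
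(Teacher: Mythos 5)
Your proposal is correct and follows essentially the same route as the paper: an optimality comparison against a zero-loss minimizer of $\mathcal{J}_0$, splitting the nonnegative terms of $\mathcal{J}_\lambda$ to get \eqref{eq:limit_lambda_to_zero} and \eqref{eq:uniform_estimatin_control_lambda}, then Bolzano--Weierstrass plus continuity of $\phi$ and $\loss$ for the limit, and the minimal-norm property by applying the norm bound with an arbitrary minimizer of $\mathcal{J}_0$ as competitor (the paper does this with a chosen minimal-norm minimizer, which is the same idea). Your remark that for $m>1$ one should invoke Corollary~\ref{corollary_multi_classification} rather than Theorem~\ref{multiclass_theorem} to justify \eqref{eq:functional_zero} is a fair point of precision, not a deviation in method.
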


\begin{remark}
Several remarks are in order:
\begin{itemize} 
\item  The proof of Theorem \ref{th:estima_optimal _control_Jlambda} can be found in \Cref{sec:proo_training}. It relies on the fact that a minimizer of $\mathcal{J}_0$ is known explicitly, which is possible due to the interpolation result from Theorem~\ref{multiclass_theorem}. However, Theorem \ref{th:estima_optimal _control_Jlambda} is not tight to the 2-width MLP used in Theorem~\ref{multiclass_theorem}; it can be applied to any architecture (or discrete dynamical system) whose input-output map is continuous with respect to the parameters. 

    \item This result shows that, as the regularization parameter $\lambda$ tends to zero, the trained network tends to interpolate the dataset while remaining uniformly bounded in parameter norm. In particular, it guarantees that any accumulation point of the sequence of minimizers corresponds to a  minimum-norm interpolant. Indeed, this holds given that the arguments of Theorem \ref{th:estima_optimal _control_Jlambda} apply to any minimizer of  $\mathcal{J}_0$, and in particular to those of minimal norm.This has practical implications in training: it justifies the use of small values of $\lambda$  for training since this pushes the model to fit the training data exactly, with finite values of the parameters, close to the optimal ones. 

    \item In general, we cannot guarantee that $(\mathcal{W}_0^L,\mathcal{B}_0^L) = (\mathcal{W}_*^L,\mathcal{B}_*^L)$. This is a consequence of the lack of convexity of the functionals $\mathcal{J}_\lambda$ for $\lambda \geq 0$.

    \item Observe that we can replace the norm $\tnorm{\cdot}$ in \eqref{eq:training_problem} by $\tnorm{\cdot}_\infty$ and derive \Cref{th:estima_optimal _control_Jlambda} again, but now involving the $\ell^\infty$-norm.

    \item Due to Corollary~\ref{coro:estimation_norms}, when we consider the dataset $\{(x_i,y_i)\}_{i=1}^N \subset B_{R_x}^d(0)\times B_{R_y}^m(0)$, we can provide an explicit estimate for the optimal value of $\mathcal{J}_\lambda$ in terms of $N$, $M$, $R_x$, and $R_y$. Namely, in the case $m = 1$, combining Corollary~\ref{coro:estimation_norms} with Theorem~\ref{th:estima_optimal _control_Jlambda}, we obtain
\begin{align}\label{eq:estimatio_J_lambda_paramters_explicit}
    \mathcal{J}_{\lambda}(\mathcal{W}^L_\lambda,\mathcal{B}^L_\lambda)\leq \lambda C(1 + R_x^2N + R_x^2 N^2 M + R_y^2 M^2).
\end{align}
However, since the parameters provided in Theorem~\ref{multiclass_theorem} are constructed to handle worst-case scenarios, the bound in \eqref{eq:estimatio_J_lambda_paramters_explicit} may be conservative. In practical situations, significantly tighter estimates may be achieved.
\end{itemize}
\end{remark}

\begin{remark}[Practical application of Theorem \ref{th:estima_optimal _control_Jlambda}]
  These results, and in particular  \eqref{eq:estimatio_J_lambda_paramters_explicit}, facilitate defining  stopping criteria for the training process. Indeed, the optimal value of $\mathcal{J}_{\lambda}$ is always bounded above by $\lambda\tnorm{(\mathcal{W}^L_*,\mathcal{B}^L_*)}_2^2$. Therefore, for any fixed $\lambda>0$ -- even for small values of $\lambda$ -- the optimizer can be designed to terminate once the value of $\mathcal{J}_{\lambda}$ falls close to this explicit threshold. \end{remark}

In practice, neural networks are frequently trained using activation functions that do not satisfy the properties  in Remark~\ref{remark:other_af}. Then, constructing or estimating the  parameters assuring classification becomes challenging.

This motivates the following question: Can the knowledge of parameter norms that ensure exact classification for a given activation function, for instance the ReLU,  still provide useful information to the regularized functional for a perturbed activation function? 

To analyze this point, we consider the following neural network 
\begin{align}\label{eq:discrete_dynamics_different_activation}
\begin{cases}
\hat{x}^{j}_i=\hat{\vecsigma}_{j}( W_j \hat{x}^{j-1}_i+ b_j),\quad &\text{for } j\in \{1,\dots,L\},\\
\hat{x}^0_i=x_i,
\end{cases}
\end{align}
where each $\hat{\vecsigma}_{j}:\R^{ d_j}\to \R^{ d_j}$ is a continuous activation function, not necessarily the ReLU. Denote by $\hat{\phi}({\mathcal{W}}^L,{\mathcal{B}}^L,\cdot)$ the input-output map associated with \eqref{eq:discrete_dynamics_different_activation}. We then train this new model:
\begin{align}\label{eq:training_problem2}
    \min_{({\mathcal{W}}^L,{\mathcal{B}}^L)}\left\{\hat{\mathcal{J}}_{\lambda}({\mathcal{W}}^L,{\mathcal{B}}^L)= \lambda\tnorm{({\mathcal{W}}^L,{\mathcal{B}}^L)}^2_2+\frac{1}{N}\sum_{i=1}^N\loss\left(\hat\phi({\mathcal{W}}^L,{\mathcal{B}}^L,x_i), y_i\right)\right\},
\end{align}
where $\loss(\cdot, \cdot)$ is a continuous loss function, as before. Thanks to the continuity of $\loss$ and $\hat\vecsigma_j$, and the coercivity of $\hat{\mathcal{J}}_\lambda$, we can guarantee the existence of a minimizer $(\hat{\mathcal{W}}^L_\lambda,\hat{\mathcal{B}}^L_\lambda)$ for every $\lambda>0$. Moreover, the following result holds.

\begin{theorem}\label{th:estimation_functional_dif_activation}
Let $d,\,N,\,m \in \mathbb{N}$ with $d,\,N,\,m > 1$, and let $\{(x_i,y_i)\}_{i=1}^N \subset \mathbb{R}^d \times \mathbb{R}^m$. Let $(\hat{\mathcal{W}}_\lambda^L, \hat{\mathcal{B}}_\lambda^L)$ be a minimizer of $\hat{\mathcal{J}}_\lambda$, and let $(\mathcal{W}_*^L,\mathcal{B}_*^L)$ be a minimizer of $\mathcal{J}_0$ for the ReLU. Define $R_0 := \max_{i \in \{1,\dots,N\}} \|x_i\|$, and assume that
\begin{align}\label{eq:difinition_radios_error}
    \nu_j := \sup_{z \in B^{d_j}_{R_j}(0)} \|\hat{\vecsigma}_j(z) - \vecsigma_j(z)\| < \infty, \quad \text{with} \quad R_j := \|W_j\|_2 R_{j-1} + \|b_j\|_2, \quad \text{for } j \in \{1,\dots,L\},
\end{align}
$R_j$ being defined through the optimal parameters of the ReLU model whose values we estimated.
Set $\nu := (\nu_1, \dots, \nu_L)$. 

Then, for every $\lambda > 0$, we have
\begin{align*}
    \hat{\mathcal{J}}_{\lambda}(\hat{\mathcal{W}}^L_\lambda, \hat{\mathcal{B}}^L_\lambda) \leq \lambda \tnorm{(\mathcal{W}_*^L, \mathcal{B}_*^L)}_2^2 + \mathscr{A}_{\loss}(\nu, \mathcal{W}_*^L, \mathcal{B}_*^L),
\end{align*}
where $\mathscr{A}_{\loss}$ is a nonnegative function depending on the loss function $\loss$, and satisfies $\mathscr{A}_{\loss}(\nu, \cdot, \cdot) \to 0$ as $\|\nu\|_2 \to 0$.
\end{theorem}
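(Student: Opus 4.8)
The plan is to use the ReLU minimizer $(\mathcal{W}_*^L,\mathcal{B}_*^L)$ of $\mathcal{J}_0$ as a trial point in \eqref{eq:training_problem2}. By Theorem~\ref{multiclass_theorem} this choice interpolates the data exactly, $\phi(\mathcal{W}_*^L,\mathcal{B}_*^L,x_i)=y_i$ for all $i$, and by optimality of the minimizer of $\hat{\mathcal{J}}_\lambda$,
\begin{align*}
  \hat{\mathcal{J}}_{\lambda}(\hat{\mathcal{W}}^L_\lambda, \hat{\mathcal{B}}^L_\lambda)\le \hat{\mathcal{J}}_{\lambda}(\mathcal{W}_*^L,\mathcal{B}_*^L)=\lambda\tnorm{(\mathcal{W}_*^L,\mathcal{B}_*^L)}_2^2+\frac1N\sum_{i=1}^N\loss\bigl(\hat\phi(\mathcal{W}_*^L,\mathcal{B}_*^L,x_i),\,\phi(\mathcal{W}_*^L,\mathcal{B}_*^L,x_i)\bigr).
\end{align*}
Since $\loss$ is continuous with $\loss(x,x)=0$, the sum is small as soon as the perturbed output $\hat\phi(\mathcal{W}_*^L,\mathcal{B}_*^L,x_i)$ is close to the ReLU output $\phi(\mathcal{W}_*^L,\mathcal{B}_*^L,x_i)$, so everything reduces to a layerwise stability estimate for the discrete dynamics under the change of activation.

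For that, run \eqref{discrete_dynamics} and \eqref{eq:discrete_dynamics_different_activation} with the common parameters $(\mathcal{W}_*^L,\mathcal{B}_*^L)=(\{W_j\}_j,\{b_j\}_j)$, and set $e^j_i:=\|\hat x^j_i-x^j_i\|$ with $e^0_i=0$. Splitting $\hat x^j_i-x^j_i=\bigl(\hat\vecsigma_j-\vecsigma_j\bigr)(W_j\hat x^{j-1}_i+b_j)+\bigl(\vecsigma_j(W_j\hat x^{j-1}_i+b_j)-\vecsigma_j(W_jx^{j-1}_i+b_j)\bigr)$ and using that $\vecsigma_j$ is $1$-Lipschitz with $\vecsigma_j(0)=0$ (so that the ReLU trajectory stays in $B^{d_j}_{R_j}(0)$ with $R_j$ as in \eqref{eq:difinition_radios_error}), one obtains $e^j_i\le\nu_j+\|W_j\|_2\,e^{j-1}_i$, hence by iteration
\begin{align*}
  e^L_i\le E(\nu):=\sum_{j=1}^{L}\nu_j\prod_{k=j+1}^{L}\|W_k\|_2,
\end{align*}
a linear function of $\nu$ that vanishes as $\nu\to 0$ and that depends on $(\mathcal{W}_*^L,\mathcal{B}_*^L)$ only through the operator norms of its layer matrices. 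The one point requiring care is that $W_j\hat x^{j-1}_i+b_j$ a priori lies only in a ball of radius $R_j+\|W_j\|_2 e^{j-1}_i$, slightly larger than $B^{d_j}_{R_j}(0)$; this is handled either by enlarging the radius defining $\nu_j$ by the (already controlled) accumulated error, or — in the typical case of globally Lipschitz perturbed activations with bounded deviation from ReLU — by noting that $\nu_j$ is then finite on all of $\R^{d_j}$.

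To pass from the output discrepancy to the loss, note that $\loss$, being continuous, nonnegative and zero on the diagonal, is uniformly continuous on compact sets; with $K:=\max_i\|y_i\|$, define $\omega_{\loss}(\delta):=\sup\{\loss(a,b):\|b\|\le K,\ \|a-b\|\le\delta\}$, so that $\omega_{\loss}(\delta)\to 0$ as $\delta\to 0^{+}$ and $\loss(\hat x^L_i,y_i)\le\omega_{\loss}(e^L_i)\le\omega_{\loss}(E(\nu))$ for every $i$ (the first argument automatically satisfies $\|\hat x^L_i\|\le K+E(\nu)$). Setting
\begin{align*}
  \mathscr{A}_{\loss}(\nu,\mathcal{W}_*^L,\mathcal{B}_*^L):=\omega_{\loss}\bigl(E(\nu)\bigr)
\end{align*}
gives a nonnegative quantity that depends on the data only through $\loss$, on $(\mathcal{W}_*^L,\mathcal{B}_*^L)$ and $\nu$ only through $E(\nu)$, and satisfies $\mathscr{A}_{\loss}(\nu,\cdot,\cdot)\to 0$ as $\|\nu\|_2\to 0$ since $E(\nu)\to 0$; combining this with the first display proves the theorem.

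The main obstacle is the stability bookkeeping of the second step — keeping the perturbed iterates in the region where the activation mismatch is controlled by $\nu_j$, and extracting a clean majorant that is linear in $\nu$ with the right dependence on the starred parameters — after which the definition of $\mathscr{A}_{\loss}$ and its limiting behaviour follow. The remaining ingredients (the optimality comparison, the triangle inequalities, and the uniform continuity of $\loss$) are routine.
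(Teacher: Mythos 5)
Your proposal is correct and follows essentially the same route as the paper: compare $\hat{\mathcal{J}}_\lambda$ at the minimizer with its value at the ReLU interpolant $(\mathcal{W}_*^L,\mathcal{B}_*^L)$, derive the layerwise recursion $e^j_i\leq \nu_j+\|W_j\|_2\,e^{j-1}_i$ and iterate (the paper merely coarsens $\|W_j\|_2$ to $\tnorm{(\mathcal{W}_*^L,\mathcal{B}_*^L)}_2$ and sums the geometric series), then convert the output deviation into a loss bound via continuity of $\loss$ on a compact set with $\loss(y,y)=0$, which is exactly the paper's $\mathscr{A}_{\loss}(\nu,\mathcal{W}_*^L,\mathcal{B}_*^L)=\max_{(z,y)\in\mathcal{K}_\nu}\loss(z+y,y)$ in the guise of your modulus $\omega_{\loss}(E(\nu))$. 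You are in fact more explicit than the paper on the one delicate point, namely that the perturbed pre-activations may exit $B^{d_j}_{R_j}(0)$ by the accumulated error, which the paper dismisses with the remark that the radii are ``large enough''; your suggested remedies (enlarging the radii or assuming a globally bounded activation deviation) are sound.
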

The proof of Theorem \ref{th:estimation_functional_dif_activation} can be found in \Cref{sec:proo_training}.

\begin{remark}
Several observations are worth:
\begin{itemize}
    \item The proof consists on analyzing the deviation between the input-output maps for both activation functions, using arguments similar to those used in Theorem~\ref{th:estima_optimal _control_Jlambda}. The radii $R_j$ introduced in \eqref{eq:difinition_radios_error} are key to estimating the deviation between activations at each layer. 
    \item Observe that the only assumption imposed on $\hat{\vecsigma}_j$ is continuity. No further structural properties are required.

    \item We have assumed that both architectures are identical, i.e.,   data evolve through layers of the same dimensions. This assumption is not strictly necessary. When the dynamics \eqref{eq:discrete_dynamics_different_activation} evolve along a different architecture,   with $\hat d_j \neq d_j$, one can always extend the networks by zero-padding to the maximal dimension $\max\{d_j,\hat d_j\}$ and proceed to a stability analysis. A similar estimate then follows, now involving a modified term $\hat{\mathscr{A}}_{\loss}$, depending on the padded architecture.

    \item In the particular case $\loss(x,y) = \|x - y\|^2_2$, we obtain the explicit estimate
    \begin{align*}
        \mathscr{A}_{\loss}(\nu,\mathcal{W}_*^L,\mathcal{B}_*^L) = \displaystyle\begin{cases}
            \displaystyle2\|\nu\|_2^2\left(\frac{\tnorm{(\mathcal{W}_*^L,\mathcal{B}_*^L)}_2^{2L} - 1}{\tnorm{(\mathcal{W}_*^L,\mathcal{B}_*^L)}_2^2 - 1}\right) &\text{if } \tnorm{(\mathcal{W}_*^L,\mathcal{B}_*^L)}_2^2 \neq 1,\\
            2\|\nu\|_2^2 L &\text{if } \tnorm{(\mathcal{W}_*^L,\mathcal{B}_*^L)}_2^2 = 1.
        \end{cases}
    \end{align*}
\end{itemize}
\end{remark}

The previous result is particularly relevant when, instead of considering a fixed activation function $\hat{\vecsigma}$, for instance the ReLU, we study a family of continuous activation functions $\hat{\vecsigma}_\varepsilon$ parametrized by $\varepsilon > 0$, for instance a regularization of the ReLU. In this case, we can again formulate a training problem, now depending on the parameter $\varepsilon > 0$. Namely, we consider
\begin{align}\label{eq:training_problem3}
    \min_{({\mathcal{W}}^L,{\mathcal{B}}^L)}\left\{ {\mathcal{J}}_{\lambda,\varepsilon}({\mathcal{W}}^L,{\mathcal{B}}^L) = \lambda\tnorm{({\mathcal{W}}^L,{\mathcal{B}}^L)}^2_2 + \frac{1}{N} \sum_{i=1}^N \loss\left(\phi_\varepsilon({\mathcal{W}}^L,{\mathcal{B}}^L,x_i), y_i\right) \right\},
\end{align}
where $\phi_\varepsilon$ denotes the input-output map of the dynamics \eqref{eq:discrete_dynamics_different_activation} when $\hat{\vecsigma}$ is replaced by $\hat{\vecsigma}_\varepsilon$.

By similar arguments, one can ensure the existence of a minimizer of \eqref{eq:training_problem3} for every $\varepsilon,\,\lambda > 0$. We denote by $(\hat{\mathcal{W}}^L_{\lambda,\varepsilon},\hat{\mathcal{B}}^L_{\lambda,\varepsilon})$ the corresponding minimizer. 

A particularly interesting case arises when we define the family $\hat\sigma_\varepsilon:\R \to \R$ by
\begin{align}\label{eq:ineterpol_gelu}
    \hat\sigma_\varepsilon(x) = \frac{x}{2} \left(1 + \erf\left(\frac{x}{\varepsilon\sqrt{2}}\right)\right),
\end{align}
so that $\hat\sigma_1(x) = \GELU(x)$, the so-called Gaussian Error Linear Unit (see \Cref{fig:interpolation_relu_gelu}). Let $\hat\vecsigma_\varepsilon:\R^{d_j} \to \R^{d_j}$ denote the vector-valued extension of $\hat\sigma_\varepsilon$ (as in \eqref{eq:vector_valuated_relu}). Then, we obtain the following corollary.
\begin{figure}
    \centering
    \includegraphics[width=0.5\linewidth]{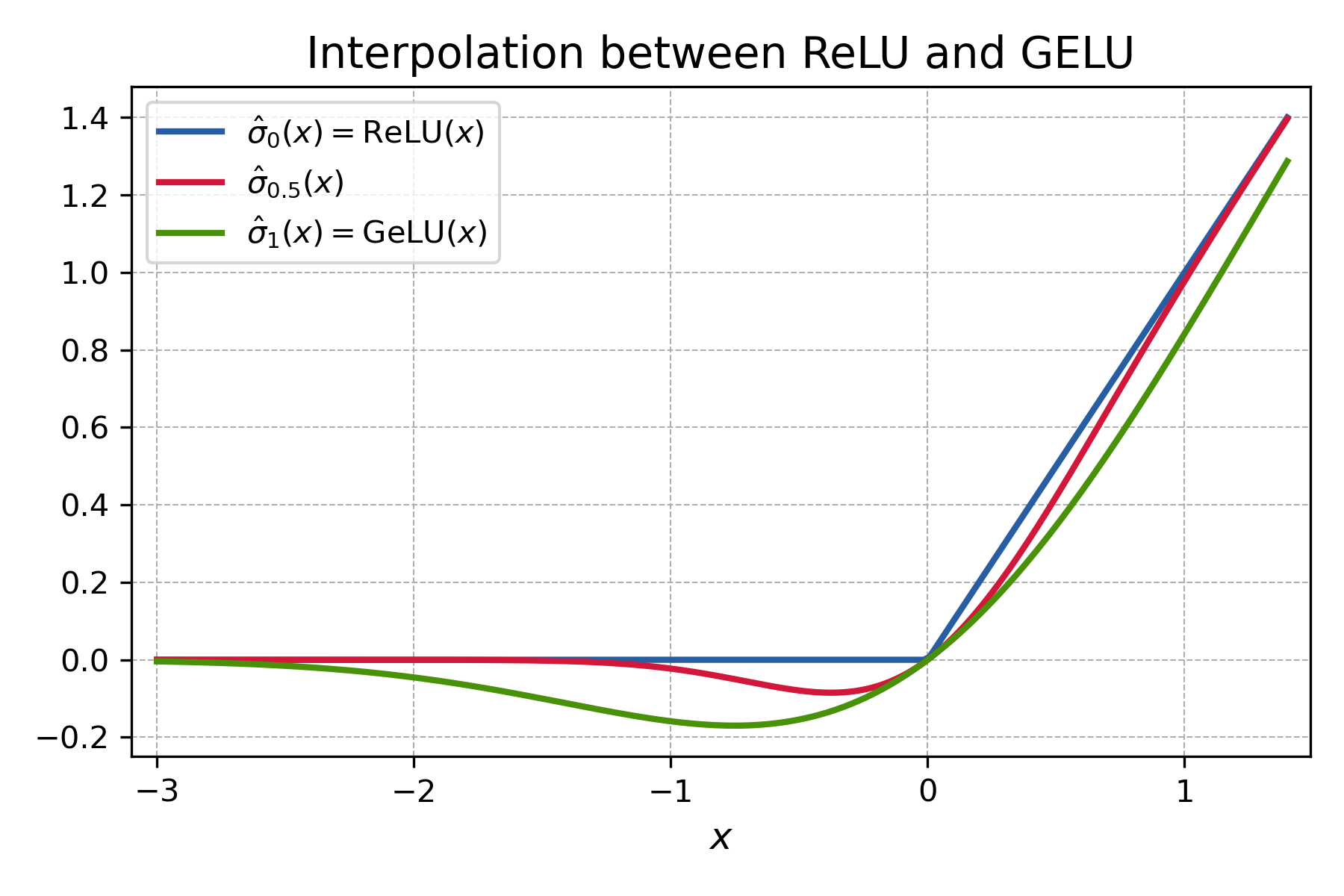}
    \caption{Illustration of the interpolation between ReLU and GELU via $\hat\sigma_\varepsilon(x)$ for different values of $\varepsilon$. The case $\varepsilon = 0$ corresponds to ReLU, and $\varepsilon = 1$ to GELU.}
    \label{fig:interpolation_relu_gelu}
\end{figure}

\begin{corollary}\label{coro:convergence_gelu_relu}
Let $d,\,N,\,m \in \mathbb{N}$ with $d,\,N,\,m > 1$, and let $\{(x_i,y_i)\}_{i=1}^N \subset \mathbb{R}^d \times \mathbb{R}^m$. Let $({\mathcal{W}}_{\lambda,\varepsilon}^L, {\mathcal{B}}_{\lambda,\varepsilon}^L)$ be a minimizer of ${\mathcal{J}}_{\lambda,\varepsilon}$, and let $(\mathcal{W}_*^L, \mathcal{B}_*^L)$ be a minimizer of $\mathcal{J}_0$. Then, under the assumptions of Theorem~\ref{th:estimation_functional_dif_activation}, we have
\begin{align*}
\limsup_{\varepsilon \to 0} \mathcal{J}_{\lambda,\varepsilon}({\mathcal{W}}^L_{\lambda,\varepsilon}, {\mathcal{B}}^L_{\lambda,\varepsilon}) \leq \lambda \tnorm{(\mathcal{W}_*^L, \mathcal{B}_*^L)}_2^2,
\end{align*}
for every $\lambda > 0$.
\end{corollary}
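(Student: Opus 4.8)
\textbf{Proof plan for Corollary~\ref{coro:convergence_gelu_relu}.}
The strategy is to apply Theorem~\ref{th:estimation_functional_dif_activation} to the family $\hat\vecsigma_\varepsilon$ defined through \eqref{eq:ineterpol_gelu} and then let $\varepsilon \to 0$. First I would fix the optimal ReLU parameters $(\mathcal{W}_*^L,\mathcal{B}_*^L)$ provided by Theorem~\ref{multiclass_theorem} (a minimizer of $\mathcal{J}_0$), together with the radii $R_0 := \max_i \|x_i\|$ and $R_j := \|W_{*,j}\|_2 R_{j-1} + \|b_{*,j}\|_2$ as in \eqref{eq:difinition_radios_error}; these depend only on the fixed ReLU construction, not on $\varepsilon$. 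For each $\varepsilon>0$ I would then estimate the layerwise deviation
\begin{align*}
    \nu_j(\varepsilon) := \sup_{z \in B^{d_j}_{R_j}(0)} \|\hat{\vecsigma}_{\varepsilon,j}(z) - \vecsigma_j(z)\|,
\end{align*}
and Theorem~\ref{th:estimation_functional_dif_activation} gives, for every $\lambda>0$,
\begin{align*}
    \mathcal{J}_{\lambda,\varepsilon}({\mathcal{W}}^L_{\lambda,\varepsilon}, {\mathcal{B}}^L_{\lambda,\varepsilon}) \leq \lambda \tnorm{(\mathcal{W}_*^L,\mathcal{B}_*^L)}_2^2 + \mathscr{A}_{\loss}(\nu(\varepsilon), \mathcal{W}_*^L, \mathcal{B}_*^L),
\end{align*}
with $\nu(\varepsilon) = (\nu_1(\varepsilon),\dots,\nu_L(\varepsilon))$. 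Since $\mathscr{A}_{\loss}(\nu,\cdot,\cdot)\to 0$ as $\|\nu\|_2\to 0$, it suffices to show $\|\nu(\varepsilon)\|_2 \to 0$ as $\varepsilon \to 0$, and then take $\limsup_{\varepsilon\to 0}$ on both sides.

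The heart of the argument is therefore the pointwise-to-uniform convergence $\hat\sigma_\varepsilon \to \sigma$ on the fixed bounded interval $[-R_j, R_j]$ (componentwise, giving the vector bound after multiplying by $\sqrt{d_j}$). I would estimate $|\hat\sigma_\varepsilon(x) - \sigma(x)|$ directly: for $x \geq 0$ one has $\hat\sigma_\varepsilon(x) - \sigma(x) = -\tfrac{x}{2}\bigl(1 - \erf(x/(\varepsilon\sqrt2))\bigr) = -\tfrac{x}{2}\operatorname{erfc}(x/(\varepsilon\sqrt2))$, and for $x < 0$, $\hat\sigma_\varepsilon(x) - \sigma(x) = \tfrac{x}{2}\bigl(1 + \erf(x/(\varepsilon\sqrt2))\bigr) = \tfrac{x}{2}\bigl(1 - \operatorname{erfc}(|x|/(\varepsilon\sqrt2))\bigr)$ wait — more cleanly, by the oddness-type symmetry $\hat\sigma_\varepsilon(x) - \sigma(x) = \hat\sigma_\varepsilon(-|x|)\cdot[\text{sign adjustments}]$; in any case $|\hat\sigma_\varepsilon(x)-\sigma(x)| \leq \tfrac{|x|}{2}\operatorname{erfc}(|x|/(\varepsilon\sqrt2))$ for all $x$. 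Using the Gaussian tail bound $\operatorname{erfc}(t) \leq e^{-t^2}$ for $t\geq 0$, one gets $|\hat\sigma_\varepsilon(x)-\sigma(x)| \leq \tfrac{|x|}{2} e^{-x^2/(2\varepsilon^2)}$, and optimizing (or just bounding $|x|e^{-x^2/(2\varepsilon^2)} \leq \varepsilon e^{-1/2}\cdot$const over $x$, while on $|x|\geq\delta$ the bound is $\leq \tfrac{R_j}{2}e^{-\delta^2/(2\varepsilon^2)}$) yields $\sup_{|x|\leq R_j}|\hat\sigma_\varepsilon(x)-\sigma(x)| \leq C\varepsilon \to 0$. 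Hence $\nu_j(\varepsilon) \leq \sqrt{d_j}\,C\varepsilon$ for each $j$, so $\|\nu(\varepsilon)\|_2 \to 0$.

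The main obstacle — really the only nontrivial point — is making the deviation estimate uniform on the \emph{fixed} radii $R_j$, which requires those radii to be independent of $\varepsilon$; this is exactly why Theorem~\ref{th:estimation_functional_dif_activation} defines $R_j$ through the frozen optimal ReLU parameters $(\mathcal{W}_*^L,\mathcal{B}_*^L)$ rather than through the parameters of the $\varepsilon$-model. With that in hand, the decay of $\operatorname{erfc}$ does all the work and the conclusion follows by passing to the $\limsup$ as $\varepsilon\to 0$ in the Theorem~\ref{th:estimation_functional_dif_activation} inequality. One should also note that the family of minimizers $({\mathcal{W}}^L_{\lambda,\varepsilon},{\mathcal{B}}^L_{\lambda,\varepsilon})$ need not converge, but this is irrelevant: the bound above holds for each individual minimizer uniformly in $\varepsilon$, so the $\limsup$ statement follows without any compactness argument on the minimizers themselves.
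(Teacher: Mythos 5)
Your proposal is correct and follows essentially the same route as the paper: freeze the ReLU-optimal parameters $(\mathcal{W}_*^L,\mathcal{B}_*^L)$ so that the radii $R_j$ are independent of $\varepsilon$, invoke Theorem~\ref{th:estimation_functional_dif_activation}, show $\|\nu(\varepsilon)\|_2 \to 0$, and pass to the $\limsup$. Your explicit estimate $|\hat\sigma_\varepsilon(x)-\sigma(x)| = \tfrac{|x|}{2}\operatorname{erfc}\bigl(|x|/(\varepsilon\sqrt{2})\bigr) \leq C\varepsilon$ is in fact a more careful justification of the key step than the paper's, which only computes the pointwise limit of $\sigma_\varepsilon$ and then asserts uniform convergence.
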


\begin{remark}
Corollary~\ref{coro:convergence_gelu_relu} follows directly from Theorem~\ref{th:estimation_functional_dif_activation}, observing that $\sigma_\varepsilon \to \sigma$ in $C(\R)$ as $\varepsilon \to 0$, where $\sigma_\varepsilon$ is defined in \eqref{eq:ineterpol_gelu}, and $\sigma$ denotes the ReLU activation function. The proof is provided in \Cref{sec:proo_training}.
\end{remark}

\subsubsection{Universal Approximation Theorem.}
We now analyze the property of universal approximation.

\begin{theorem}[Universal Approximation Theorem for ${L^{p}(\Omega;\R_{+})}$]\label{UAT_LP}
Let be $1\leq p< \infty$, $d\geq1$ an integer, and $\Omega\subset\R^d$ a bounded domain. For any $f\in L^p(\Omega;\R_+)$ and $\varepsilon>0$, there exist a depth $\mathcal{L}=\mathcal{L}(\varepsilon) \geq 1$ and parameters $\mathcal{W}^\mathcal{L}$ and $\mathcal{B}^\mathcal{L}$ such that the input-output map of \eqref{discrete_dynamics} with $w_{max}=d+1$ satisfies
\begin{align} \label{error_f_phi_intro}
\|\phi(\mathcal{W}^{\mathcal{L}},\mathcal{B}^{\mathcal{L}},\cdot)-f(\cdot)\|_{L^p(\Omega;\R_+)}<\varepsilon.
\end{align}
Additionally, for all  $f \in W^{1,p}(\Omega;\R_+)$, we have 
\begin{align}\label{upper_bound_L_introl}
 \mathcal{L}\leq C\|f\|_{W^{1,p}(\Omega;\R_+)}^{d}\varepsilon^{-d},
\end{align}
where $C$ is a positive constant depending on $m_d(\Omega)$, $d$ and $p$, $m_d(\cdot)$ being the Lebesgue measure in $\R^d$.

\end{theorem}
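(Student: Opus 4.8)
\emph{Strategy of the proof of Theorem~\ref{UAT_LP}.} The plan is to reduce the approximation of $f$ to the exact realization of a nonnegative, continuous, piecewise–linear function supported on a fine cubic partition of $\Omega$, and then to build that function with the architecture \eqref{discrete_dynamics} using the same geometric primitives---dimension reduction, distance scaling and collapse---that underlie Theorem~\ref{multiclass_theorem} (see Section~\ref{section_preliminars}). \emph{Step 1 (reduction to a step function, with a rate).} Enclose $\Omega$ in a cube $B=[a,b]^d$ and, after a translation, assume $B\subset(\delta,\infty)^d$ for some $\delta>0$, so that the vector ReLU $\vecsigma$ acts as the identity on every coordinate whose value stays in $(\delta,\infty)$; this lets us transport the spatial data through the layers for free. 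Partition $B$ into $K=\lceil(b-a)/h\rceil^{\,d}$ congruent closed subcubes $\{Q_k\}_{k=1}^K$ of side $h$, extend $f$ by $0$ outside $\Omega$, and let $c_k\geq 0$ be the mean of $f$ over $Q_k$. The step function $g:=\sum_{k=1}^K c_k\mathbf{1}_{Q_k}$ satisfies $\|f-g\|_{L^p(\Omega)}\to 0$ as $h\to0$ by density of step functions in $L^p$, which yields \eqref{error_f_phi_intro} once Steps 2--3 are in place. If moreover $f\in W^{1,p}(\Omega;\R_+)$, the Poincaré inequality on each cube gives $\|f-c_k\|_{L^p(Q_k)}\leq C\,h\,\|\nabla f\|_{L^p(Q_k)}$ (boundary cubes being dealt with via a $W^{1,p}$ extension of $f$, or by absorbing the thin boundary layer into the error budget), so $\|f-g\|_{L^p(\Omega)}\leq C\,h\,\|f\|_{W^{1,p}}$; choosing $h\simeq\varepsilon/\|f\|_{W^{1,p}}$ forces $K\leq C\,m_d(\Omega)\,h^{-d}\leq C\,m_d(\Omega)\,\|f\|_{W^{1,p}}^{d}\,\varepsilon^{-d}$, which is the source of the depth bound \eqref{upper_bound_L_introl}.

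\emph{Step 2 (continuous surrogate).} Since the input--output map of \eqref{discrete_dynamics} is continuous, $g$ itself cannot be realized; instead replace each $\mathbf{1}_{Q_k}$ by a continuous piecewise–linear plateau $\psi_k$ with $0\leq\psi_k\leq 1$, equal to $1$ on the concentric subcube of side $(1-\eta)h$ and vanishing outside $Q_k$, the transition confined to a shell of relative width $\eta$; these ramps are the one–dimensional incarnation of the distance–scaling primitive. The additional $L^p$ error is at most $(\max_k c_k)\,m_d(B)^{1/p}\,C\eta^{1/p}<\varepsilon/3$ for $\eta$ small, and $\eta$ affects only the weight magnitudes, not the number of layers. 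It thus suffices to realize $G:=\sum_{k=1}^K c_k\psi_k$ exactly by a width-$(d+1)$ network with $\mathcal L\leq c(d)K+O(1)$ layers.

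\emph{Step 3 (network construction and conclusion).} Use $d$ neurons to carry $x\in B$ unchanged through the layers (free, as above) and the single remaining neuron as an output register. Process the pairwise disjoint cubes in a fixed order; when $Q_k=\prod_{j=1}^d[a_j^k,b_j^k]$ is treated, add $c_k\psi_k(x)=c_k\min_{1\le j\le d}\tau_j(x_j)$ to the register, each $\tau_j$ being the one–dimensional trapezoid of $[a_j^k,b_j^k]$. This $d$–fold conjunction is assembled incrementally over a bounded number of layers: at each micro–step one folds the running value with the next trapezoid via $\min(u,v)=u-\sigma(u-v)$, using the register as working memory and the collapse primitive to discard the scratch, after which the register again holds the partial sum and the $d$ spatial coordinates are intact. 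Since the $Q_k$ are disjoint, at most one $\psi_k$ is nonzero at any $x$, so after all $K$ stages the register equals $G(x)$, and a final affine layer outputs it. Then $\|\phi(\mathcal W^{\mathcal L},\mathcal B^{\mathcal L},\cdot)-f\|_{L^p}\leq\|\phi-G\|_{L^p}+\|G-g\|_{L^p}+\|g-f\|_{L^p}<\varepsilon$ with $\mathcal L$ of order $K$, giving \eqref{upper_bound_L_introl}; the vector–valued extensions announced after the theorem follow by running $m$ such registers in parallel, which is where the width increases.

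\emph{Main obstacle.} The delicate point is Step 3: carrying out the $d$–fold conjunction that defines each plateau---and doing so for $K$ successive cubes---with only the $d$ spatial neurons plus a \emph{single} free neuron, never corrupting the stored coordinates $x_1,\dots,x_d$ and never letting ReLU clip an intermediate quantity that should be negative (the affine shifts keeping every working quantity nonnegative must be chosen from the known box $B$, and the collapse primitive is precisely what permits the lone scratch neuron to be reset safely between micro–steps). One must also track how $\eta$ and $K$ enter the parameter magnitudes, to ensure the construction is genuine and the $O(K)$ depth count honest.
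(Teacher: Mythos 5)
Your two-step plan (piecewise-constant approximant on a cubic partition with the rate $\|f-g\|_{L^p}\le Ch\|f\|_{W^{1,p}}$, then an explicit width-$(d+1)$ network of depth $O(K)$, $K\simeq h^{-d}$) is a genuinely different route from the paper's: the paper never builds plateau/bump functions at all. It collapses the hyperrectangles of the partition to distinct points by iterated two-layer maps of width $d+1$ (the width $d+1$ comes from enclosing only hyperrectangles touching selected edges of the bounding box, which needs $d+1$ hyperplanes instead of $2d$), then invokes Theorem~\ref{multiclass_theorem} to send those points to the finitely many values of the simple function, and handles the thin separating grid by a measure/sup-norm estimate. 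Your Steps 1--2 and the depth count are fine in spirit, but Step 3 has a genuine gap, and it sits exactly at the point that makes width $d+1$ nontrivial.

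Concretely: you reserve $d$ neurons to carry $x$ unchanged at \emph{every} layer, leaving a single free neuron that must simultaneously (i) store the accumulated output $S(x)=\sum_{l<k}c_l\psi_l(x)$, which must pass through \emph{unchanged} for all $x$ outside the current cube $Q_k$, and (ii) serve as working memory for building $c_k\psi_k(x)=c_k\min_j\tau_j(x_j)$. Within one cube the iterated min is indeed realizable with one scratch neuron, because each new ramp is affine in the carried coordinates: one can use $\min(u,v_j)=v_j-\sigma(v_j-u)$, recomputing $v_j$ from $x$ at the next layer. But the identity you quote, $\min(u,v)=u-\sigma(u-v)$, requires re-access to $u$ \emph{after} the ReLU, which is impossible once the lone scratch neuron has been overwritten and $u$ is not affine in the preserved coordinates --- and this is precisely the situation for the accumulator: during the block for $Q_k$ the register cannot hold $S$ (needed verbatim for $x\notin Q_k$) while also running the min recursion (needed for $x\in Q_k$). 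Initializing the recursion with $S$, or with $S$ plus a large constant, provably gives the wrong output on one of the two regions, and the natural fixes ($\max(S,\min_j v_j)=\min_j\max(S,v_j)$, temporary encoding of $S$ into a coordinate, etc.) all require either a second auxiliary neuron (i.e.\ width $d+2$, where your scheme does work) or a nontrivial coding argument of the type used in the minimal-width literature, which you do not supply. Your ``Main obstacle'' paragraph names this difficulty but does not resolve it, so the claimed width-$(d+1)$ construction is not established; the paper circumvents the issue entirely by trading the bump-function computation for collapse-to-points plus memorization.
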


\begin{remark}
   \Cref{UAT_LP} ensures the existence of a neural network with a fixed width  $d+1$ neurons and sufficient depth approximating any function in $L^p(\Omega;\R_{+})$. As illustrated in the strategy of proof below, the neural network is constructed using geometrical arguments. Therefore, \Cref{UAT_LP} not only merely asserts the existence of parameters, but also provides explicit ones.
\end{remark}

 \emph{Strategy of Proof of Theorem \ref{UAT_LP}.} The proof is based on a two-step approximation procedure. First, a function $f \in L^p(\Omega; \mathbb{R}_{+})$ is approximated by a simple function supported in a family of hyperrectangles. Then, the simple function is approximated by a deep enough neural network. The proof is outlined as follows:
\medbreak

\noindent{\bf Step 1.} Let $f \in L^p(\Omega; \mathbb{R}_{+})$ be a given function and $\varepsilon > 0$. Denote by $\mathcal{C}$ the smallest hyperrectangle containing $\Omega$, oriented according to the axes of the canonical basis of $\mathbb{R}^d$. We extend $f$ by zero into $\mathcal{C}$. Then, we construct a particular simple function $f_h$ that approximates $f$. For its construction, we consider a family $\mathcal{H}_h$ of hyperrectangles of size $h > 0$ such that $\mathcal{H}_h \cup G_h^\delta = \mathcal{C}$, where $G_h^\delta$ is a grid with thickness $\delta\leq h^{1+p}$ that satisfies $m_d(G_h^\delta) \to 0$ as $\delta \to 0$. This is illustrated in \Cref{fig:compres_1}. Then, $f_h$ is defined as the average value of the function $f$ on each hyperrectangle of the family $\mathcal{H}_h$. With this simple function, we can guarantee that there exists $h_1 > 0$ such that for every $h < h_1$, we have $\|f - f_h\|_{L^p(\mathcal{C}; \mathbb{R}_{+})} < \varepsilon / 2$. We denote by $N_h$ the number of hyperrectangles in $\mathcal{H}_h$. 
\medbreak

\begin{figure}
\centering
\subfloat[]{
\includegraphics[width=0.45\textwidth]{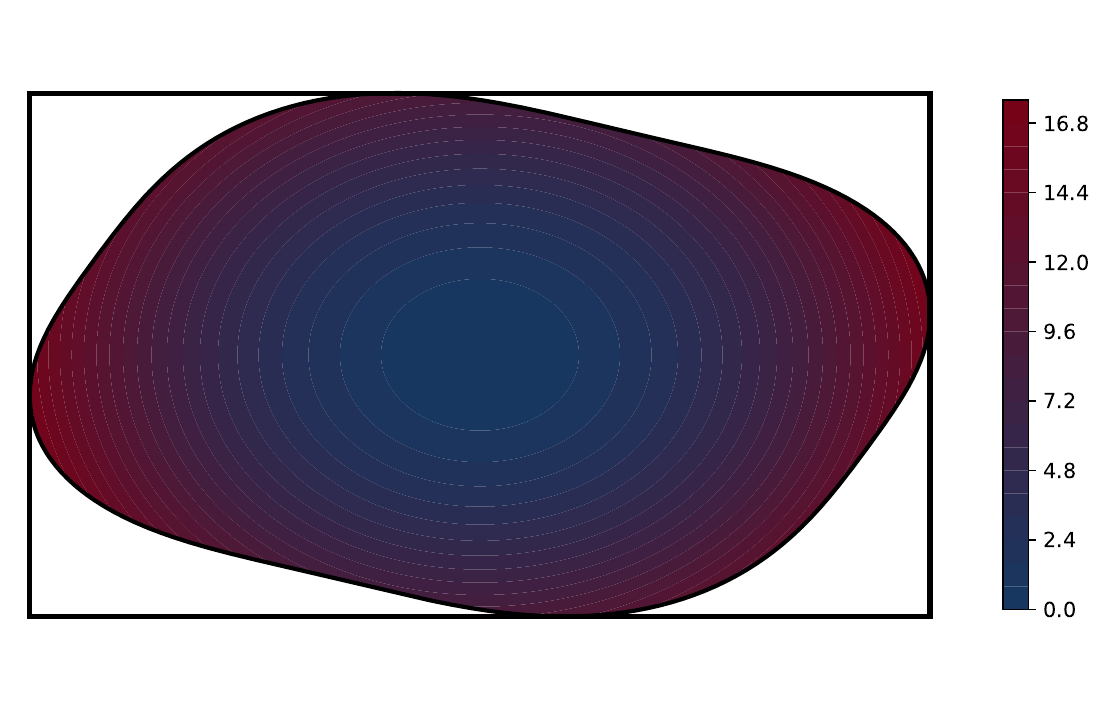}}
\subfloat[]{
\includegraphics[width=0.4\textwidth]{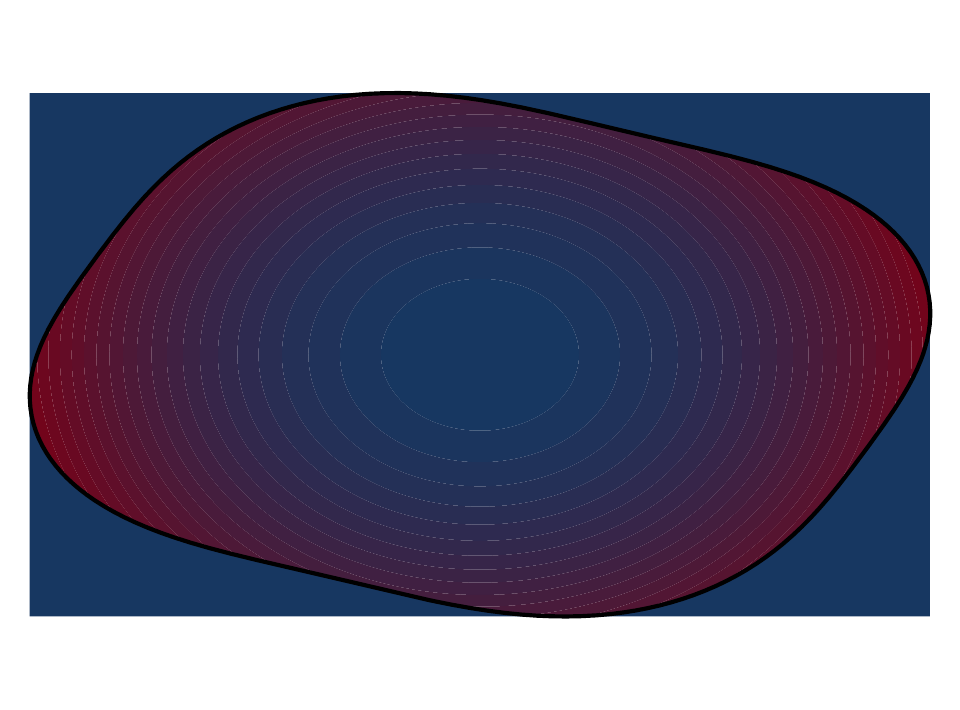}}\\
\subfloat[]{
\includegraphics[width=0.4\textwidth]{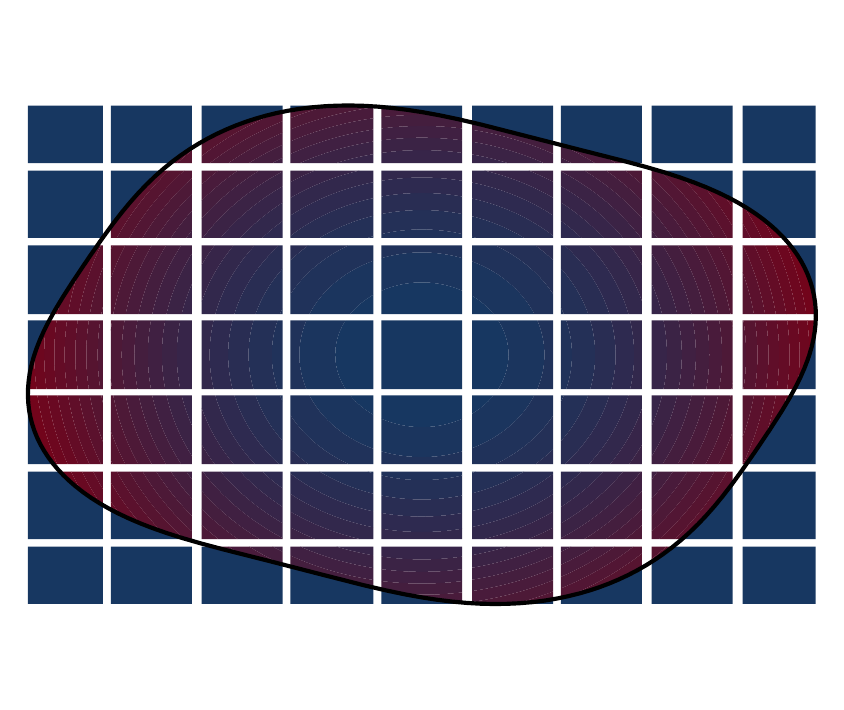}}
\caption{(A) Level sets of the paraboloid $f(x,y)=x^2+y^2$  on $\Omega\subset\R^2$. The rectangle represents the set $\mathcal{C}$. (B) The function $f$ extended by zero to $\mathcal{C}$. (C) Level sets of $f$ on $\mathcal{H}=\mathcal{C}\setminus G_\delta^h$, $G_\delta^{h}$ being the white mesh.}\label{fig:compres_1}
\end{figure}

 \noindent{\bf Step 2.} Then, we construct a neural network $\phi^{\mathcal{L}}$ with a width $d+1$ and depth $\mathcal{L}$, ensuring the existence of $h_2>0$ such that for all $h < h_2$, we have $\|f_h - \phi^{\mathcal{L}}\|_{L^p(\mathcal{C}; \mathbb{R}_{+})} < \varepsilon / 2$. This is done in two steps:
\medbreak

\noindent{\bf Step 2.1.} Let $\mathcal{H}^E_h$  be the subset of hyperrectangles in $\mathcal{H}_h$ that are closest to the edges of $\mathcal{C}$, with $N^E_h$ denoting the number of hyperrectangles in $\mathcal{H}^E_h$. For each hyperrectangle $H \in \mathcal{H}^E_h$, we construct a two-layer neural network with an input-output map $\phi_1^2: \mathbb{R}^d \to \mathbb{R}^d$. This map, which has a width $d+1$, drives $H$ to a single point in $\mathbb{R}^d$ while mapping the remaining hyperrectangles in $\mathcal{H}_h$ to distinct, non-overlapping hyperrectangles in different locations. Here, the chosen parameters ensure the injectivity of the neural network with respect to the hyperrectangles in $\mathcal{H}_h$.

\smallbreak
By iteratively applying these maps, we define a sequence of maps $\phi_i^2$ for $i \in \{1, \dots, N^E_h\}$, such that the composition $\phi^{2N^E_h} := (\phi_{N^E_h}^2 \circ \dots \circ \phi_1^2)$ eventually drives all hyperrectangles in $\mathcal{H}^E_h$ to distinct points. This process results in the dimensional reduction of the remaining hyperrectangles in $\mathcal{H}_h$, where each $\phi_i^2$ transforms $n$-dimensional hyperrectangles into $(n-1)$-dimensional hyperrectangles. Eventually, all hyperrectangles in $\mathcal{H}_h$ are mapped to distinct points $\{x_i\}_{i=1}^{N_h} \subset \mathbb{R}^d$. This approach leverages the specific choice of $\mathcal{H}^E_h$, the dimensionality reduction, and the injectivity of the map $\phi^{2N^E_h}$.

\medbreak

    \medbreak
\noindent{\bf Step 2.2 } Then, we apply Theorem \ref{multiclass_theorem} to construct an input-output map $\phi^{L}$ driving the points $\{x_i\}_{i=1}^{N_h}$ to the $M_h$ values of $f_h$, ensuring that the composition $\phi^{\mathcal{L}} := (\phi^L \circ \phi^{2N^E_h})$ equals $f_h$ on $\mathcal{H}_h$, and therefore $\|f_h-\phi^{\mathcal{L}}\|_{L^p(\mathcal{H}_h;\R_+)}=0$. Additionally, we estimate the error introduced by the neural network on $G_h^\delta$ and show that, for a fixed $h < 1$, the norm $\|\phi^{\mathcal{L}}\|_{L^\infty(G_h^\delta;\R_+)}$ is bounded. This estimation allows us to ensure that there exists a  $h_2>0$ such that for all $h < h_2$, we have $\|f_h - \phi^{\mathcal{L}}\|_{L^p(G_h^\delta;\R_+)} < \epsilon/2$.
\medbreak

\noindent{\bf Step 3.} As a consequence of the triangle inequality and by choosing $h < \min\{h_1, h_2\}$, we obtain $\|f - \phi^{\mathcal{L}}\|_{L^p(\Omega; \mathbb{R}_{+})} < \varepsilon$.
\medbreak

\noindent{\bf Step 4.} Due to the explicit construction of the neural network $\phi^{\mathcal{L}}$, the depth $\mathcal{L}$ can be estimated in terms of $h$. Additionally, assuming $f$ to be in $W^{1,p}$, explicit estimates for $h_1$ and $h_2$ can be obtained in terms of $\|f\|_{W^{1,p}(\Omega; \mathbb{R}_{+})}$, $\varepsilon$, $p$ and $d$ (see \cite{davydov2010algorithms} and its application to isotropic partitions), and therefore conclude \eqref{upper_bound_L_introl}.

\begin{remark}
Several remarks concerning the strategy of the proof of \Cref{UAT_LP} are in order:
\begin{itemize}
    \item  {\bf Step 2}, involving the neural network approximation, is the most challenging one in the proof.
\medbreak

\item  As shown in \Cref{section_preliminars}, each neuron in the neural network represents a hyperplane. In {\bf step 2.1}, $d+1$ neurons are necessary because this is the number of hyperplanes required to separate a single edge hyperrectangle in $\mathcal{H}^E_h$. For instance, in the 2-dimensional case, a hyperrectangle  on the left edge of $\mathcal{C}$ has only one adjacent hyperrectangle above, one below, and one to its right (there are no hyperrectangles outside $\mathcal{C}$). Thus, only $d+1 = 3$ hyperplanes are necessary to separate this hyperrectangle from the others, as opposed to the case where a hyperrectangle is in the interior of $\mathcal{C}$, where four hyperrectangles would surround it, and it would be necessary to use $2d = 4$ hyperplanes. These $d+1$ hyperplanes, in particular, ensure the compression of the separated hyperrectangle into a point.
\medbreak

\item   The set $\mathcal{H}^E_h$ allows us to define a neural network with $2N^E_h$ layers, mapping the $N_h$ hyperrectangles of $\mathcal{H}_h$ to $N_h$ distinct points. Focusing on $\mathcal{H}^E_h$ is essential, as it reduces the number of required layers. Mapping each hyperrectangle of $\mathcal{H}_h$ individually would result in a neural network with significantly more layers, i.e., $N_h \gg N^E_h$. 
\medbreak

\item  The neural network in \Cref{UAT_LP}, defined as $\phi^{\mathcal{L}}=\phi^L\circ \phi^{2N_h^E}$ requires a width $d+1$. However, only the first $2N^E_h$ layers require this width. The width of the $L$ remaining layers can be reduced to $2$ (as a consequence of \Cref{multiclass_theorem}).
\medbreak

\item The neural network of width $2$ from \Cref{multiclass_theorem} maps the compressed hyperrectangles to their corresponding labels and requires $L=2N_h+4M_h-1$ layers, where $M_h$ is the number of distinct values taken by the approximating simple functions $f_h$ (which act as labels). In particular, $M_h \leq N_h$, and $N_h$ can be estimated in terms of $h$ and $\delta$. Moreover, since $N^E_h \leq N_h$ and $M_h \leq N_h$, the depth of the neural network $\phi^{\mathcal{L}}$ is bounded by $\mathcal{L} = 2N^E_h + 2N_h + 4M_h - 1 \leq 8N_h - 1$. Finally, following Step 4, we can estimate the total depth.

\end{itemize}
 \end{remark}
\medbreak
\begin{remark}[An equivalent statement of \Cref{UAT_LP}]
Observe that \Cref{UAT_LP} asserts that for every $\mathcal{L}>0$, there exist parameters $\mathcal{W}^\mathcal{L}$ and $\mathcal{B}^\mathcal{L}$ such that the input-output map of \eqref{discrete_dynamics} with $w_{\max}=d+1$ satisfies
\begin{align*}
    \|\phi^{\mathcal{L}}(\mathcal{W}^\mathcal{L},\mathcal{B}^\mathcal{L},\cdot)-f(\cdot)\|_{L^p(\Omega;\R_{+})}\leq C'\|f\|_{W^{1,p}(\Omega;\R_+)}\mathcal{L}^{-1/d},
\end{align*}
where $C'$ is a positive constant depending on $m_d(\Omega)$, $d$, and $p$.

It is important to mention that in the absence of a finite $W^{1,p}$-norm, a function $f \in L^p(\Omega;\R_{+})$ may exhibit arbitrarily rapid oscillations, making any nontrivial approximation rate unattainable. Therefore, the finiteness of the $W^{1,p}$-norm is essential.
\end{remark}

As a consequence of the preceding theorem and Corollary \ref{corollary_multi_classification}, we have the following universal approximation theorem for $L^p(\Omega;\R^{m}_+)$ functions.

\begin{corollary}[Universal Approximation Theorem in ${L^{p}(\Omega;\R_{+}^m)}$]\label{corollary_UAT}
Let us consider $1\leq p< \infty$, two integers $m,\,d\geq1$, and a bounded domain $\Omega\subset\R^d$. Then, for any $f\in L^p(\Omega;\R^m_+)$ and $\varepsilon>0$, there exist $\mathcal{L}=\mathcal{L}(\varepsilon)\geq 1$ and parameters $\mathcal{W}^\mathcal{L},$ and $\mathcal{B}^\mathcal{L}$ such that the input-output map of \eqref{discrete_dynamics} with width  $w_{max}=\max\{d+1,2m\}$, satisfies
\begin{align}\label{error_f_phi_intro_corollary}
\|\phi(\mathcal{W}^{\mathcal{L}},\mathcal{B}^{\mathcal{L}},\cdot)-f(\cdot)\|_{L^p(\Omega;\R^{m}_+)}<\varepsilon.
\end{align}
Moreover, estimate \eqref{upper_bound_L_introl} for the depth $\mathcal{L}$ is still valid for $f\in W^{1,p}(\Omega;\R^m_+)$.
\end{corollary}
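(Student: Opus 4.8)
\textbf{Plan of proof of Corollary \ref{corollary_UAT}.}
The strategy is to reduce the vector-valued approximation problem to $m$ scalar approximation problems handled by Theorem \ref{UAT_LP}, and then combine the resulting networks in parallel as in the proof of Corollary \ref{corollary_multi_classification}. First I would write $f = (f^{(1)},\dots,f^{(m)})$ with each component $f^{(k)} \in L^p(\Omega;\R_+)$, and observe that, since $\|f-g\|_{L^p(\Omega;\R^m_+)}^p = \sum_{k=1}^m \|f^{(k)}-g^{(k)}\|_{L^p(\Omega;\R_+)}^p$, it suffices to approximate each component to within $\varepsilon m^{-1/p}$. Applying Theorem \ref{UAT_LP} to each $f^{(k)}$ yields a width-$(d+1)$ network $\phi^{\mathcal{L}_k}$ with $\|\phi^{\mathcal{L}_k}-f^{(k)}\|_{L^p(\Omega;\R_+)}<\varepsilon m^{-1/p}$.

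The key structural point is that all $m$ component networks share the same input $x\in\R^d$, so they can be stacked: I would form a single network acting on $\R^{md}$ (or, more economically, broadcast $x$ into $m$ copies in a first layer) whose $k$-th block of coordinates evolves exactly according to $\phi^{\mathcal{L}_k}$, using block-diagonal weight matrices $W_j$ and stacked biases $b_j$. Since the blocks do not interact, the ReLU acts componentwise within each block and the output of the stacked network is precisely $(\phi^{\mathcal{L}_1}(x),\dots,\phi^{\mathcal{L}_m}(x))$. The width of this parallel network is at each layer the sum of the widths of the active blocks; to keep a uniform architecture one pads each $\phi^{\mathcal{L}_k}$ with identity-like layers so they all have the same depth $\mathcal{L}=\max_k \mathcal{L}_k$, giving total width $\max\{d+1,2m\}$ — the $d+1$ coming from the initial dimension-reduction/compression layers of Step 2.1 (which can be done once, shared across components) and the $2m$ coming from running $m$ copies of the width-$2$ memorization network of Theorem \ref{multiclass_theorem} in parallel, exactly as in Corollary \ref{corollary_multi_classification}. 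Summing the $L^p$ errors over $k$ gives $\|\phi^{\mathcal{L}}-f\|_{L^p(\Omega;\R^m_+)}^p < \sum_{k=1}^m \varepsilon^p m^{-1} = \varepsilon^p$, which is \eqref{error_f_phi_intro_corollary}.

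For the depth estimate when $f\in W^{1,p}(\Omega;\R^m_+)$, I would note that each component $f^{(k)}\in W^{1,p}(\Omega;\R_+)$ with $\|f^{(k)}\|_{W^{1,p}} \leq \|f\|_{W^{1,p}(\Omega;\R^m_+)}$, so Theorem \ref{UAT_LP} gives $\mathcal{L}_k \leq C\|f^{(k)}\|_{W^{1,p}}^d (\varepsilon m^{-1/p})^{-d} \leq C' \|f\|_{W^{1,p}(\Omega;\R^m_+)}^d \varepsilon^{-d}$ with $C'$ absorbing the $m^{d/p}$ factor (a constant depending only on $m$, $d$, $p$, $m_d(\Omega)$); since $\mathcal{L}=\max_k\mathcal{L}_k$, the bound \eqref{upper_bound_L_introl} persists up to the constant. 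The main obstacle I anticipate is bookkeeping rather than conceptual: ensuring that the parallel composition genuinely fits the single architecture \eqref{discrete_dynamics} — in particular, verifying that the dimension-reduction layers of Step 2.1 need only be performed once and can feed all $m$ memorization sub-networks (so the width is $\max\{d+1,2m\}$ and not $d+m$ or worse), and carefully handling the depth-padding so that the shorter sub-networks are extended without corrupting their outputs (e.g.\ by propagating a already-nonnegative scalar through layers with weight $1$ and bias $0$, which ReLU leaves invariant). Once the architecture is set up correctly, all the analytic content is inherited verbatim from Theorem \ref{UAT_LP} and Corollary \ref{corollary_multi_classification}.
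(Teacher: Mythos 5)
Your proposal is correct and, in its final form, lands on essentially the same construction as the paper: the paper approximates the vector-valued $f$ by a single $\R^m_+$-valued simple function on one grid of hyperrectangles, compresses the hyperrectangles to distinct points with the shared width-$(d+1)$ layers, and then invokes Corollary \ref{corollary_multi_classification} (width $2m$) to send those points to their vector labels, which is exactly the architecture you arrive at after discarding the naive block-diagonal stacking. The one point you flag as ``bookkeeping'' is in fact the only place where the argument could go wrong, so it deserves to be made explicit: if you literally apply Theorem \ref{UAT_LP} to each component as a black box, each component may come with its own mesh size $h_k$ and hence its own partition, and then the compression layers of Step 2.1 cannot be shared; the naive parallel stacking you first describe would then have width $m(d+1)$, not $\max\{d+1,2m\}$. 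The fix is the paper's viewpoint: choose a single $h$ (e.g.\ below the minimum of all the thresholds $h_1^{(k)},h_2^{(k)}$), so that all components are discretized on the same family $\mathcal{H}$; the compression step depends only on the grid, not on $f$, and the per-hyperrectangle averages of the $m$ components form the vector labels $\ell_i\in\R^m_+$ fed to Corollary \ref{corollary_multi_classification}. With that in place your error bookkeeping and depth bound go through; note only that splitting the tolerance equally as $\varepsilon m^{-1/p}$ introduces an extra factor $m^{d/p}$ into the constant of \eqref{upper_bound_L_introl}, which is avoidable by estimating $\|f-f_h\|_{L^p(\Omega;\R^m_+)}$ directly on the common grid (summing the componentwise $W^{1,p}$ estimates in $\ell^p$), as the paper implicitly does.
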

\textit{Strategy of proof for Corollary \ref{corollary_UAT}.} The proof follows a methodology analogous to that of Theorem \ref{UAT_LP}, approximating, first,  the function $f$ by simple functions with support in hyperrectangles. Then, we construct a neural network of width $d+1$ approximating the second simple function. This is done by mapping the hyperrectangles to a set of different points in $\R^d$. Subsequently, to map these points to their $m$-dimensional targets, we utilize $2m$ hyperplanes by applying Corollary \ref{corollary_multi_classification} instead of Theorem \ref{multiclass_theorem}. The width of the network is then determined by the maximum between $d+1$ and $2m$. Furthermore, since Corollary \ref{corollary_multi_classification} also employs a depth of $2N+4M-1$, the estimated depth of the neural network of Corollary \ref{corollary_UAT} is that in Theorem \ref{UAT_LP}.
\medbreak

Clearly, the input-output map of \eqref{discrete_dynamics} cannot approximate functions with negative values. However, in view of Corollary \ref{corollary_clasification_R}, we know that this can be done by means of the more general architecture \eqref{discrete_dynamics_2}. Using  \eqref{discrete_dynamics_2} the universal approximation result can be extended to functions in $L^p(\Omega;\R^m)$.

\begin{corollary}[Universal Approximation Theorem in ${L^p(\Omega;\R^m)}$]\label{remark_UAT_R}
    Let us consider $1\leq p<\infty$,   integers $m,\,d\geq1$, and a bounded domain $\Omega\subset\R^d$. Then for any $f\in L^p(\Omega;\R^m)$ and $\varepsilon>0$, there exist parameters $\mathcal{A}^\mathcal{L},\,\mathcal{W}^\mathcal{L},\,\mathcal{B}^{\mathcal{L}}$, and $\mathcal{L}=\mathcal{L}(\varepsilon)\geq 1$ such that the input-output map of \eqref{discrete_dynamics_2}  with $w_{max}=\max\{d+1,2m\}$, satisfies
\begin{align}\label{error_f_phi_intro_corollary_R}
\|\phi(\mathcal{A}^{\mathcal{L}},\mathcal{W}^{\mathcal{L}},\mathcal{B}^{\mathcal{L}},\cdot)-f(\cdot)\|_{L^p(\Omega;\R^m)}<\varepsilon.
\end{align} 
Moreover, estimate \eqref{upper_bound_L_introl} for the depth $\mathcal{L}$ is still valid for $f\in W^{1,p}(\Omega;\R^m)$.\end{corollary}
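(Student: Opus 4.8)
\textit{Strategy of proof for Corollary \ref{remark_UAT_R}.} The plan is to combine the approximation machinery already developed for \Cref{UAT_LP} and Corollary~\ref{corollary_UAT} with the sign-adjustment device introduced in Corollary~\ref{corollary_clasification_R}. The key observation is that the architecture \eqref{discrete_dynamics_2} reduces to \eqref{discrete_dynamics} whenever $A_j = \mathrm{Id}_{d_j}$, so all the geometric constructions (dimension reduction, distance scaling, collapse, compression of hyperrectangles to points) remain available verbatim; the only new ingredient needed is a final layer that redistributes the images to the correct signs.

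First I would fix $f \in L^p(\Omega;\R^m)$ and $\varepsilon>0$ and reduce, exactly as in Step~1 of \Cref{UAT_LP}, to approximating a simple function $f_h$ supported on a family $\mathcal{H}_h$ of $N_h$ hyperrectangles, taking values in a finite set $\{\ell_0,\dots,\ell_{M_h-1}\}\subset\R^m$; the only difference from Corollary~\ref{corollary_UAT} is that these values may now have components of arbitrary sign. Next, using $A_j = \mathrm{Id}_{d_j}$ on the first block of layers, I would apply Step~2.1 of \Cref{UAT_LP} to compress the hyperrectangles in $\mathcal{H}_h$ to distinct points $\{x_i\}_{i=1}^{N_h}\subset\R^d$ using width $d+1$, and then, instead of invoking Corollary~\ref{corollary_multi_classification}, invoke Corollary~\ref{corollary_clasification_R} (or rather its $m$-dimensional version stated in the remark following it) to map these points to the signed labels $\{\ell_k\}$. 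This uses the nontrivial matrices $A_j$ only in the final sorting/label-assignment block and requires width $2m$ there, so the overall width is $\max\{d+1,2m\}$, and the depth bookkeeping is identical to that of Corollary~\ref{corollary_UAT}, giving $\mathcal{L}\leq 2N^E_h+2N_h+4M_h$ and hence, for $f\in W^{1,p}$, the same estimate \eqref{upper_bound_L_introl} up to the constant.

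Finally, the error analysis proceeds exactly as in Step~2.2 and Step~3 of \Cref{UAT_LP}: on $\mathcal{H}_h$ the composed map equals $f_h$ so the contribution to the $L^p$ error vanishes, while on the grid $G_h^\delta$ one bounds $\|\phi\|_{L^\infty(G_h^\delta;\R^m)}$ uniformly in $h<1$ — here one uses that the extra matrices $A_j$ produced in the proof of Corollary~\ref{corollary_clasification_R} are fixed and bounded, so they only inflate this $L^\infty$ bound by a controlled multiplicative constant — and then chooses $\delta$ small enough that $m_d(G_h^\delta)$ makes this contribution smaller than $\varepsilon/2$; the triangle inequality finishes the argument. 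I do not anticipate a genuine obstacle here: the proof is essentially a routine composition of Corollary~\ref{corollary_UAT}'s construction with Corollary~\ref{corollary_clasification_R}'s sign trick. The one point deserving care is checking that inserting the $A_j$ matrices does not spoil the uniform $L^\infty$ control on $G_h^\delta$ needed to absorb the grid error; since those matrices are explicit and $h$-independent, this is only a matter of tracking constants.
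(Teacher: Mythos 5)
Your proposal is correct and follows essentially the same route as the paper: the paper's proof simply repeats the construction of Corollary~\ref{corollary_UAT}, replacing Corollary~\ref{corollary_multi_classification} by Corollary~\ref{corollary_clasification_R} in the final label-assignment block, exactly as you describe. Your additional remarks on tracking the $L^\infty$ bound on $G_h^\delta$ through the fixed matrices $A_j$ are a sound (and slightly more careful) elaboration of what the paper leaves implicit.
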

\indent\emph{Strategy of Proof of Corollary \ref{remark_UAT_R}.} The proof is analogous to the proof of Corollary \ref{corollary_UAT}, and it concludes by replacing Corollary \ref{corollary_multi_classification} with Corollary \ref{corollary_clasification_R}.

\subsection{Related Work}\label{sec:related_work}
Deep learning has gained popularity due to its state-of-the-art performance in various machine learning applications \cite{Jumper2021,MR3888768}. In practice, neural networks are typically trained using optimization methods minimizing a least-squares error functional, with stochastic gradient descent algorithms serving as an essential tool to search for minimizers. While this numerical approach, combined with backpropagation techniques to compute the gradients, often leads to solutions that outperform human experts, we still lack a solid mathematical understanding of why deep learning works so well. The results in this paper aim to contribute to explain such performance by explicit constructions, which yield concrete estimates of the complexity required for neural networks to achieve the desired goals -- in particular memorization, and universal approximation -- and explicit bounds for the trained parameters.

\smallbreak
In this section, we describe  some recent related advancements in this broad field.
\smallbreak

\noindent{\bf Finite sample memorization:} The literature on the memorization capacity of linear threshold networks, employing a step function $\sigma$, dates back to the 1960s \cite{cover1965geometrical,baum1988capabilities,kowalczyk1997estimates}. In the 1990s, the analysis of single-hidden layer neural networks (FNNs) with more general nonlinear bounded activation functions, such as sigmoids, was conducted (\cite{huang1998upper,huang1990bounds}). These studies show that a single-hidden layer neural network of width $N$ can memorize $N$ points with $N$ classes. A similar result was proven in \cite{zhang2017understanding}, showing that a single hidden layer ReLU network with $N$ neurons can memorize $N$ arbitrary real points, see also \cite{1189626}. In \cite{yun2019small}, it was proved that a 2-hidden layer ReLU network with widths $d_1$ and $d_2$ can memorize a dataset with $N$ points with $N$ classes if $d_1d_2 \geq 4N m$, where $m$ is the dimension of the labels. Therefore, for $m=1$, the width of the neural network in \cite{yun2019small} is $d_1=d_2=2\sqrt{N}$. The above shows that a 2-hidden layer ReLU network can memorize $N$ points with $O(\sqrt{N})$ neurons.

In the context of deep neural networks, \cite{yamasaki1993lower} constitutes one of the first attempts with sigmoid activation functions. For the ReLU activation function, \cite{yun2019small} demonstrated that for a fixed depth $L$, a neural network with a width depending on $N$ and satisfying a technical assumption can memorize $N$ data. In particular, this result holds if there exists $l>1$ such that $ d_jd_j+l =O(Nm)$ for some $j>1$. In \cite{park2021provable}, it is shown that ReLU networks with a width greater than $3$ and \(O(N^{2/3}log(M))\) neurons suffice to approximately memorize \(N\) points with $M$ one-dimensional classes, in the sense that the data can be driven to be \(\varepsilon\)-close to the labels, where \(\varepsilon > 0\) (this constitutes an approximate simultaneous or ensemble controllability result). The same article also establishes that a 3-wide neural network with \(O(N^{2/3}log(M))\) layers (or neurons) suffices for such approximate memorization. Finally, \cite{vardi2021optimal} showed that, by fixing a width  $12$, networks can memorize any $N$ points with $M$ one-dimensional classes using $O(N^{1/2}+log(M))$ layers (or neurons). See \cite{vardi2021optimal} for the exact expression of the required depth. We highlight that none of the previously mentioned works provides an explicit estimate of the parameter norms ($\ell^2$ or $\ell^\infty$); they focus mainly on the number of parameters.

From the control theory perspective, in \cite{dom_zuauza_neuralode, ALVAREZLOPEZ2024106640}, simultaneous controllability for ResNets and neural ordinary differential equations is proven (see also \cite{agrachev2022control}). This implies the memorization property. The novelty in \cite{dom_zuauza_neuralode} lies in the genuinely constructive approach to building parameters. An extension of \cite{dom_zuauza_neuralode} can be found in \cite{cheng2023}, where it is shown that for neural networks with sufficiently large depth but fixed width, interpolation can be guaranteed through the use of non-linear activation functions. Finally, we mention that constructive methods to prove controllability have been widely used in classical control theory; see \cite{MR4546177, MR4522384, MR4520413, MR1421408, MR4560736} and the references therein for a detailed discussion.
\smallbreak

\noindent{\bf Universal Approximation Theorem:} Classical results in this field primarily focus on shallow neural networks \cite{256500,Cybenko1989,Hornik1989MultilayerFN,LESHNO1993861,pinkus1999approximation}, with Cybenko's celebrated work \cite{Cybenko1989} as a notable example, who proved that a single hidden layer neural network could approximate any continuous function within a compact set of $\mathbb{R}^n$ using a sigmoidal activation function.  However, such density results trace back to 1932, with Wiener's Tauberian theorem, which provides necessary and sufficient conditions under which any function in $L^{1}(\mathbb{R})$ or $L^{2}(\mathbb{R})$ can be approximated by linear combinations of translations of a given $L^{1}(\mathbb{R})$ profile, the gaussian, for instance.

On the other hand, recent years have demonstrated that deep networks typically offer better approximation capabilities compared to shallow networks. In this context,  \cite{telgarsky2016benefits} shows that if a ReLU deep neural network is capable of approximating a function with a given error $\varepsilon$ using $L$ layers and relatively narrow width, then a shallower network with a fixed depth of $O(L^{1/3})$ layers would require a width that increases exponentially with $L$ to achieve the same approximation error $\varepsilon$. This finding highlights one of the principal advantages of deeper architectures in neural networks.

With regard to universal approximation in $L^p$ spaces, \cite{article2} demonstrates that a deep neural network with the ReLU activation function and width  $d+4$ can approximate any function in $L^1(\mathbb{R}^d; \mathbb{R})$. They allocate $d$ neurons for transferring input information to subsequent layers, two neurons to carry the information of the approximation made by the previous layers, and two neurons for approximation on each layer. The same article also proves that if the width of a deep neural network is less than $d$, it is impossible to approximate $L^1(\mathbb{R}^d; \mathbb{R})$ or $L^1(\Omega; \mathbb{R})$ for a compact $\Omega$. In \cite{kidger2020universal}, for $p \in [1,\infty)$, a compact set $\Omega \subset \mathbb{R}^d$, and $m \geq 1$, it is established that it is possible to approximate the spaces $L^p(\mathbb{R}^d; \mathbb{R}^m)$ and $L^p(\Omega; \mathbb{R}^m)$ with a ReLU network of width $d+m+1$. Their main argument for approximating $L^p(\mathbb{R}^d; \mathbb{R}^m)$ involves using a neural network to approximate cutoff functions. For $L^p(\Omega; \mathbb{R}^m)$ functions, they prove universal approximation for $C(\Omega; \mathbb{R}^m)$ functions using $m+d+1$ neurons, concluding by density. More precise estimates of the minimal width in $L^p(\mathbb{R}^d; \mathbb{R}^m)$ and $L^p(\Omega; \mathbb{R}^m)$ are presented in \cite{park2020minimum}, which determine the minimal widths to be $\max\{d+1, m\}$ and $\max\{d+2, m+1\}$, respectively. The proof of this theorem utilizes a coding scheme, consisting of encoding (projecting) $x \in \Omega$ into a codeword (scalar values) containing information about $x$, and then a decoder transforming each codeword into a target function $f(x)$. This scheme is applied to approximate continuous functions and completed with density arguments in $L^p$. In the particular case of the \textit{Leaky-ReLU} activation function, a variant of ReLU,  in \cite{cai2022achieve} it was proven that for a compact $\Omega \subset \mathbb{R}^d$, the minimum width required to approximate functions in $L^p(\Omega; \mathbb{R}^m)$ is $\max\{d, m, 2\}$. Recently, \cite{kim2024minimum} has shown that the minimum width of a neural network with a ReLU activation function necessary to approximate $L^p(\Omega; \mathbb{R}^m)$ is $\max\{d, m, 2\}$,  $\Omega$ being a compact set. The proof of this result, based on \cite{park2020minimum}, employs the coding scheme to approximate continuous functions in a compact $\Omega$, concluding the result for $L^p(\Omega; \mathbb{R}^m)$ functions by density.

Universal approximation theorems for the space of continuous functions in the case of arbitrary depth are discussed in \cite{hanin2019universal,hanin2017approximating,
kidger2020universal,park2020minimum,9827563,li2024minimum}. In particular, the minimal width for approximating $L^p$ functions using the recurrent neural networks (RNN) architecture has been studied in \cite{MR4583282}. We refer to \cite{alberti2023sumformer} for  universal approximation theorems in $L^p$, using transformers. For an extended introduction to the universal approximation theorem in more general spaces, see the survey article \cite{devore2021neural}.

\subsection{Our contribution.}

In Theorem~\ref{multiclass_theorem}, we present a constructive proof of the fact that the neural network defined by \eqref{discrete_dynamics} satisfies the finite sample memorization property with width $2$ and depth $L = 2N + 4M - 1$, which implies a total number of neurons of order $O(N)$. To the best of our knowledge, this is the first purely constructive and geometrically interpretable proof of memorization for narrow MLPs. Our approach allows not only for an explicit step-by-step construction of the network parameters but also provides explicit upper bounds on their norms. This has important implications for regularized training:  minimizing a standard empirical loss with $\ell^2$-regularization leads to parameters whose norms are uniformly bounded by the ones of our constructed parameters. In the limit of vanishing regularization, this guarantees exact interpolation while maintaining control over the norm of the trained parameters.

Regarding universal approximation, for MLP of fixed width $d+1$, we establish universal approximation results in $L^p(\Omega;\mathbb{R}+)$ and $L^p(\Omega;\mathbb{R})$, for any $p \in [1,\infty)$ and bounded domain $\Omega \subset \mathbb{R}^d$. Again, our primary contribution lies in our purely constructive proof based on a detailed geometric and recursive argument. In contrast to other constructive results such as \cite{park2020minimum, devore2021neural, kim2024minimum}, our approach emphasizes geometric intuition and visualization at each layer, following ideas similar to \cite{dom_zuauza_neuralode}. While the width requirement in \cite{kim2024minimum} is optimal (namely, width equals $d$), our construction requires width $d+1$ but yields a fully interpretable and elementary strategy to understand how approximation is achieved. Moreover, all parameters involved are given explicitly, which allows us to estimate the depth required to approximate functions with a prescribed accuracy.

\subsection{Outline} The rest of the paper is organized as follows:
In Section \ref{section_preliminars}, we conduct a geometric analysis of the discrete system \eqref{discrete_dynamics}, introducing fundamental tools essential to our proofs. In Section \ref{summary_proof}, we offer an informal demonstration on constructing parameters to guarantee the finite sample memorization property, illustrated with a specific example. Section \ref{formal_proof} contains the formal proof of Theorem \ref{multiclass_theorem}, followed by the proof of the universal approximation theorem (Theorem \ref{UAT_LP}). Section \ref{sec:proo_training} provides the proof of the theorems related to the regularized training and vanishing regularization. Finally, in Section \ref{further_comentaries}, we discuss extensions and open problems.

 \subsection{Notation}
 Throughout this article, we will use the following notation:
 \begin{itemize}

\item We denote by $\llbracket1,L\rrbracket$ the set of numbers $\{1,\dots,L\}$.
\smallbreak

\item The symbol $\cdot$ denotes the Euclidean scalar product between two vectors.   
\smallbreak

\item Given a set $Q$, its cardinal is denoted by $|Q|$.
\smallbreak

\item $\mathbb{S}^{d}$ denotes the  unit $d-$sphere in $\R^{d+1}$.
\smallbreak

\item $\mathcal{W}^L$ and $\mathcal{B}^L$  denote the families of parameters $\{W_j\}_{j=1}^L$ and  $\{b_j\}_{j=1}^L$, respectively.
\smallbreak

\item $w_{max}$ denotes the width of the neural network defined as $\max_{j\in \llbracket1,L\rrbracket}\{d_j\}$.
\smallbreak

\item $m_d(\Omega)$ denotes the Lebesgue measure of $\Omega$ in $\R^d$.
\smallbreak

\item $\|w\|$ stands for the Euclidean norm of the vector $w$ in $\R^d$.
 \smallbreak

\item $\sigma$ denotes the ReLU function and $\vecsigma$ denotes its vector-valued version. 
 \end{itemize}
 
\section{Preliminaries}\label{section_preliminars}
\subsection{Geometrical
interpretation}\label{geometrical_notions}
This section illustrates the dynamics of the system \eqref{discrete_dynamics} from a geometric perspective. 
In what follows, we will refer to the property of finite sample memorization as simply data classification. To simplify the notation, we also avoid the dependence of $\vecsigma_{j}=:\vecsigma$ with respect to the dimension $d_j$ for every $j\in \llbracket1,L\rrbracket$.

\subsubsection{A single hyperplane: }\label{single_hyper}
Let us begin by analyzing the simple case $(N,L,d,d_1)=(1,1,2,1)$. Consider $x^0\in \R^2$, $w\in \R^{1\times 2}$ and $b\in \R$. Under these conditions, the system \eqref{discrete_dynamics} corresponds to 
\begin{align*}
    x^1=\sigma(w\cdot x^0+b)\in \R.
\end{align*}
Let the hyperplane
\begin{align}\label{hiperplane_geometric_interpretation}
    H:=\{x\in \R^2\,:\,w\cdot x+b=0 \},
\end{align}
which divides the space into two half-spaces determined by $w\cdot x+b>0$ and $w\cdot x+b\leq0$ respectively. Thus, the value of $\sigma(w\cdot x^0+b)$ is either zero or equals to $w\cdot x^0+b=\|w\|d(x^0,H)$, depending on the sign of $w\cdot x^0+b$. Here $d(x^0,H)$ denotes the distance between $x^0$ and the hyperplane $H$. This is illustrated in Figure \ref{figure_single_hyp}.
\begin{figure}[H]
\centering
\includegraphics[width=0.6\textwidth]{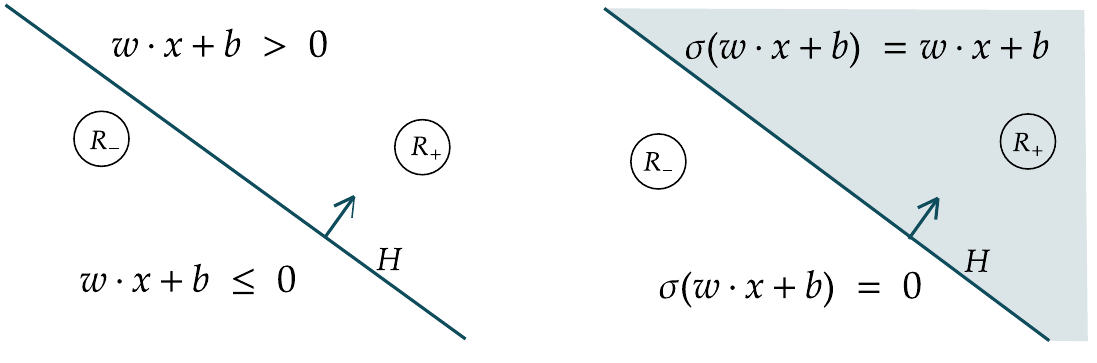}
\caption{Left: $H$ divides the space into two half-spaces $R_{+}$ and $R_{-}$. Right: $R_{+}$ represents the half-space where $\sigma$ is active, while $R_{-}$ represents its null half-space.}\label{figure_single_hyp}
\end{figure}

For future reference, we will say that \emph{$x^0$ is in the activation sector (or region) of $H$} if $w\cdot x^0+b>0$. In Figure \ref{figure_single_hyp}, the sector where the hyperplane $H$ is activated is denoted by $R_{+}$.
\smallbreak
Note that by appropriately choosing the norm of $w$, the distance of the points $x$ within the activation sector can be scaled, either by moving the points closer to or further away from the hyperplane $H$.
 
\subsubsection{Two and more hyperplanes}\label{two_and_more_hyper}
Consider two vectors $w^1,\,w^2\in\R^{1\times 2}$, and scalars $b^1,\,b^2\in \R$. Let us define the matrix $W=(w^1,w^2)^{\top
}$ and the vector $b=(b^1,b^2)^{\top}$. Then, for $x^0\in\R^2$, we have 
\begin{align}\label{computation_x1}
    x^1=\vecsigma(Wx^0+b)=\begin{pmatrix}\sigma(w^1\cdot x^0+b^1)\\ \sigma(w^2\cdot x^0+b^2)\end{pmatrix}.
\end{align}
Denote by $H_1$ and $H_2$ the two hyperplanes defined by $ w^1\cdot x+b^1=0$ and $w^2\cdot x+b^2=0$, respectively. Let $r_1=\|w^1\|d(x^0,H_1)$ and $r_2=\|w^2\|d(x^0,H_2)$, then we have that
\begin{align*}
    \sigma(w^1\cdot x^0+b^1)=\begin{cases} r_1, &\text{if $x^0$ is in the activation sector of $H_1$},\\ 0, &\text{otherwise},\end{cases}
\end{align*}
 while for the second coordinate,
\begin{align*}
    \sigma(w^2\cdot x^0+b^2)=\begin{cases} r_2, &\text{if $x^0$ is in the activation sector of $H_2$},\\ 0, &\text{otherwise}.\end{cases}
\end{align*}
The hyperplanes $H_1$ and $H_2$ partition the plane into four disjoint regions. Depending on the region where a given point $x\in\R^2$ lies, the function $\vecsigma(Wx+b)$ takes a particular value, as depicted in Figure \ref{fig:two_hiperplanes},  mapping $x^0$ into a new point $x^1$ with coordinates $(r_1,r_2)$, which depend, in particular, on the sector where $x^0$ lies.
\begin{figure}
    \centering
    \includegraphics[width=0.6\textwidth]{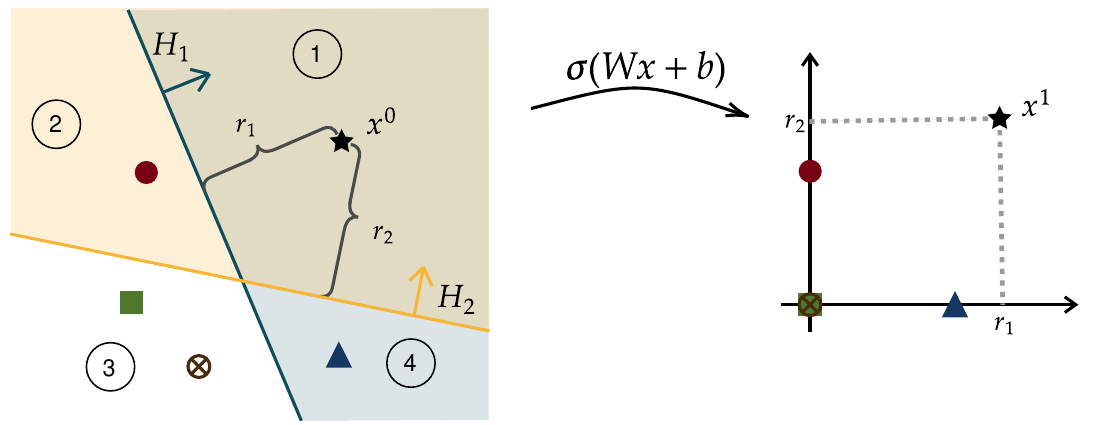}
    \caption{Left: Two hyperplanes split the space into four regions. Different points are chosen in each region. Right: Output of the nonlinear map $\vecsigma(W x+b)$. The green square and brown cross in region 3 are both mapped to the same point, $(0,0)$. The black star is mapped to a new one in the first quadrant, according to its distances to the two hyperplanes. The other two points are mapped to the coordinate axes according to the distance to the hyperplane of the active component.}\label{fig:two_hiperplanes}
\end{figure}
All points in region 1 are mapped to the first quadrant of the plane. Points in regions 2 and 4 are mapped to the coordinate axes. Meanwhile, all points in region 3,  the kernel of the map $\vecsigma(W x+b)$, are mapped to the origin.

\begin{remark}\label{observacion_colapso} Some comments are in order.

\noindent{$\bullet$} As discussed,
$
     K:=\{x\in\R^2\,:\, w^1\cdot x+b^1\leq 0, \text{ and } w^2\cdot x+b^2\leq 0 \}
$
 is the kernel of $\vecsigma(Wx+b)$. It is determined by the parameters $W$ and $b$, that are to be designed to map points to the null point via $\vecsigma$. This allows clustering data. However, it is crucial to choose parameters $W$, and $b$ carefully to ensure that the kernel $K$ does not contain data related to different labels; otherwise, the map $\vecsigma(Wx+b)$ would collapse different labeled points into the same one. If this were to happen, this would render the data classification task impossible. 
\smallbreak

  \noindent {$\bullet$} The same construction can be extended to any number of hyperplanes by considering $W\in\R^{r\times d}$ and $b\in \R^r$. In this case, the function $\vecsigma(W x+b)$ defines a partition of $\R^d$ determined by the family of hyperplanes $H_j=\{x\in\R^d\,:\,w_j\cdot x+b_j=0\}$ for $j\in \llbracket1,r\rrbracket$. These hyperplanes determine the convex kernel $K$, which is not necessarily unbounded since $d+1$ hyperplanes (or more) in a $d-$dimensional space, can determine a bounded kernel, which is then a convex polyhedron.
\smallbreak

  \noindent {$\bullet$} The hyperplanes and the norm of $W$
can be determined first by geometric considerations, and then their parameters are extracted a posteriori to set the neural network's weight and bias.

\end{remark}

\subsection{Projection Lemma}

We present a technical result that will be systematically applied in our proof. This lemma ensures that, given a finite number of points in a $d$-dimensional space, we can always find a direction determining an injective one-dimensional projection of the data.

\begin{lemma}\label{proyeccion_de_datos}
Let us consider a finite set of distinct data $\lbrace x_i\rbrace_{i=1}^N\subset\R^{d}$ such that $x_j\neq x_i$ if $i\neq j$. Then there exists a vector $v\in\mathbb{S}^{d-1}$ such that 
\begin{align}\label{lem1:dif_equation}
    v\cdot x_i\neq v\cdot x_j, \qquad \text{for every }\, i\neq j\in\llbracket1,N\rrbracket.
\end{align}
\end{lemma}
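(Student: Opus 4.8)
The statement is a classical fact: for finitely many distinct points, the set of projection directions that fail to be injective is a finite union of hyperplanes, hence has measure zero on the sphere. The plan is to make this precise. First, I would observe that condition \eqref{lem1:dif_equation} fails for a given pair $i\neq j$ precisely when $v\cdot(x_i-x_j)=0$, i.e.\ when $v$ lies in the hyperplane $(x_i-x_j)^\perp\subset\R^d$. Since $x_i\neq x_j$, the vector $x_i-x_j$ is nonzero, so this is a genuine hyperplane through the origin, and its intersection with $\mathbb{S}^{d-1}$ is a great subsphere of dimension $d-2$, which has zero $(d-1)$-dimensional (surface) measure.

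Next I would take the union over all pairs: the ``bad'' set is
\begin{align*}
    B:=\bigcup_{1\le i<j\le N}\bigl\{v\in\mathbb{S}^{d-1}\,:\,v\cdot(x_i-x_j)=0\bigr\}.
\end{align*}
This is a finite union (there are $\binom{N}{2}$ pairs), so $\sigma_{d-1}(B)=0$ where $\sigma_{d-1}$ denotes the surface measure on $\mathbb{S}^{d-1}$. Since $\sigma_{d-1}(\mathbb{S}^{d-1})>0$, the complement $\mathbb{S}^{d-1}\setminus B$ is nonempty; any $v$ in it satisfies \eqref{lem1:dif_equation}. For the degenerate case $d=1$, the sphere $\mathbb{S}^0=\{-1,+1\}$ and one simply takes $v=1$, which works since the $x_i$ are distinct real numbers.

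Alternatively, to avoid invoking measure theory on the sphere, I would give a purely elementary argument: pick any $v_0\in\mathbb{S}^{d-1}$. For each pair $i<j$, the function $t\mapsto (v_0+t e)\cdot(x_i-x_j)$ is affine in $t$ for a fixed auxiliary direction $e$ not orthogonal to all differences, or more simply, consider the polynomial $p(v)=\prod_{i<j}\bigl(v\cdot(x_i-x_j)\bigr)$ in the coordinates of $v$; it is not identically zero on $\R^d$ (since each factor is a nonzero linear form and $\R^d$ is an integral domain for polynomials), so it is nonzero at some point of $\R^d$, hence — being homogeneous — nonzero at some point of $\mathbb{S}^{d-1}$ after normalization. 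That point is the desired $v$.

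\textbf{Main obstacle.} There is essentially no serious obstacle; the only care needed is the bookkeeping for small dimensions ($d=1$, where $\mathbb{S}^{d-1}$ is two points and the ``measure zero'' phrasing is vacuous) and making sure one states clearly why a finite union of measure-zero sets cannot cover a set of positive measure. I expect to present the polynomial/nonvanishing argument as the cleanest route, mentioning the measure-theoretic one as the geometric intuition.
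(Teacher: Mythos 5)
Your proposal is correct and follows essentially the same route as the paper: the paper also identifies the bad directions for each pair as a closed subsphere of dimension $d-2$ and concludes via the Baire Category Theorem, explicitly noting (as you do) that one may equivalently argue that a finite union of surface-measure-zero subspheres cannot cover $\mathbb{S}^{d-1}$. Your additional polynomial non-vanishing argument is a valid elementary variant, but it does not change the substance of the proof.
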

\begin{proof}
For each pair $1\le i<j\le N$, define the set of directions
\begin{align*}
H_{ij} =\big\{v\in\mathbb S^{d-1}\,:\,v\cdot(x_i - x_j)=0\big\}.
\end{align*}
Since $x_i\neq x_j$, the vector $q_{ij}:=x_i - x_j\in\R^d\setminus\{0\}$, so $H_{ij}$ is the intersection of the unit sphere with the hyperplane orthogonal to $q_{ij}$. Hence, $H_{ij}$ is a closed subsphere of dimension $d-2$ and thus a proper, nowhere‐dense subset of $\mathbb S^{d-1}$.
Set
\begin{align*}
G =\bigcup_{1\le i<j\le N}H_{ij}.
\end{align*}
Then $G$ is a finite union of closed, nowhere‐dense subsets of the compact manifold $\mathbb S^{d-1}$. By the Baire Category Theorem (or by observing that each $H_{ij}$ has surface‐measure zero and hence their finite union cannot exhaust the full‐measure sphere), we conclude $\mathbb S^{d-1}\setminus G\neq\emptyset$. Any $v\in\mathbb S^{d-1}\setminus G$ satisfies
\begin{align*}
v\cdot(x_i - x_j)\neq0,\quad\forall i\neq j,
\end{align*}
which is equivalent to $v\cdot x_i\neq v\cdot x_j$ whenever $i\neq j$, concluding the proof.
\end{proof}
 
\begin{remark}\label{observation22}
A few remarks are necessary.

\noindent{$\bullet$} The above argument extends without modification to any countable collection of pairwise distinct points ${x_i}_{i\in\mathbb{N}}\subset\mathbb{R}^d$. Indeed, $G$ is a countable union of closed subspheres of codimension 1, so by the Baire Category Theorem $\mathbb{S}^{d-1}\setminus G\neq\varnothing$. However, for an uncountable (e.g., \ continuous) data set, one may have $G=\mathbb{S}^{d-1}$, and thus, no separating direction exists.

\noindent{$\bullet$} If the data set is finite, each $H_{ij}$ is a closed subsphere of dimension $d-2$, and their finite union $G$ is a closed set with empty interior. Hence $G^c=\mathbb{S}^{d-1}\setminus G$ is open and dense. In particular, for any $g\in G$ and any $\varepsilon>0$, there exists $g_\varepsilon$ in an $\varepsilon-$neighborhood of $g$ such that $g_\varepsilon\cdot(x_i - x_j)\neq0$ for all $i\neq j$, showing that the projection property is stable under small perturbations.

\end{remark}

 \section{Sketch of the proof of Theorem \ref{multiclass_theorem}: An example }\label{summary_proof}
 In this section, we illustrate the proofs of Theorem \ref{multiclass_theorem} through a specific example. The formal proof and the estimation of the depth $L$ are provided in the subsequent section. 
\smallbreak
Let us consider the dataset $\{x_i,y_i\}_{i=1}^8\subset \R^3\times\llbracket0,3\rrbracket$.  For ease of visualization, we assume their labels correspond to four shapes of different colors: red circle, blue triangle, green square, and brown cross. We aim to drive blue triangles to $0$, red circles to $1$, brown crosses to $2$, and green squares to $3$. In the following, we will refer to a \emph{class of elements} as a set that contains points associated with the same label. Therefore, in our example, we have four such classes of elements.
\smallbreak
In the ensuing discussion, we illustrate the main steps needed to complete the classification process.
\smallbreak

\noindent \textbf{1) Preconditioning of the data: }We choose the parameters $w_1$ and $b_1$ to project the $3-$dimensional data injectively into a one-dimensional space, ensuring that all the projected data remain distinct. Note that the neural network's width in this step is one since we use one hyperplane. 
\begin{figure}[H]
    \centering
    \includegraphics[width=0.7\textwidth]{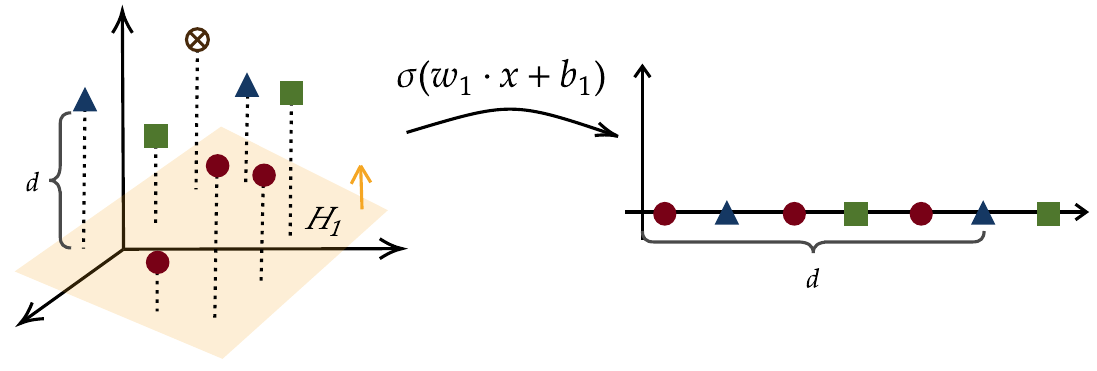}
    \caption{
     Projection of the data in a one-dimensional space using the hyperplane $H_1$.
       } \label{example1}
\end{figure}

\smallbreak

\noindent \textbf{2) Compression process: } Inspired by Remark \ref{observacion_colapso}, in this step, we aim to collapse each class of elements into a single point. For this purpose, it is enough to show how to collapse a single class while keeping the other three classes well separated throughout the process, and then proceed inductively. We illustrate this step by compressing the red circles, indicating which hyperplanes are needed to carry out this process.
\smallbreak

 \noindent   {$\bullet$ Step 2.1:}
    Starting from the output of the previous step, we define two hyperplanes, $H_2^1$ and $H_2^2$. These are chosen such that the red circle at the left is mapped to the $y-$axis, the blue triangle in between is driven to $(0,0)$, and the remaining data points are mapped to the $x-$axis. 
\begin{figure}[H]
    \centering
    \includegraphics[width=0.7\textwidth]{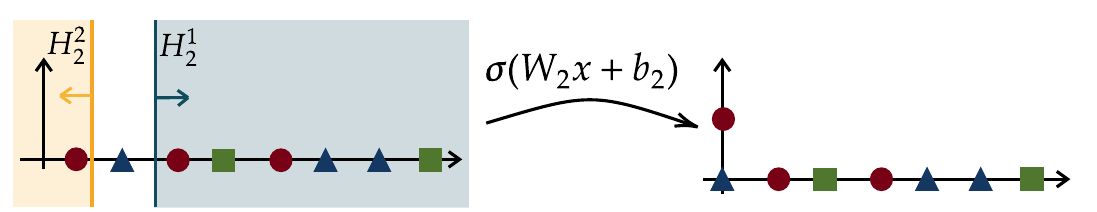}
\end{figure}
\smallbreak

\noindent    {$\bullet$ Step 2.2:} We now consider two diagonal hyperplanes $H_3^1$ and $H_3^2$ enclosing the red circles between them. Their activation sector is chosen to make the two red circles collapse to the origin (see Remark \ref{observacion_colapso}).
    \begin{figure}[H]
    \centering
    \includegraphics[width=0.7\textwidth]{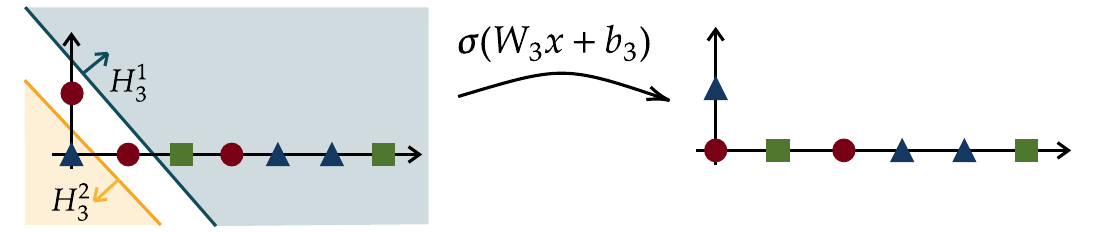}
\end{figure}
\smallbreak

\noindent    {$\bullet$ Step 2.3:} With the same goal as in Step 2.1, we consider two hyperplanes $H_4^1$ and $H_4^2$ that place one red circle on the $y-$axis and another red circle on the $x-$axis.
    \begin{figure}[H]
    \centering
    \includegraphics[width=0.7\textwidth]{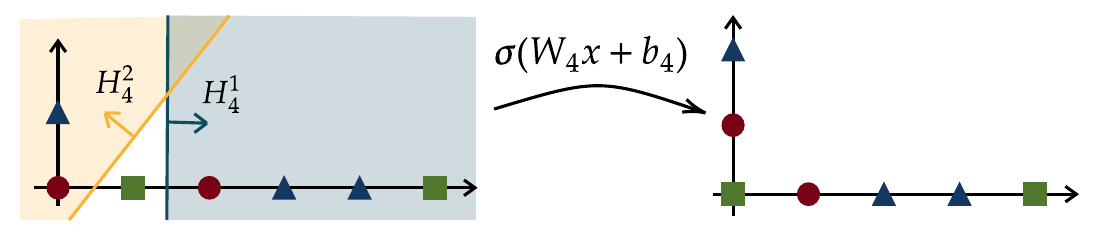}
\end{figure}
The slope of $H_4^2$ plays an important role since, if the hyperplane was vertical, the blue triangle and red circle to its left (and all the points that lie on $y-$axis) would be at the same distance from the hyperplane $H_4^2$ and thus would be driven to the same value. 
\medbreak

We can iteratively apply Steps 2.2 and 2.3, as many times as necessary, to collapse all the red circles. 

By applying this process to each class, we can define the input-output map $\phi^{L_2}:\R\to\R^2$ described by Figure \ref{intro1}.
\begin{figure}[H]
    \centering
    \includegraphics[width=0.7\textwidth]{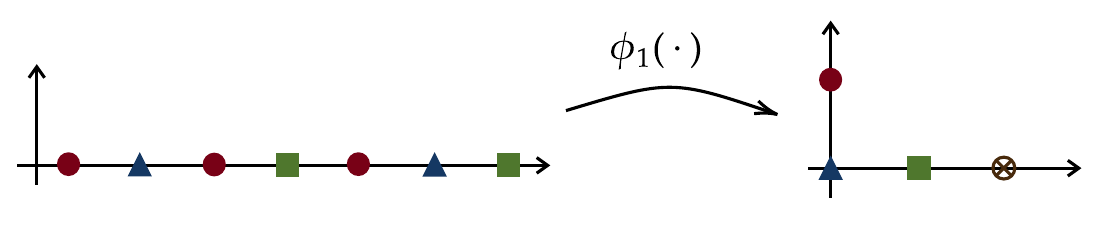}
    \caption{
    Compression of each class into a single point using the mapping $\phi_1$.} \label{intro1}
\end{figure}

\smallbreak

\noindent \textbf{3) Data sorting: }After completing the previous step, all points within each class have been driven into the same position. Hence, the points of each group become indistinguishable and inseparable, allowing us to treat them as a single reference point. Let us denote the reference point associated with the label $i$ as $z_i$.

Note, however, that the outputs of the last step do not provide any specific ordering of these reference points. In this third step, our aim is to show how to reorder these reference points along the real line according to their labels.
\smallbreak

We will outline the first steps and illustrate how to carry out the inductive process.
\smallbreak

\noindent    {$\bullet$ Step 3.1:} We begin by projecting the two-dimensional data into the real line using any hyperplane, ensuring that all projected data remain distinct.
    \begin{figure}[H]
    \centering
    \includegraphics[width=0.7\textwidth]{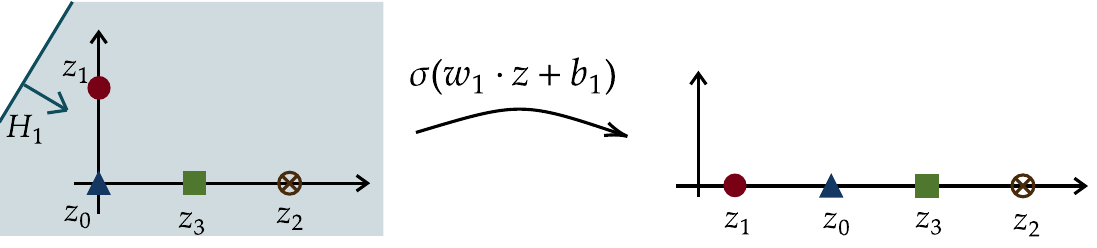}
\end{figure}
\smallbreak

\noindent    {$\bullet$ Step 3.2:}     
    We consider two vertical hyperplanes to drive only the first point $z_0$ to $(0,0)$.
    \begin{figure}[H]
    \centering
    \includegraphics[width=0.7\textwidth]{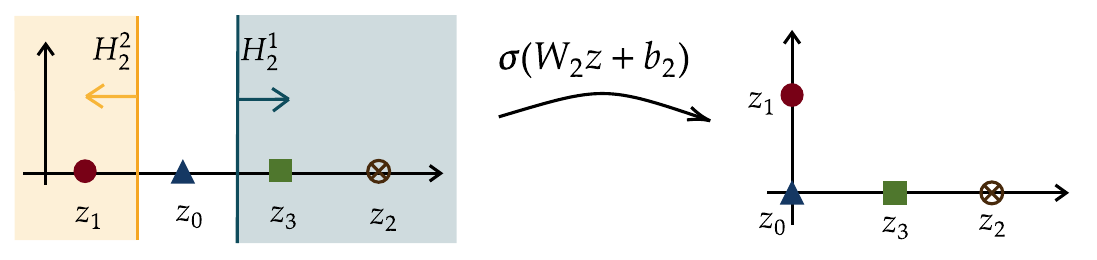}
\end{figure}
\smallbreak

\noindent    {$\bullet$ Step 3.3:} We consider a hyperplane such that the activated semi-space contains all the points, and the closest point to it is the one in $(0,0)$. This allows us to sort the first point. 
    \begin{figure}[H]
    \centering
    \includegraphics[width=0.7\textwidth]{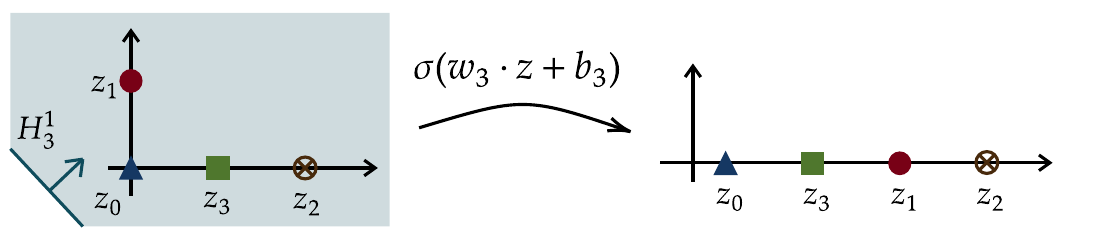}
\end{figure}
\smallbreak

\noindent    {$\bullet$ Step 3.4:} Again, we consider vertical hyperplanes $H_4^1$ and $H_4^2$, this time to drive only $z_1$ to $(0,0)$. 
    \begin{figure}[H]
    \centering
    \includegraphics[width=0.7\textwidth]{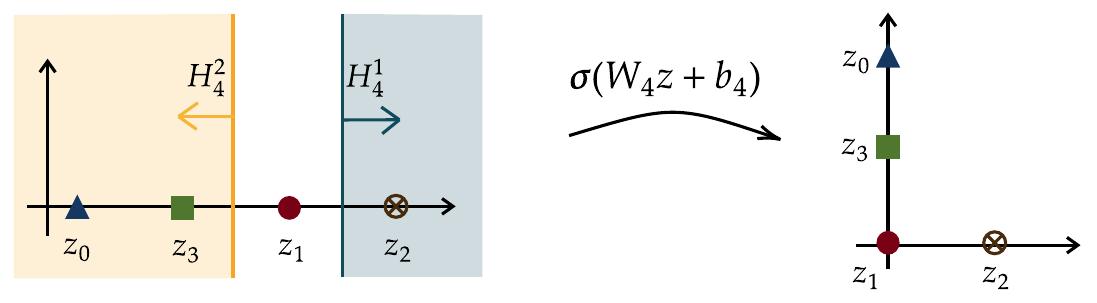}
\end{figure}
\smallbreak

\noindent    {$\bullet$ Step 3.5:} We consider a hyperplane $H_5^1$ such that the nearest point is $z_0$ and the farthest point is $z_1=(0,0)$. This allows us to ensure that $z_1$ will be the farthest point from zero.
    \begin{figure}[H]
    \centering
    \includegraphics[width=0.7\textwidth]{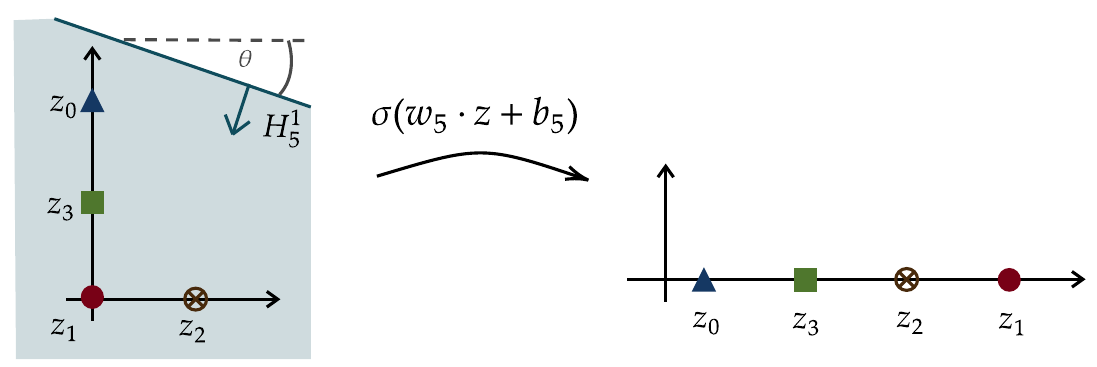}
\end{figure}
\smallbreak

\noindent    {$\bullet$ Step 3.6:} We consider vertical hyperplanes to drive only $z_2$ to $(0,0)$.
    \begin{figure}[H]
    \centering
    \includegraphics[width=0.7\textwidth]{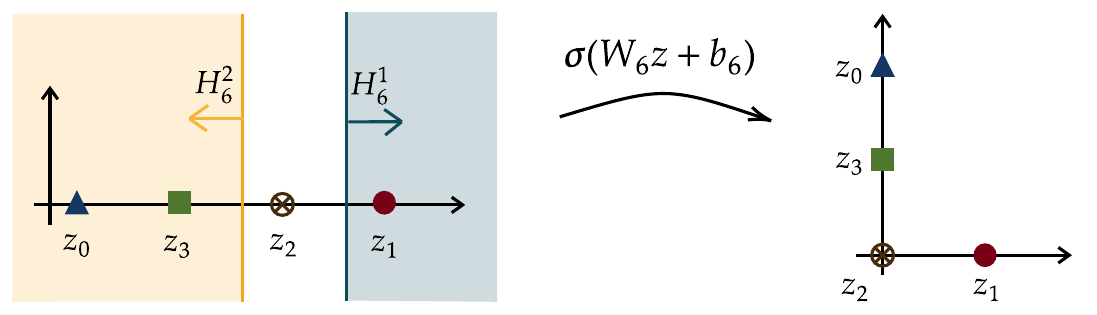}
\end{figure}
\smallbreak

 \noindent   {$\bullet$ Step 3.7}:  We consider a hyperplane such that the closest point is $z_0$, then $z_1$, and the farthest point is $z_2=(0,0)$.
        \begin{figure}[H]
    \centering
    \includegraphics[width=0.7\textwidth]{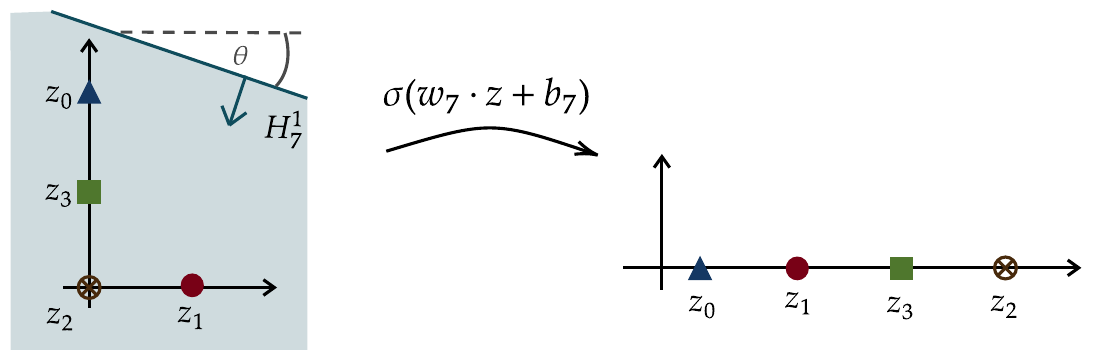}
\end{figure}
\medbreak

The first two points are well collocated, and the data we want to sort next, $z_2$,  is at the end.  Applying steps 3.6 and 3.7 iteratively, we can sort $z_2$ and all the remaining positions, always taking a suitable slope $\theta$ for the hyperplane in Step 3.5. 
\medbreak

 \noindent\textbf{4) Mapping to the respective labels:} We will show how to drive each point to its corresponding label. This is done by applying projections and choosing the weights properly to have a correct distance scaling.
\smallbreak

\noindent{$\bullet$ Step 4.1:} We begin by considering a hyperplane containing $z_0$, so that it can contract or dilate the position of $z_1$, to drive this point to $1$, while sending $z_0$ to $(0,0)$. This maps $z_0$ to $0$ and $z_1$ to $1$. 
    \begin{figure}[H]
    \centering
    \includegraphics[width=0.7\textwidth]{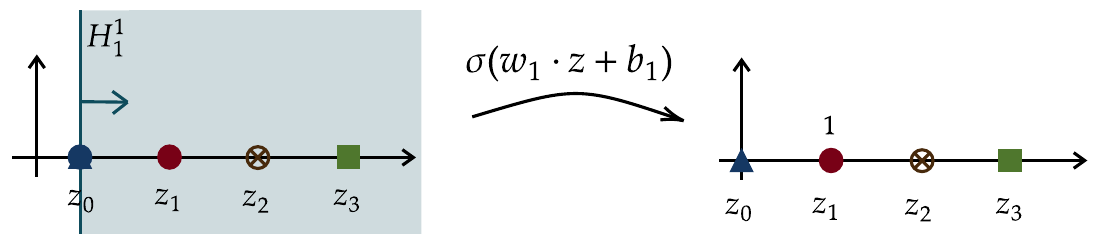}
    \end{figure}
\smallbreak

\noindent{$\bullet$ Step 4.2:} We now consider two hyperplanes. The first hyperplane passes through $z_0$ and ensures that the existing order of the data along the $x$-axis is preserved (after applying $\sigma$, this order is maintained along the $y$-axis). The second hyperplane is placed between $z_1$ and $z_2$ and is used to push or pull $z_2$ along the $x$-axis, moving it closer to the value $1$ on the $x$-axis after applying $\sigma$.
        
    \begin{figure}[H]
    \centering
    \includegraphics[width=0.7\textwidth]{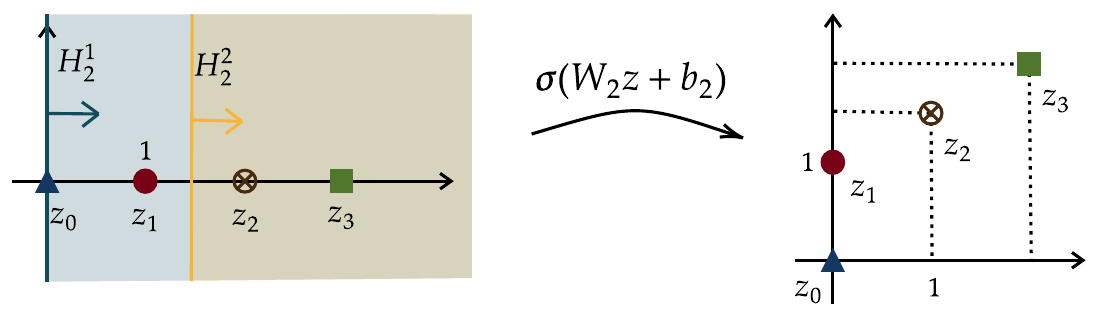}
    \end{figure}
\smallbreak

\noindent{$\bullet$ Step 4.3:} We define a hyperplane of equation $w_3\cdot x=0$, so it contains $z_0$. The parameter $w_3$ is such that $\sigma(w_3\cdot z_1)=1$ and  $\sigma(w_3\cdot z_2)=2$.
    \begin{figure}[H]
    \centering
    \includegraphics[width=0.7\textwidth]{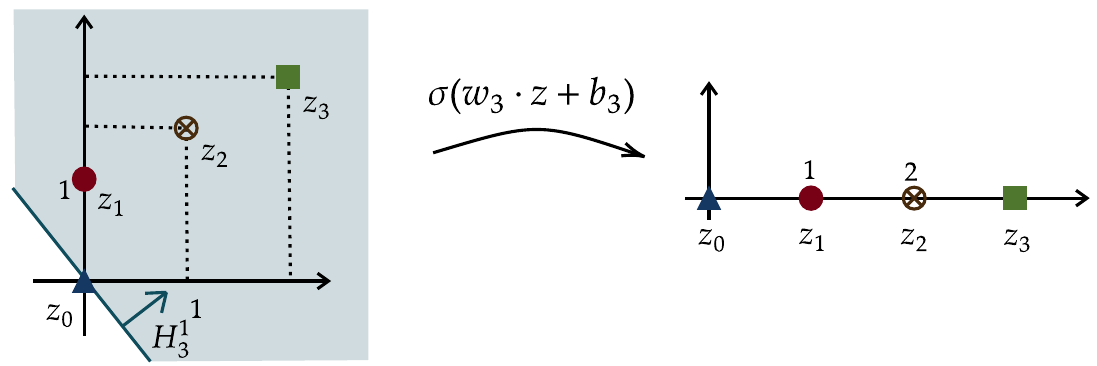}
    \end{figure}
\medbreak

We note that we can again go back to step 4.2, considering the same hyperplane containing $z_0$ and the second one located between $z_2$ and $z_3$. Then, when reproducing step 4.3, we would choose $w_3$ such that $w_3\cdot z_1=1$,  $w_3\cdot z_2=2$, and $w_3\cdot z_3=3$. This is feasible because the first two conditions coincide (given that $z_2=2z_1$)  (see Step 4 in Section \ref{formal_proof} for the explicit construction of $w_3$).

By iterating this procedure and combining steps 4.2 and 4.3, we can bring all data to their respective labels.

\begin{remark}[Minimal width deep neural network]\label{remark_minimal_width}
   At this point, it becomes evident that at least two hyperplanes are necessary to develop an algorithm for classifying data. Indeed, if we were restricted to using just a single hyperplane, we would be unable to develop a compression process for every data set. Therefore, it is not feasible to classify any $d-$dimensional dataset using a $1$-wide neural network. Thus, $2$ is the minimal width for a neural network (as defined by \eqref{discrete_dynamics}) capable of addressing any classification problem.
\end{remark}

The computation of the depth of the process described above is detailed in the following section. It shows that for the first stage, $1$ layer is necessary;  $2N$ layers for the second stage; $2M+1$ for the third one; and, finally, $2M-3$ layers for the fourth stage. In total, the depth of the neural network is $2N+4M-1$ layers. Moreover, it is possible to see that in the first stage (since we project from $\R^d$ to $\R$), the number of neurons is $d$. In the second stage, the number of neurons is $2(2N)$. In the third stage, we alternate between one and two dimensions, and the number of neurons becomes $(M+1)+2M$. For the fourth stage, the number of neurons also alternates and is equal to $(M-3)+2 M$. The total number of neurons is then $4N+6M+d-2$.

\begin{remark}
The strategy described above has been implemented in Python for a binary classification; the code is available in \href{https://github.com/Martinshs/multiclass_UAT/tree/main}{github.com/Martinshs}. This implementation demonstrates how the explicit parameters provided in the proof of Theorem~\ref{multiclass_theorem} can be used to drive each input to its target level set without any training process.
\end{remark}

\section{Proof of Theorem \ref{multiclass_theorem} }\label{formal_proof}

This section is devoted to presenting the details of the proof of Theorem \ref{multiclass_theorem}. Each stage described in the previous section is developed in a separate subsection. The proofs of the steps $2$, $3$, and $4$ will be done by induction.

\subsection{{Preconditioning of the data:} } 

To prove the first step, let us consider the dataset $\lbrace x_i,y_i\rbrace_{i=1}^N\subset\R^{d}\times \llbracket0,M-1\rrbracket$.  Lemma \ref{proyeccion_de_datos} assures the existence of a vector $w_1\in\R^{d}$ satisfying
    \begin{align*}
        w_1\cdot x_i\neq w_1\cdot x_j,
    \end{align*}
    for every $i\neq j\in \llbracket1,N\rrbracket$. Now, let $b_1\in\R$ be large enough such that 
\begin{align*}
   w_1\cdot x_i+b_1>0, \quad \forall i\in \llbracket1,N\rrbracket,
\end{align*}
which implies
\begin{align*}
     \sigma(w_1\cdot x_i+b_1)\neq \sigma(w_1\cdot x_j+b_1),  \quad \forall i\neq j\in \llbracket1,N\rrbracket,
\end{align*}
and also $\sigma(w_1\cdot x_i+b_1)>0$ for all $i\in \llbracket1,N\rrbracket$.  We denote by $\lbrace x_i^1\rbrace_{i=1}^{N} \subset \R$ the projected one-dimensional new data given by
\begin{align}\label{definition_first_itera}
    x^1_i=\sigma(w_1\cdot x_i+b_1), \quad i\in \llbracket1,N\rrbracket.
\end{align}
Note that the points $\{x_i\}_{i=1}^{N}$ are collocated according to their distance to the hyperplane  $w_1\cdot x+b_1=0$. In other words, for $L_0=1$ and with the choice of the parameters $\mathcal{W}=\{w_1\}$, and $\mathcal{B}=\{b_1\}$ we have that
\begin{align}\label{phi_L0}
    \phi^{L_0}(x_i)=\phi(\mathcal{W}^1,\mathcal{B}^1,x_i)=x^1_i.
\end{align}
This is illustrated in Figure \ref{example1}. 

\subsection{{Compression process.}} 
We divide this section into two parts. In the first part, we show that an induction procedure suffices. The second part focuses on showing that a single class can be compressed.

\subsubsection{The induction strategy}

Consider the sets corresponding to the different classes of points:
\begin{align}\label{Conjuntos_xx}
    \mathcal{C}_k=\{x_i \text{ with } i\in \llbracket1,N\rrbracket \,:\, y_i=k \},\quad\text{and}\quad \mathcal{C}=\bigcup_{k=0}^{M-1} \mathcal{C}_k.
\end{align}
 In the sequel we write $\phi(\mathcal{W}^L,\mathcal{B}^L,\mathcal{C}_k)=z_k$ when 
$
    \phi(\mathcal{W}^L,\mathcal{B}^L,x)=z$  for every $x\in \mathcal{C}_k
$,
Therefore, compressing all the classes is equivalent to proving the existence of parameters $\mathcal{W}^{\tilde{L}}$ and $\mathcal{B}^{\tilde{L}}$, $\tilde{L}>0$, and a sequence of different vectors $\{z_k\}_{k=0}^{M-1}\subset\R^2$ such that 
\begin{align}\label{compresion_to_differents_points}
     \phi(\mathcal{W}^{\hat{L}},\mathcal{B}^{\hat{L}},\mathcal{C}_k)=z_k,\qquad\text{for every } k\in\llbracket0,M-1\rrbracket.
\end{align}
The following proposition guarantees that this problem can be handled in an inductive manner. 

\begin{proposition}\label{key_proposition} 
 For any $k\in\llbracket0,M-1\rrbracket$ fixed but arbitrary, we assume that there exist $\tilde z_0 \in \R^2$, $\tilde{L}\geq 1$, $\mathcal{W}^{\tilde{L}}$ and $\mathcal{B}^{\tilde{L}}$ such that 
\begin{align}\label{condition_1_propo1}
     \phi(\mathcal{W}^{\tilde{L}},\mathcal{B}^{\tilde{L}},\mathcal{C}_{k})=\tilde z_0,\qquad  \phi(\mathcal{W}^{\tilde L},\mathcal{B}^{\tilde L},\mathcal{C}\setminus\mathcal{C}_k)\neq \tilde z_0,
\end{align}
and 
\begin{align}\label{condition_2_propo1}
    \phi(\mathcal{W}^{\tilde{L}},\mathcal{B}^{\tilde{L}},\tilde z_1)\neq  \phi(\mathcal{W}^{\tilde{L}},\mathcal{B}^{\tilde{L}},\tilde z_2),\quad\text{for all   } \tilde z_1,\,\tilde z_2\in\mathcal{C}\setminus \mathcal{C}_{k}, \, \tilde z_1\neq \tilde z_2.
\end{align}
Then, there exist $L_1\geq 1$, $\mathcal{W}^{L_1}$, $\mathcal{B}^{L_1}$ and different vectors $\{z_k\}_{k=0}^{M-1}\subset\R^2$ such that 
\begin{align*}
     \phi(\mathcal{W}^{L_1},\mathcal{B}^{L_1},\mathcal{C}_k)=z_k,
\end{align*}
for $k\in\llbracket0,M-1\rrbracket$.
\end{proposition}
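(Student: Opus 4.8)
The plan is to prove Proposition~\ref{key_proposition} by (finite downward) induction on the index $k$, treating it as the inductive step that lets us peel off one class at a time. The hypotheses \eqref{condition_1_propo1}--\eqref{condition_2_propo1} encode exactly the invariant we want to maintain: after processing we have collapsed the class $\mathcal{C}_k$ to a single point $\tilde z_0 \in \R^2$, none of the other classes have been sent to $\tilde z_0$, and all the remaining points in $\mathcal{C}\setminus\mathcal{C}_k$ are still pairwise separated. The goal is to extend the network (appending finitely many layers, hence increasing the depth from $\tilde L$ to some $L_1 \geq \tilde L$) so that \emph{every} class $\mathcal{C}_j$, $j \in \llbracket 0, M-1 \rrbracket$, is collapsed to its own distinct point $z_j$.

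First I would make the base of the reduction precise. Starting from the output of the preconditioning step, the $N$ data points sit at distinct positions on the real line; pick any one class, say $\mathcal{C}_{k_0}$, and apply the single-class compression procedure (the second part of Section~\ref{formal_proof}'s compression subsection — i.e. the explicit construction sketched in Steps 2.1--2.3 of Section~\ref{summary_proof}) to obtain parameters $\mathcal{W}^{\tilde L},\mathcal{B}^{\tilde L}$ satisfying \eqref{condition_1_propo1} and \eqref{condition_2_propo1} with this $k_0$. This furnishes the first instance of the hypothesis. Now suppose inductively that we have built a network collapsing the classes $\mathcal{C}_{k_0},\dots,\mathcal{C}_{k_r}$ to distinct points while keeping the points of the not-yet-collapsed classes pairwise distinct (this is precisely what \eqref{condition_1_propo1}--\eqref{condition_2_propo1} give for a single new class; for several, one iterates). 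Because the already-collapsed classes are single points and the remaining classes are finite sets of points, the \emph{current} state of the data is again a finite set of distinct points in $\R^2$. I would then apply Lemma~\ref{proyeccion_de_datos} to project these injectively back to $\R$ (a width-$1$ layer, plus a bias to land in the activation region), which puts us back in the one-dimensional setting of the base case, with one fewer class to collapse; apply the single-class compression to the next class $\mathcal{C}_{k_{r+1}}$; and here is where condition \eqref{condition_2_propo1} is essential — it guarantees the compression step does not accidentally merge two of the already-placed reference points or two surviving data points, so the separation invariant is preserved. Since there are only $M$ classes, after $M$ iterations every class has been collapsed, yielding the desired $\mathcal{W}^{L_1},\mathcal{B}^{L_1}$ and distinct $\{z_k\}_{k=0}^{M-1}\subset\R^2$.

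The one subtlety to handle carefully — and the step I expect to be the main obstacle — is ensuring that collapsing a \emph{new} class via the kernel-of-$\vecsigma$ construction of Remark~\ref{observacion_colapso} does not disturb the classes already collapsed: concretely, that the convex kernel $K$ of the map at each layer contains all of the target class and \emph{none} of the reference points $\tilde z_0$ already fixed, nor causes two distinct surviving points to be identified. This is exactly what the hypothesis \eqref{condition_2_propo1} is designed to provide at the level of a single inductive step, and the verification amounts to checking that at each of the compression sub-steps (the vertical/diagonal hyperplane choices of Steps 2.1--2.3) the relevant points lie strictly in the activation sectors, so the geometric distance-scaling and collapse behave as intended. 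Once this invariant is seen to propagate, the count of appended layers is bookkeeping: each new-class collapse costs a bounded number of layers, the projection costs one, and summing over the $M$ classes gives the depth bound that feeds into the $2N+4M-1$ estimate computed in the next section. I would close by noting that, strictly, Proposition~\ref{key_proposition} only asserts the existence of the extended network given one instance of the hypothesis; the full compression conclusion \eqref{compresion_to_differents_points} then follows by invoking the proposition iteratively together with the single-class compression lemma that supplies the hypothesis at each stage.
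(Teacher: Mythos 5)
Your plan follows essentially the same route as the paper's proof: a finite induction over the classes in which the single-class compression guaranteed by \eqref{condition_1_propo1}--\eqref{condition_2_propo1} is applied to the current (already transformed) configuration and the resulting input-output maps are composed, the separation conditions ensuring that previously collapsed reference points and the surviving points are never merged. The extra material you include (re-projection via Lemma~\ref{proyeccion_de_datos} and the explicit hyperplane steps) belongs to the construction that supplies the hypothesis rather than to the proposition itself, but the core induction-and-composition argument coincides with the paper's.
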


In other words, Proposition \ref{key_proposition} shows that to compress all the classes of points, it is sufficient to compress a single (but arbitrary) class of points without collapsing the points not belonging to that class.

The proof of Proposition \ref{key_proposition} can be found in Appendix \ref{appendix_1}.

\subsubsection{Compression of a single class.}\label{subsec:compression}

    Let us take $k\in\llbracket0,M-1\rrbracket$ arbitrarily but fixed. Our goal is to drive the class $\mathcal{C}_k$ to some vector $z_k\in \R^2$ in $\tilde{L}\geq 1$ steps. We will do it by induction. 
    
    We focus on the worst-case scenario in which points in the class $\mathcal{C}_k$ are isolated, not having neighboring points of the same class, which could be treated simultaneously as a single point,  reducing the number of layers needed.
\medbreak
  \noindent In this procedure, we will combine  two operations, in an alternating manner:
 \smallbreak

  \noindent {$\bullet$} Data structuring: Construction of hyperplanes driving the data set to some particular structure.
 \smallbreak

  \noindent {$\bullet$} Compression process: Using the structure established in the prior step,  introduce hyperplanes to collapse points belonging to the same class.
\medbreak

\noindent \textbf{(1) Initial Step:}
We show that the two first points of the class $\mathcal{C}_k$ closer to zero can be compressed.
\smallbreak
\noindent \emph{Data structuring: } Given that data have been projected into the one-dimensional real line, without loss of generality, we can assume that the new data $\{x_i^1\}_{i=1}^N$, defined in \eqref{definition_first_itera}, are indexed according to their order, i.e., $x_i^1\leq x_j^1$ for every $i\leq j$. Let $\mathcal{C}_k^1$ be given by 
    \begin{align*}
        \mathcal{C}_k^1=\left\{     \sigma(w_1\cdot x+b_1)\in\R\,:\, x\in \mathcal{C}_k \right\},
    \end{align*}
where $w_1$ and $b_1$ are the parameters defined in the \emph{Preconditioning of the data} step.
    Let us denote by $x_{r_1}^1$  the smallest element of the class $\mathcal{C}_k^1$. Then, we introduce the parameters $W_2=(w_2^1,w_2^2)^\top
$ and $b_2=(b_2^1,b_2^2)^\top
$ with
\begin{align*}
w_2^1 = 1,\quad w_2^2 =-1,\quad b_2^1=-\left(\frac{x_{r_1+2}^1+x_{r_1+1}^1}{2}\right),\quad \text{and}\quad b_2^2=\frac{x_{r_1+1}^1+x_{r_1}^1}{2}.
\end{align*}
Data are then mapped into the 2-dimensional vectors (see Figure \ref{figure_paso1})
\begin{align*}
    x^2_i=\vecsigma(W_2 x^1_i+b_2)=\begin{pmatrix}\sigma(w_2^1 x^1_i+b_2^1)\\\sigma(w_2^2 x^1_i+b_2^2)\end{pmatrix},
\end{align*}
such that
\begin{align}\label{estructura_inicial_datos}
    \begin{cases}
x_i^2=(0,a_i^2)\quad \text{for all }i\in\llbracket1,r_1\rrbracket, \\
x_{r_1+1}^2=(0,0),\\
x_i^2=(a_i^2,0) \quad \text{for all }i\in\llbracket r_1+2,N\rrbracket , 
\end{cases}
\end{align}
for some $\{a_i^2\}_{i=1}^N\subset\R_+$.
Denote by $H_2^1$ and $H_2^2$ the vertical hyperplanes defined by the parameters $(w_2^1,b_2^1)$ and $(w_2^2,b_2^2)$, respectively. 
\begin{figure}[H]
    \centering
    \includegraphics[width=0.7\textwidth]{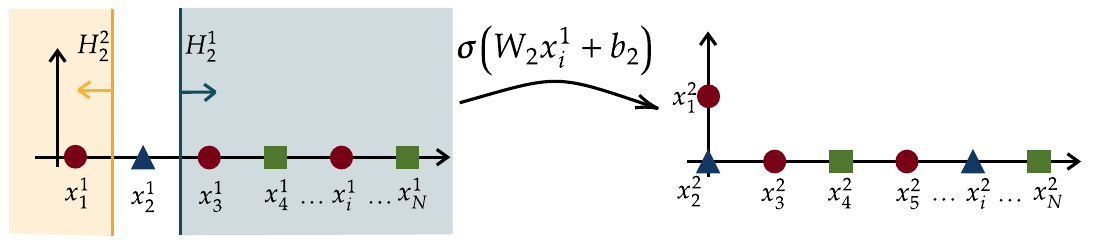}
    \caption{
       In these figures, $\mathcal{C}_k$ corresponds to the class of red circles. The vertical hyperplane $H_2^2$ separates $x_1^1$ and $x_2^1$, and the hyperplane $H_2^1$ is placed between $x_2^1$ and $x_3^1$. The ReLU vector-valued function maps the points to the left of $H_2^2$ to the $y-$axis and those to the right of $H_2^1$  to the $x-$axis. The point between the two planes is mapped to the origin. } \label{figure_paso1}
\end{figure}

\noindent\emph{Compression: }
Let $x_{r_1}^2=\vecsigma(W_2 x^1_{r_1}+b_2)$ with $x^1_{r_1}$ defined in the previous step, and
\begin{align*}
        \mathcal{C}_k^2=\left\{   \vecsigma(W_2 x+b_2)\in\R\,:\, x\in \mathcal{C}_k^1 \right\}.
    \end{align*}
Denote by $x_{r_2}^2\in \mathcal{C}_k^2$ the closest element to the null vector on the $x-$axis in the class $\mathcal{C}_k^2$. Then, define  $W_3=(w_3^1,w_3^2)^\top
$ and $b_3=(b_3^1,b_3^2)^\top
$ with \begin{align}\label{parameter_1}
    w_3^1=&\left(\frac{a^2_{r_1-1}+a^2_{r_1}}{a^2_{r_2+1}+a^2_{r_2}},1\right),\quad w_3^2=\left(-\frac{a^2_{r_1}+a^2_{r_1+1}}{a^2_{r_2}+a^2_{r_2+1}},-1\right),\\
    \quad &b_3^1=-\left(\frac{a^2_{r_1-1}+a^2_{r_1}}{2}\right),\quad b_3^2=\frac{a^2_{r_1}+a^2_{r_1+1}}{2}
\end{align}
and set
$
    x^3_i=\vecsigma(W_3 x^2_i+b_3),
$
which are of the form
\begin{align}\label{estructura2_datos}
    \begin{cases}
x_i^3=(0,a^3_i)\quad \text{for all }i\in\llbracket r_1+1,r_2-1\rrbracket, \\
x_{r_{1,2}}^3:=x_{r_1}^3=x_{r_2}^3=(0,0),\\
x_i^3=(a_i^3,0) \quad \text{for all }i\in\llbracket 1,r_1-1\rrbracket\cup\llbracket r_2+1,N\rrbracket  
\end{cases}
\end{align}
for some $\{a_i^3\}_{i=1}^N\subset\R_+$.
The hyperplanes $H_3^1$ and $H_3^2$ are defined by the parameters $(w_3^1,b_3^1)$ and $(w_3^2,b_3^2)$, respectively. The argument above is illustrated in Figure \ref{figure8}.
\begin{figure}[H]
    \centering
    \includegraphics[width=0.7\textwidth]{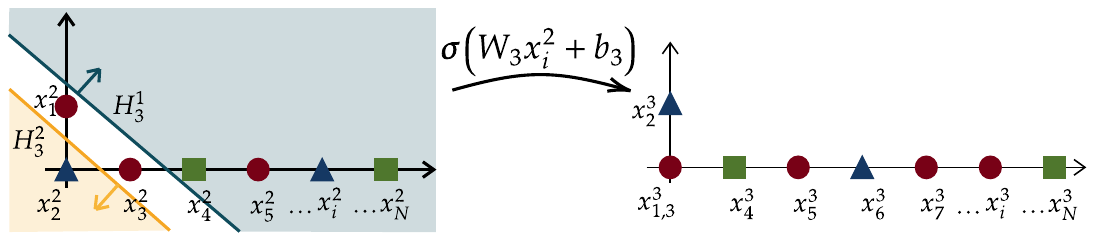}
    \caption{
    The points above the hyperplane $H_3^1$ are mapped to the $x-$axis, and the points below the hyperplane $H_3^2$ are mapped to the $y-$axis, while the points between the two hyperplanes are compressed to the null vector. We denote by $x_{1,3}^3=x_1^3=x_3^3$ the null vector to which the two red circles collapse.
    }\label{figure8}
\end{figure}

\noindent {\bf (2) Inductive Step: } 
The initial step has been achieved. We now aim to show that induction can also be applied successfully. In fact, to do that, it suffices to apply the arguments of the initial step again and again. Note that the model under consideration collapses point to the exact same location, and once this happens, they will never split again in the forthcoming iterations. In this way, if $\alpha_k$ denotes the number of elements in $\mathcal{C}_k$ i.e. $|\mathcal{C}_k|=\alpha_k$, all points in $\mathcal{C}_k$ will collapse applying $2\alpha_k$ times the nonlinear mapping  $\sigma$, so that $L_k=2\alpha
_k$.
\smallbreak

Let us assume that we have compressed the first $j\in\llbracket1,\alpha_k\rrbracket$ elements of the class $\mathcal{C}_k$. Note that to compress $j$ elements, it is necessary to apply two steps per element (\emph{data structuring} and \emph{compression} steps). Therefore, it is necessary to apply $2j$ steps. Denote by $\mathcal{C}_k^{2j+1}$ the class  $\mathcal{C}_k^1$ after having applied  $2j$ steps to it. Let us show that we can compress the  $j+1$-th element of $\mathcal{C}_k^{2j+1}$.
\smallbreak

 \noindent    \emph{Data structuring: } Denote by $x_{r_t}^{2j+1}$ the elements of $\mathcal{C}_k^{2j+1}$ for $t\in \llbracket1,\alpha_k\rrbracket$. Observe that, after compressing the first $j$ elements of $\mathcal{C}_k^{2j+1}$, we will always have that $x_{r_{t}}^{2j+1}=(0,0)$ for $t\in\llbracket1,j\rrbracket$ and $x_{r_{j+1}}^{2j+1}=(a_{j+1}^{2j+1},0)$ for some $a_{j+1}^{2j+1}\in\R$. Since we assumed from the beginning that they are not neighboring points of the same class, let $x_{s}^{2j+1}=(a^{2j+1}_{s},0)\notin \mathcal{C}_k^{2j+1}$ be the nonzero closest point to $x_{r_1}^{2j}=(0,0)$ in the $x-$axis. We also consider $x_{s+1}^{2j+1} = (a_{s+1}^{2j+1}, 0)$ with $a_{s+1}^{2j+1} \neq 0$, the point to the right of $x_{s}^{2j+1}$ on the $x$-axis (if it does not exist, we take $x_{s+1}^{2j+1} := x_{s}^{2j+1} + (0.5, 0)$).
    
Let us consider the parameters
\begin{align}\label{parameter_2}
    w_{2j+2}^1=(0,1),\quad w_{2j+2}^2=\left(\frac{1}{2},-\frac{1}{2}\right),\quad b_{2j+2}^1=-\left(\frac{a^{2j+1}_{s+1}+a_{s}^{2j+1}}{2}\right),\quad b_{2j+2}^2=\frac{a_{s}^{2j+1}}{4}.
\end{align}
Define $W_{2j+2}=(w_{2j+2}^1,w_{2j+2}^2)^\top
$ and $b_{2j+2}=(b_{2j+2}^1,b_{2j+2}^2)^\top
$ and set
\begin{align*}
    x^{2j+2}_i=\vecsigma(W_{2j+2} x^{2j+1}_i+b_{2j+2}),
\end{align*}
so that
\begin{align*}
    \begin{cases}
x_{r_t}^{2j+2}=(0,a_j^{2j+2})\quad \text{for all }t\in\llbracket 1,j \rrbracket,\\
x_{s}^{2j+2}=(0,0),\\
x_{r_{j+1}}^{2j+2}=(a_{j+1}^{2j+2},0)   .
\end{cases}
\end{align*}

 \noindent \emph{Compression: } We are going to compress $x_{r_j}^{2j+2}$ with $x_{r_{j+1}}^{2j+2}$. Let $x_u^{2j+2} = (0,a_u^{2j+2})$ and  $x_d^{2j+2}=(0,a_d^{2j+2})$ be the points lying above and below $x_{r_j}^{2j+2}$ on the $y-$axis, respectively. Also, denote by $x_{rr}^{2j+2}=(a_{rr}^{2j+2},0)$ and  $x_{ll}^{2j+2}=(a_{ll}^{2j+2},0)$ the points that are to the right and left of $x_{r_{j+1}}^{2j+2}$ in the $x-$axis. Thus, we consider the parameters
\begin{align}\label{parameter_iterative_compression}
     \nonumber w_{2j+3}^1=&\left(\frac{a^{2j+2}_{u}+a^{2j+2}_{j}}{a^{2j+2}_{rr}+a^{2j+2}_{j+1}},1\right),\quad w_{2j+3}^2=\left(-\frac{a^{2j+2}_{d}+a^{2j+2}_{j}}{a^{2j+2}_{ll}+a^{2j+2}_{j+1}},-1\right),\\
    \quad &b_{2j+3}^1=-\left(\frac{a^{2j+2}_{u}+a^{2j+2}_{j}}{2}\right),\quad b_{2j+3}^2=\frac{a^{2j+2}_{d}+a^{2j+2}_{j}}{2}.
\end{align}
Define $W_{2j+3}=(w_{2j+3}^1,w_{2j+3}^2)^\top
$ and $b_{2j+2}=(b_{2j+3}^1,b_{2j+3}^2)^\top
$. Therefore, we have
\begin{align*}
    x^{2j+3}_i=\vecsigma(W_{2j+3} x^{2j+2}_i+b_{2j+3}).
\end{align*}
This selection of parameters allows us to ensure that $x_{r_t}^{2j+3}=(0,0)$ for all $t\in\llbracket 1,j+1\rrbracket$. 
This concludes the induction argument.
\medbreak

As we already observed, we need $\hat L_k=2\alpha_k$ layers to compress all the elements of $\mathcal{C}_k$. Consequently, we have shown that for any arbitrary $k\in\llbracket0,M-1\rrbracket$ there exist $z_k\in \R^2$, a depth $\hat L_k\geq 1$, and parameters $\mathcal{W}^{\hat L_k}$ and $\mathcal{B}^{\hat L_k}$ such that \eqref{condition_1_propo1} and \eqref{condition_2_propo1} hold.  Furthermore, we can explicitly construct the parameters by following \eqref{parameter_2} and \eqref{parameter_iterative_compression}.

As a consequence of the Proposition \ref{key_proposition}, there exist $L_1\geq 1$, parameters $\mathcal{W}^{L_1}$, and $\mathcal{B}^{L_1}$, and a sequence of different points $\{z_k\}_{k=0}^{M-1}\subset\R^2$ such that 
\begin{align}\label{phi_l1}
     \phi(\mathcal{W}^{L_1},\mathcal{B}^{L_1},\mathcal{C}_k)=z_k,\qquad \text{for all }k\in\llbracket0,M-1\rrbracket.
\end{align}
Therefore, to compress all classes, the vector-valued $\vecsigma$ function must be applied
\begin{align*}
    \sum_{k=0}^{M-1} 2\alpha_k=2\sum_{k=0}^{M-1} |\mathcal{C}_k|=2N,
\end{align*}
times. In other words, the depth of the neural network has to be $L_1=2N$ to compress all classes.

\subsection{{Data sorting}}
In the previous step, we have shown that we can reduce our dataset $\{x_i\}_{i=1}^{N}{\subset}\R^d$ to a set $\{z_k\}_{k=0}^{M-1}{\subset}\R^2$, in which each element represents a class or label. Without loss of generality, we assume that each $z_k$ is associated with a label $k$.
\smallbreak

In this section, we aim to find $L_2>0$ and parameters $\mathcal{W}^{L_2},\,\mathcal{B}^{L_2}$ such that for a strictly increasing sequence $\{\xi_k\}_{k=0}^{M-1}\subset\R$, we have
\begin{align}\label{goal_step_3}
\phi(\mathcal{W}^{L_2},\mathcal{B}^{L_2},z_k)=\xi_k,\quad \text{for all } k\in \llbracket0,M-1\rrbracket.
\end{align}
Let $\{\beta_\eta\}_{\eta=0}^{M-1}$ be a sequence of positive numbers and $\{\hat\xi_k\}_{k=0}^{M-1}\subset\R$ a strictly increasing sequence. Note that to prove \eqref{goal_step_3}, it is sufficient to show that
\begin{align}\label{goal_induction_step3}
\begin{split}
\phi(\mathcal{W}^{\beta_\eta}&,\mathcal{B}^{\beta_\eta},z_k)=\hat\xi_k, \qquad \text{for all } k\in\llbracket0,\eta\rrbracket,\\
&\phi(\mathcal{W}^{\beta_\eta},\mathcal{B}^{\beta_\eta},z_{\eta+1})= \hat\xi_{M-1},
 \end{split}
\end{align}
for every $\eta\in\llbracket0,M-2\rrbracket$. This is equivalent to asserting that we can order the first $\eta$ points, and place  $z_{\eta+1}$ as the farthest point from zero. Clearly, when $\eta=M-2$ we recover \eqref{goal_step_3}.
\smallbreak

We will prove \eqref{goal_induction_step3} by induction on $\eta$, applying a data structuring process, similar to the \textit{``compression of a single class"} step, and a projection process in which Lemma \ref{proyeccion_de_datos} will be consistently utilized.
\smallbreak

 \noindent {\bf (1) Initial Step: } Our goal is to prove that \eqref{goal_induction_step3} is fulfilled for $\eta=0$. We proceed in several steps.
 \smallbreak
  \noindent  \emph{Projection:} We start by projecting the data to the one-dimensional line. By Lemma \ref{proyeccion_de_datos}, there exist $w_1$ and $b_1$ such that 
\begin{align*}
    z_k^1=\sigma(w_1\cdot z_k+b_1)\in \R,
\end{align*}
satisfies $z_i^1\neq  z_j^1$ for all $i\neq j\in \llbracket0,M-1 \rrbracket$. In the following, when $z_k^l\in \R$ for some $l\geq 1$, we add an extra sub-index $j_k$ in  $z_{k,j_k}^l$ to denote the actual position with respect to the other elements counting from left to right (see Figure \ref{illustration_1}). Clearly, depending on which point $k$ we consider, its position ($j_k$) can be different. However, we will only make the dependence of $j_k$ on $k$ explicit when necessary.
\begin{figure}[H]
    \centering
    \includegraphics[width=0.7\textwidth]{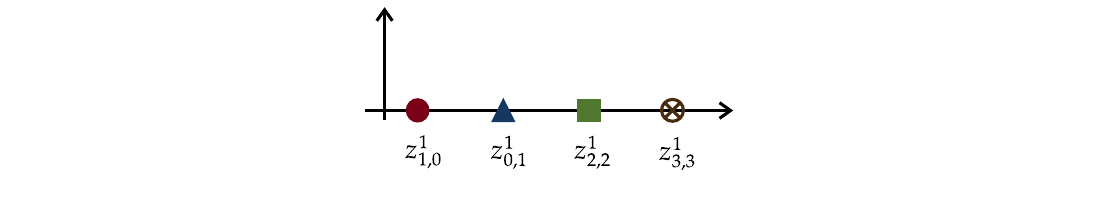}
    \caption{
    Sequence $\{z_{k,j}^1\}_{k=0}^{M-1}\subset\R$. The index $k$ indicates the class to which $z_{k,j}^1$ belongs, while the index $j$ indicates its position from left to right.} \label{illustration_1}
\end{figure}

\noindent\emph{Data structuring:} Let us define
\begin{align*}
w_2^1 = 1,\quad w_2^2 =-1,\quad b_2^1=-\left(\frac{z_{0,j+1}^1+z^1_{0,j}}{2}\right),\quad \text{and}\quad b_2^2=\frac{z_{0,j}^1+z_{0,j-1}^1}{2},
\end{align*}
and denote $W_2=(w_2^1,w_2^2)^\top
$ and $b_2=(b_2^1,b_2^2)^\top
$. Then, we define
$
    z^2_{k}=\vecsigma(W_2 z^1_{k}+b_2).
$
With the above parameters, we ensure that $z_{0}^1=(0,0)$. See Figure \ref{figure_steps1}.
\begin{figure}[H]
    \centering
    \includegraphics[width=0.6\textwidth]{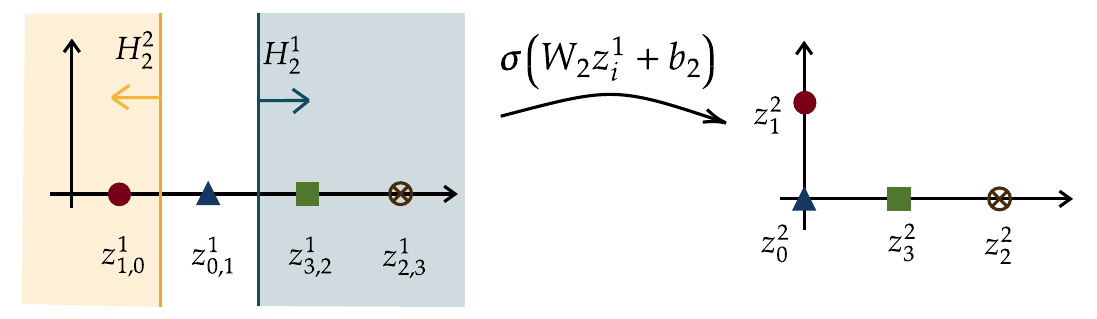}
    \caption{ The hyperplanes $H_2^1$ and $H_2^2$ are defined by the equations $w_2^1\cdot x+b_2^1=0$ and $w_2^2\cdot x+b_2^2=0$, respectively.
   This step is similar to the first one in the compression process (see Figure \ref{figure_paso1})} \label{figure_steps1}
\end{figure}

\noindent\emph{Projection: }Due Lemma \ref{proyeccion_de_datos} there exist $w_3\in \R^2$ and $b_3\in\R$ such that
\begin{align*}
 0\leq w_3\cdot z_0^2+b_3 < w_3\cdot z_{k}^2+b_3,\quad \text{for all } k\in \llbracket1,M-1\rrbracket.
\end{align*}
Define
$
    z_{k,j}^3=\sigma(w_3\cdot z_k^2+b_3).
$
By construction, $z_0$ is the closest point to the hyperplane $ w_3\cdot z+b_3=0$, so it will be the closest point to zero after the projection step. Consequently, $j_0=0$, and the first point has been sorted. It remains to prove that the second point can be moved to the last position.
\begin{figure}[H]
    \centering
    \includegraphics[width=0.6\textwidth]{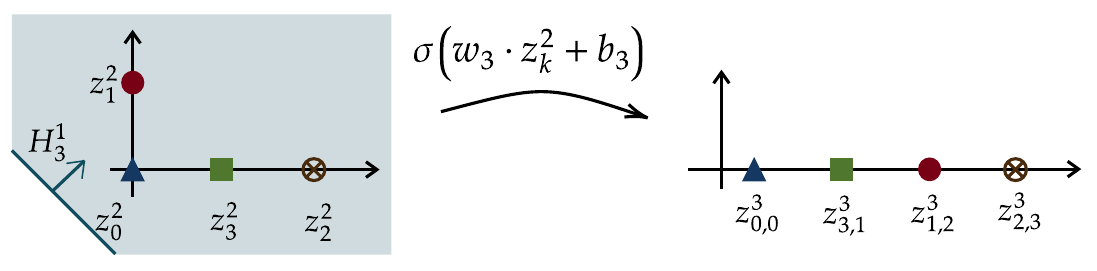}
    \caption{
    The hyperplane $H_3^1$ is defined by the equation $w_3^1\cdot x+b_3=0$ so that $z_0^3$ is the first point from left to right.}\label{illustration_3}
\end{figure}

\noindent\emph{Data structuring:} Let us define
\begin{align*}
w_4^1 = 1,\quad w_4^2 =-1,\quad b_4^1=-\left(\frac{z_{1,j+1}^3+z^3_{1,j}}{2}\right),\quad \text{and}\quad b_4^2=\frac{z_{1,j}^3+z_{1,j-1}^3}{2},
\end{align*}
and denote $W_4=(w_4^1,w_4^2)^\top
$ and $b_4=(b_4^1,b_4^2)^\top
$. Then, define
$
    z^4_{k}=\vecsigma(W_4 z^3_{k}+b_4).
$
With the above parameters, $z_{1}^4=(0,0)$, while $z_0^4$ is the farthest point from the origin on the $y-$axis (see Figure \ref{figure_steps4}).
\begin{figure}[H]
    \centering
    \includegraphics[width=0.6\textwidth]{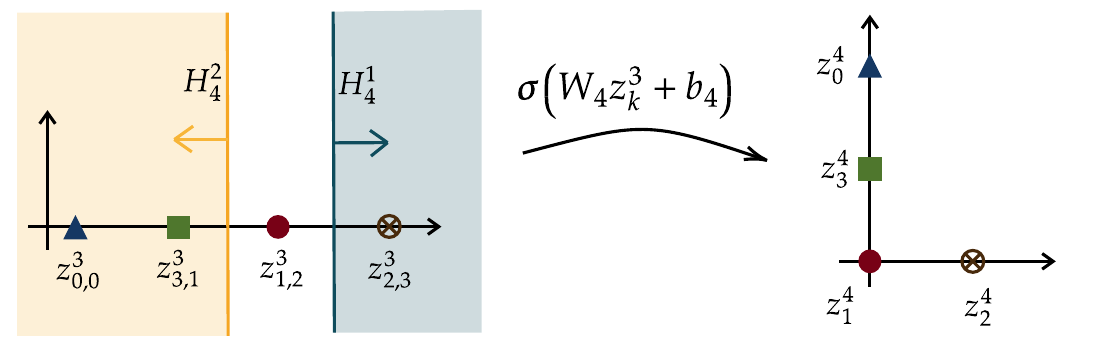}
    \caption{ The hyperplanes $H_4^1$ and $H_4^2$ are defined by the equations $w_4^1\cdot x+b_4^1=0$ and $w_4^2\cdot x+b_4^2=0$, respectively.} \label{figure_steps4}
\end{figure}

\noindent\emph{Projection:} Let us consider a vector $w_5\in\R^2$ and $b_5\in \R$ such that 
\begin{align}\label{conidition_1}
    0\leq w_5\cdot z_0^4+b_5< w_5\cdot z_k^4+b_5,\quad \text{ for all }  k\in \llbracket1,M-1\rrbracket,
\end{align}
and also 
\begin{align}\label{conidition_2}
    w_5\cdot z_k^4+b_5<w_5\cdot z_1^4+b_5,\quad\text{ for all } k\in \llbracket2,M-1\rrbracket.
\end{align}
With these parameter values: 
$
    z_{k,j}^5=\sigma(w_5\cdot z_k^4+b_5).
$
 By construction, we have that \eqref{goal_induction_step3} is satisfied when $\eta=0$, concluding the initial step.  
\begin{remark}\label{remark_iteration_sorting} The following aspects should be highlighted.
\smallbreak

  \noindent {$\bullet$} As observed in Figure \ref{figure_steps5}  the hyperplane $H_5^1$ considered satisfies the conditions \eqref{conidition_1} and \eqref{conidition_2}. They are satisfied whenever $\theta\in(0,\theta^*)$, where $\theta$ is the angle between the $x-$axis and the hyperplane $H_5^1$, and $\theta^*$ is the angle between the $x-$axis and hyperplane containing $z_{0}^4$ and the farthest point of $\{z_k^4\}_{k=1}^{M-1}$ on the $x-$axis.
\smallbreak

  \noindent {$\bullet$} Note that, in the initial step of the proof, we iterate five times to obtain \eqref{goal_induction_step3} with \( \eta = 0 \). But given the configuration  \(\{ z_k^5 \}_{k} \), only two extra steps (specifically, the last data structuring and projection steps) are necessary to satisfy \eqref{goal_induction_step3} with \( \eta = 1 \). Therefore, to sort the first \( \eta\in \llbracket1,M-3\rrbracket \) classes, one requires $ 5 + 2\eta$ iterations.
\smallbreak

\noindent {$\bullet$} By construction, the point to be sorted is always the one that is placed the furthest from the origin. Therefore, it is enough to sort the first $M-1$ points, and the point $M$ will automatically be sorted.
\end{remark}
\begin{figure}[H]
    \centering
    \includegraphics[width=0.6\textwidth]{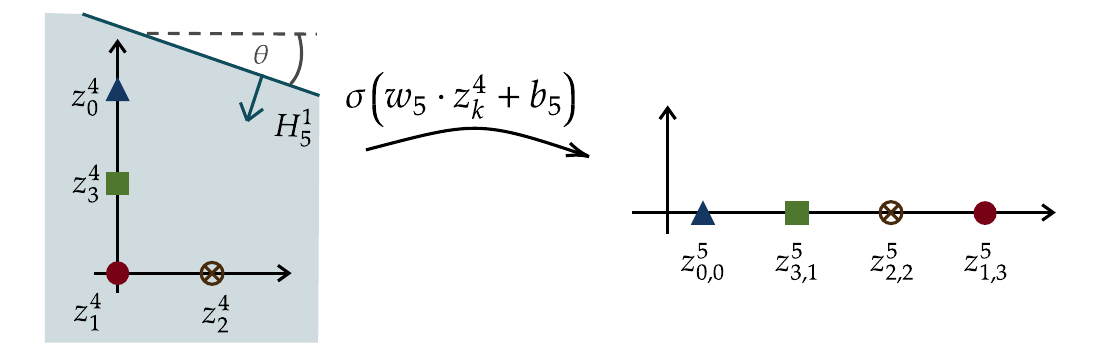}
    \caption{ The hyperplane $H_5^1$ is defined by  $w_5^1\cdot x+b_5^1=0$, so that $j_0=0$ and $j_1=3$.} \label{figure_steps5}
\end{figure}

\noindent {\bf (2) Inductive Step: } Let $\eta\in \llbracket1,M-2\rrbracket$ and $\ell:=5+2(\eta+1)$. Assume that we have sorted the first $\eta$ points of $\{z_{k}^{\ell}\}_{k=0}^{M-1}$, that is $j_k=k$ for $k\in\llbracket0,\eta\rrbracket$. We will show that we can sort one extra element. 
\smallbreak
By construction, we can assume that the element that we have to sort is the farthest one in the $x-$axis, i.e., $j_{\eta+1}=M-1$. 

\noindent\emph {Data structuring:} Let $k_1,\,k_2\in \llbracket\eta,M-1\rrbracket$ be such that $z_{k_1,j_{\eta+2}-1}^{\ell}$ and $z_{k_2,j_{\eta+2}+1}^{\ell}$ are the left and right neighborhood points of $z_{\eta+2,j_{\eta+2}}^{\ell}$, respectively (if $z_{k_2,j_{\eta+2}+1}^{\ell}$ does not exist, it is enough to take $z_{k_2,j_{\eta+2}+1}^{\ell}:=z_{k_1,j_{\eta+2}}^{\ell}+1$). Thus, consider the parameters
\begin{align*}
w_{\ell+1}^1& = 1,\quad w_{\ell+1}^2 =-1,\quad b_{\ell+1}^1=-\left(\frac{z_{k_2,j_{\eta+2}+1}^{\ell}+z_{\eta+2,j_{\eta+2}}^{\ell}}{2}\right),\\
&\qquad \text{and}\quad b_{\ell+1}^2=\frac{z_{\eta+2,j_{\eta+2}}^{\ell}+z_{k_1,j_{\eta+2}-1}^{\ell}}{2}.
\end{align*}
Define $W_{\ell+1}=(w_{\ell+1}^1,w_{\ell+1}^2)^\top
$ and $b_{\ell+1}=(b_{\ell+1}^1,b_{\ell+1}^2)^\top
$ and set
\begin{align*}
    z^{\ell+1}_{k}=\vecsigma(W_{\ell+1} z^{\ell}_{k}+b_{\ell+1}).
\end{align*}
We obtain that $z^{\ell+1}_{\eta+2}=(0,0)$ and $z^{\ell+1}_{\eta+1}=(a,0)$, for some $a>0$,  is the farthest point in the $x-$axis. Moreover, for a decreasing sequence of positive numbers $\{a_{k}\}_{k=0}^{M-1}$, we deduce
\begin{align}\label{caracterizacion_sucesion_induccion}
    z^{\ell+1}_k=(0,a_k),\quad\text{for all }k\in \llbracket0, k_1 \rrbracket. 
\end{align}

\noindent\emph{Projection: } Let $w_{\ell+2}\in \R^2$ and $b_{\ell+2}\in \R$ be such that
\begin{align}\label{r1_induction} 
0\leq w_{\ell+2}\cdot z_k^{\ell+1}+b_{\ell+2} < w_{\ell+2}\cdot z_{k+1}^{\ell+1}+b_{\ell+2}\quad\text{for all }k\in \llbracket0,\eta\rrbracket,\\
w_{\ell+2}\cdot z_k^{\ell+1}+b_{\ell+2} < w_{\ell+2}\cdot z_{\eta+2}^{\ell+1}+b_{\ell+2}\quad\text{for all }k\in \llbracket0,M-1\rrbracket.\label{eq:farthes_inductive}
\end{align}

The assumptions about $W_{\ell+2}$ mentioned above are not restrictive. Condition \eqref{r1_induction} is feasible, as shown by \eqref{caracterizacion_sucesion_induccion}, where the sequence $z_{k}^{\ell+1}$ is arranged on the $y-$axis for $k\in \llbracket0, M-2\rrbracket$ as a decreasing sequence. Furthermore, \eqref{eq:farthes_inductive} is achievable by selecting the appropriate slope of the hyperplane defined by the parameters (see Remark \ref{remark_iteration_sorting}). We then set
$
   z_{k,j}^{\ell+2}=\sigma(w_{\ell+2}\cdot z_k^{\ell+1}+b_{\ell+2}),
$
and, by construction, we have that $j_k=k$ for $k\in \llbracket1,\eta+1\rrbracket$ and $j_{\eta+2}=M-1$, concluding the induction.
\medbreak
Therefore, taking $\eta=M-2$ in   \eqref{goal_induction_step3}, we can sort all the data. Thus for $L_2=5+2(M-2)=1+2M$, there exist $\mathcal{W}^{L_2}$, $\mathcal{B}^{L_2}$ and a strictly increasing sequence $\{\xi_k\}_{k=0}^{M-1}\subset\R$ such that \eqref{goal_step_3} holds.

\subsection{Mapping to the respective labels}
We start from the output of the previous step, where we have shown that for $L_2=1+2M$ there exist parameters $\mathcal{W}^{L_2}$and $\mathcal{B}^{L_2}$ such that for a strictly increasing sequence $\{\xi_k\}_{k=0}^{M-1}\subset\R$, we have
\begin{align}\label{goal_step_3_again}
\phi^{L_2}(x_i)=\phi(\mathcal{W}^{L_2},\mathcal{B}^{L_2},z_k)=\xi_k,\quad \text{for all } k\in \llbracket0,M-1\rrbracket.
\end{align}
Our goal in this step is to prove, again by induction, that there exist $L_3>0$, and parameters $\mathcal{W}^{L_3}$ and $\mathcal{B}^{L_3}$ such that
\begin{align}\label{phi_l3}
\phi^{L_3}(x_i)=\phi(\mathcal{W}^{L_3},\mathcal{B}^{L_3},\xi_k)=k,\quad \text{for all } k \in \llbracket0,M-1\rrbracket.
\end{align} 
\smallbreak

\noindent {\bf (1) Initial Step: } We begin by sorting the first three elements. 
\smallbreak

\noindent \emph{Data projection.} Consider the parameters
\begin{align*}
    w_1=\frac{1}{\xi_1-\xi_0}, \text{ and } b_1= \frac{-\xi_0}{\xi_1-\xi_0},
\end{align*}
and $
    \xi_k^1=\sigma(w_1\cdot\xi_k+b_1).
$
We have  $\xi_0^1=0$ and $\xi_1^1=1$.
\begin{figure}[H]
    \centering
    \includegraphics[width=0.6\textwidth]{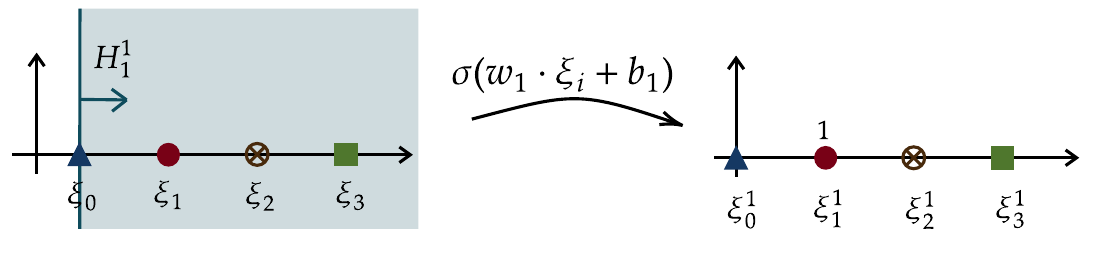}
    \caption{ The hyperplane $H_1^1$ is defined by the equation $w_1\cdot x+b_1=0$. } \label{figure_steps1_label}
\end{figure}

\smallbreak
\noindent \emph{Data structuring. } With the parameters
\begin{align*}
w_2^1 =\frac{2}{\xi_2^1-\xi_1^1} ,\quad w_2^2 =1,\quad b_2^1=-\left(\frac{\xi_2^1+\xi_1^1}{\xi_2^1-\xi_1^1}\right),\quad \text{and}\quad b_2^2=0
\end{align*}
define $w_2=(w_2^1,w_2^2)^\top
$ and $b_2=(b_2^1,b_2^2)^\top
$, and set
$
    \xi_k^2=\vecsigma(W_2 \xi_k^1+b_2).
$
By construction, we deduce 
\begin{align*}
    \xi_0^2=(0,0),\,
    \xi_1^2=(0,1),\,
    \xi_2^2=(1,\xi_2^1),\,
    \xi_k^2=(a_k,c_k),\quad \text{for all }k\in \llbracket3,M\rrbracket,
\end{align*}
where $\{a_k\}_{k=3}^M$, and $\{c_k\}_{k=3}^M$ are two increasing sequences satisfying that $a_3>1$ and $c_3>\xi_2^1$.
\begin{figure}[H]
    \centering
    \includegraphics[width=0.6\textwidth]{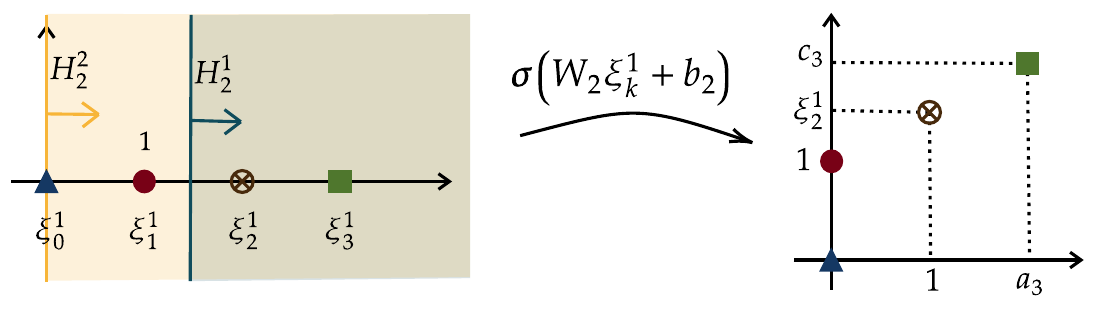}
    \caption{ The hyperplanes $H_2^1$ and $H_2^2$ are respectively defined by the equations $w_2^1 x+b_2^1=0$ and $w_2^2 x+b_2^2=0$.} \label{figure_steps2_label}
\end{figure}

\noindent \emph{Data projection. } Let us consider the parameters 
$
    w_3=(2-\xi_2^1,1)$ and $ b_3=0,
$
and define 
$
    \xi_k^3=\sigma(w_3\cdot\xi_k^2+b_3).
$
Clearly, we have that $\xi_k^3=k$ for all $k\in\llbracket0,2\rrbracket$. 
\begin{figure}[H]
    \centering
    \includegraphics[width=0.6\textwidth]{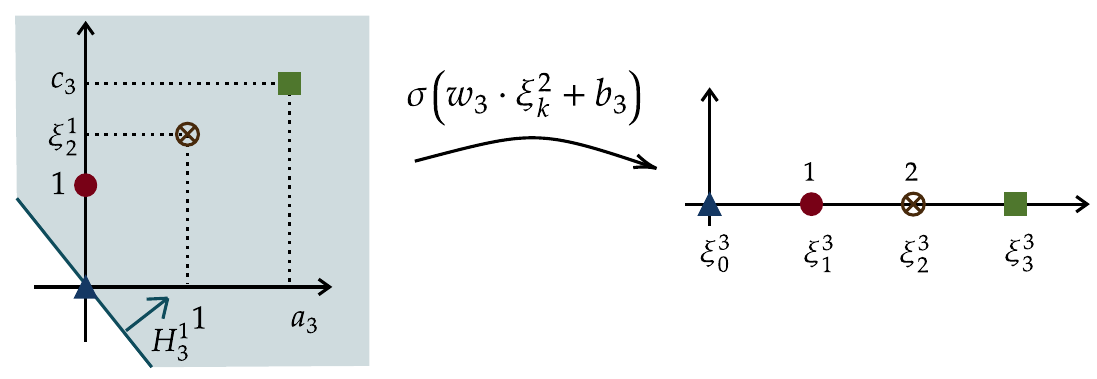}
    \caption{ The hyperplane $H_3^1$ is defined by the equation $w_3\cdot x+b_3=0$. } \label{figure_steps3_label}
\end{figure}

\begin{remark}\label{remark_42}
    We have applied $\vecsigma$ (or $\sigma$) three times to sort the first three points. However, to sort one more point, only two further steps are needed: data structuring and projection. Therefore, to sort $n\geq 2$ points (from the configuration of the previous step), $1+2(n-2)=2n-3$ applications of  $\vecsigma$ (or $\sigma$) are needed.
\end{remark}

\noindent {\bf (2) Inductive Step: }Consider $\eta\in \llbracket0,M-1\rrbracket$ and define $l:=1+2(\eta-2)$. Let us assume that $\xi_{k}^{l}=k$ for all $k\in \llbracket0,\eta\rrbracket$. We will show that there exist parameters such that $\xi_{k}^{l+2}=k$ for all $k\in \llbracket0,\eta+1\rrbracket$. We will proceed in two steps: data structuring and data projection.
\smallbreak
    \emph{Data structuring. }Let us consider the parameters
\begin{align*}
w_{l+1}^1 = \frac{2}{\xi_{\eta+1}^{l}-\xi_{\eta}^{l}},\quad w_{l+1}^2 =1,\quad b_{l+1}^1=-\left(\frac{\xi_{\eta+1}^{l}+\xi_{\eta}^{l}}{\xi_{\eta+1}^{l}-\xi_{\eta}^{l}}\right),\quad \text{and}\quad b_{l+1}^2=0.
\end{align*}
Define $W_{l+1}=(w_{l+1}^1,w_{l+1}^2)^\top
$ and $b_{l+1}=(b_{l+1}^1,b_{l+1}^2)^\top
$, and consider 
\begin{align*}
    \xi_k^{l+1}=\vecsigma(W_{l+1}\xi_k^{l}+b_{l+1})
\end{align*}
so that $\xi_{\eta+1}^{l+1}=(1,\xi_{\eta+1}^{l})$ and
\begin{align*}
     &\xi_k^{l+1}=(0,k),\quad \text{for all } k\in\llbracket0,\eta\rrbracket,\\
     &\xi_k^{l+1}=(a_k,c_k),\quad \text{for all } k\in\llbracket\eta+2,M-1\rrbracket,
\end{align*}
where $\{a_k\}_{k=\eta+2}^{M-1}$ and $\{c_k\}_{k=\eta+2}^{M-1}$ are two sequences of strictly increasing numbers such that $a_k>1$ and $c_k>\xi_{\eta+1}^{l}$ for all $k\geq \eta+2$.

\noindent \emph{Data projection. } Finally, define 
$
\xi_k^{l+2}=\sigma(w_{l+2}\cdot \xi_k^{l+1}+b_{l+2})
$
with
\begin{align*}
    w_{l+2}=((\eta+1)-\xi_{\eta+1}^{l},1),\quad\text{and }\quad b_{l+2}=0
\end{align*}
so that $\xi_k^{l+2}=k$ for all $k\in\llbracket0,\eta+1\rrbracket$. 
\smallbreak
Observe that, in order to drive the $M$ points to their respective labels, we need to apply $L_3=2M-3$ steps.
\medbreak

Summarizing, the input-output map $\phi^L$ of Theorem \ref{multiclass_theorem}, for $N$ points with $M$ classes, is given by the composition of the mappings $\phi_i^{L_i}$ given by \eqref{phi_L0}, \eqref{phi_l1}, \eqref{goal_step_3} and \eqref{phi_l3} respectively, i.e.,
\begin{align*}
    \phi^L=(\phi^{L_3}\circ\phi^{L_2}\circ\phi^{L_1}\circ\phi^{L_0}),
\end{align*}
 with $L=L_0+L_1+L_2+L_3= 1 + 2N + (2M+1) +(2M-3) = 2N+4M-1$. 
\bigbreak

\noindent
\textbf{Proof of Corollary \ref{coro:estimation_norms}:} Now, we continue with the proof of Corollary \ref{coro:estimation_norms}. To this end, let us recall that we have assumed 
$\{x_i,y_i\}\subset B_{R_x}^d(0)\times B_{R_y}^1(0)$, with $R_x,R_y>0$. We proceed step by step to estimate the norms.

\smallbreak
\noindent\textbf{Preconditioning of the data:} Denote by $(\mathcal W_1,\mathcal B_1)=(W_1,b_1)$ the parameters in this step.  Recall that $\|w_1\|_2=1$ and we take $b_1=2R_x$, so that 
\begin{align*}
    \|W_1\|_F=\|w_1\|_2=1,
    \quad
    \|b_1\|_2=2R_x,
    \quad
    \|W_1\|_\infty\le1,
    \quad
    \|b_1\|_\infty=2R_x.
\end{align*}
Hence, we have that
\begin{align}\label{eq:estimation_1_l2}
\tnorm{(\mathcal{W}_1,\mathcal{B}_1)}_2^2
    &=\sum_{j=1}^1\bigl(\|W_j\|_F^2+\|b_j\|_2^2\bigr)
    =1^2+(2R_x)^2
    =1+4R_x^2,
\end{align}
and for the $l^\infty-$ norm we get 
\begin{align}\label{eq:estimation_1_linf}
\tnorm{(\mathcal{W}_1,\mathcal{B}_1)}_\infty
    &=\max_{j\in\{1\}}\{\|W_j\|_\infty,\|b_j\|_\infty\}
    =\max\{1,2R_x\}
    =2R_x.
\end{align}

\smallbreak
\noindent
\textbf{Compression step:} The total depth of the network is $L_2 = 2N$. For each $j \in \{0, \dots, N - 1\}$, we first analyze the  data structuring layer ($W_{2j}$, $b_{2j}$):
\begin{align*}
W_{2j} = \begin{pmatrix} 1 \\ -1 \end{pmatrix}, \quad \text{so that} \quad \|W_{2j}\|_F^2 = 2, \quad \|b_{2j}\|_2^2 \leq 2M_1^2.
\end{align*}
Now, for the Compression layer ($W_{2j+1}$, $b_{2j+1}$):
\begin{align*}
W_{2j+1} = \begin{pmatrix} A_j & 1 \\ -B_j & -1 \end{pmatrix}, \quad \text{with} \quad A_j, B_j \leq 1, \quad \|W_{2j+1}\|_F^2 \leq 4, \quad \|b_{2j+1}\|_2^2 \leq 2M_1^2.
\end{align*}
Here $M_1=\max_i\{\|W_1x_1+b_1\|_{\infty}\}$ and therefore $M_1 \leq 3R_x$. To compute the full norm in this step, we sum over $j$ from $0$ to $N-1$, obtaining
\begin{align}\label{eq:estimation_2_l2}
\nonumber\tnorm{(\mathcal{W}_2, \mathcal{B}_2)}_2^2 &= \sum_{j=0}^{N-1} \left( \|W_{2j}\|_F^2 + \|b_{2j}\|_2^2 + \|W_{2j+1}\|_F^2 + \|b_{2j+1}\|_2^2 \right) \\
&= N(2 + 2M_1^2) + N(4 + 2M_1^2) = N(6 + 4M_1^2).
\end{align}
For the $\ell^\infty$-norm we have that $\|W_j\|_\infty \leq 1,$ and $\|b_j\|_\infty \leq M_1,$ thus
\begin{align}\label{eq:estimation_2_linf}
{\tnorm{(\mathcal{W}_2, \mathcal{B}_2)}_\infty \leq3R_x}.
\end{align}
\medbreak

\noindent
\textbf{Data sorting:} Let $R_z := \max_k \|{z_k}\|_\infty$ being $\{z_k\}$ the output from the compression step. The network depth is $L_3 = 2M + 1$. We consider projection layers indexed by $j \in\{ 0, 2, \dots, 2M\}$ and data structuring layers indexed by $j \in\{1, 3, \dots, 2M - 1\}$. For each projection layer $W_{2j}$ and $b_{2j}$:
\begin{align*}
\|W_{2j}\|_F^2 = 1, \quad \|b_{2j}\|_2^2 \leq R_z^2.
\end{align*}
There are $M + 1$ such layers. For each structuring layer $W_{2j+1}$ and $b_{2j+1}$:
\begin{align*}
\|W_{2j+1}\|_F^2 = 2, \quad \|b_{2j+1}\|_2^2 \leq 2R_z^2.
\end{align*}
There are $M$ such layers. The total norm is
\begin{align*}
\tnorm{(\mathcal{W}_3, \mathcal{B}_3)}_2^2 &= \sum_{j=0}^{M} \left( \|W_{2j}\|_F^2 + \|b_{2j}\|_2^2 \right) + \sum_{j=0}^{M-1} \left( \|W_{2j+1}\|_F^2 + \|b_{2j+1}\|_2^2 \right) \\
&= (M+1)(1 + R_z^2) + M(2 + 2R_z^2) = (3M + 1)(1 + R_z^2).
\end{align*}
To estimate $R_z$, let $z_i^{(0)} = x_i^1$ be the output of Step 1, and assume $\|z_i^{(0)}\|_\infty \leq M_1$ for all $i$. Then each iteration in Step 2 satisfies:
\begin{align*}
\|z_i^{(\ell)}\|_\infty \leq\|W_\ell\|_\infty \cdot \|z_i^{(\ell-1)}\|_\infty + \|b_\ell\|_\infty \leq \|z_i^{(\ell-1)}\|_\infty + M_1.
\end{align*}
By induction over $2N$ layers, we obtain:
\begin{align*}
R_z := \max_i \|z_i^{(2N)}\|_\infty \leq (2N + 1) M_1 \leq 2R_x(2N + 1).
\end{align*}
Using the bound for $R_z$ derived above,
\begin{align}\label{eq:estimation_3_l2}
{\tnorm{(\mathcal{W}_3, \mathcal{B}_3)}_2^2 \leq {(3M + 1)(1 + 4R_x^2(2N + 1)^2)}}.
\end{align}
For the $\ell^\infty$-norm, since all parameters are bounded by $R_z$, we deduce
\begin{align}\label{eq:estimation_3_linf}
{\tnorm{(\mathcal{W}_3, \mathcal{B}_3)}_\infty \leq 2R_x(2N + 1)}.
\end{align}

\textbf{Mapping to the labels:} We denote by $(\mathcal{W}_4, \mathcal{B}_4)$ the collection of parameters used in the final step of the construction. Recall that the total number of layers in this step is $L_4 = 2M - 3$. The first three layers implement the initial projection and positioning of the first three values. All parameters involved in these layers are explicitly defined and uniformly bounded in terms of the geometry of the values $\{\xi_k\}_{k=0}^{M-1}$. Therefore, there exists a constant $C_\xi > 0$, depending only on the relative distance of the data $\{x_i\}_{i=1}^N$, such that
\begin{align*}
    \|W_j\|_F^2 + \|b_j\|_2^2 \leq C_\xi(1+R_y^2),\quad \|W_j\|_\infty, \|b_j\|_\infty \leq \max\{1,C_\xi\},\quad \text{for } j=1,2,3.
\end{align*}
In the inductive step, we have that for each $\eta\in\llbracket2,M-2\rrbracket$, two layers are required to map $\xi_{\eta+1}\mapsto \eta+1$. This is done with the  Data structuring layer, for which we have
Parameters satisfy again
\begin{align*}
    \|W_j\|_F^2 + \|b_j\|_2^2 \leq C_\xi\,(1 + R_y^2),
    \quad
    \|W_j\|_\infty, \|b_j\|_\infty \leq \max\{1,\,C_\xi\}.
\end{align*}
Then we apply the Projection layer with weight $w=(\eta+1-\xi_{\eta+1},1)$, and bias $0$, and hence
\begin{align*}
    \|W_j\|_F^2 = (\eta+1 - \xi_{\eta+1})^2 + 1 \leq (M + R_y)^2, \quad\|b_j\|_2^2 = 0,\quad
    \|W_j\|_\infty &\leq M + R_y,  \quad
    \|b_j\|_\infty = 0.
\end{align*}
Then, putting all together, we have three initial layers plus $(M-3)$ inductive steps. Therefore, we deduce
\begin{align}\label{eq:estimation_4_l2}
    \nonumber\tnorm{(\mathcal{W}_4,\mathcal{B}_4)}_2^2
    &= 3\,C_\xi\,(1 + R_y^2) + (M-3)\bigl[C_\xi\,(1 + R_y^2) + (M + R_y)^2\bigr]\\
    &= M\,C_\xi\,(1 + R_y^2) + (M-3)\,(M + R_y)^2.
\end{align}
Analogously for the $\ell^\infty$ norm, taking the maximum over all layers yields
\begin{align}\label{eq:estimation_4_linf}
    {
    \tnorm{(\mathcal{W}_4,\mathcal{B}_4)}_\infty
    \le \max\bigl\{\max\{1,C_\xi\},\,M + R_y\bigr\}.
    }
\end{align}
Finally, combining estimation \eqref{eq:estimation_1_l2}-\eqref{eq:estimation_2_l2}-\eqref{eq:estimation_3_l2}-\eqref{eq:estimation_4_l2}, we deduce that 
\begin{align*}
\tnorm{(\mathcal W,\mathcal B)}_2^2
&\le
\bigl(1 + 4R_x^2\bigr)
+ N\bigl(6 + 36R_x^2\bigr)
+ (3M + 1)\bigl[\,1 + 9R_x^2(2N+1)^2\,\bigr]\\
&\quad + \bigl[M\,C_\xi(1+R_y^2) + (M-3)(M+R_y)^2\bigr],
\end{align*}
and for the $l^{\infty}-$norm we combine estimations \eqref{eq:estimation_1_linf}-\eqref{eq:estimation_2_linf}-\eqref{eq:estimation_3_linf}-\eqref{eq:estimation_4_linf} to obtain
\begin{align*}
    \tnorm{(\mathcal W,\mathcal B)}_\infty
&=
\max\bigl\{\,2R_x,\;3R_x,\;3R_x(2N+1),\;\max\{C_\xi,\,M+R_y\}\bigr\}.
\end{align*}
In particular, we can guarantee the existence of a constant $C>0$ independent of $N,\,M,\,R_x,\,R_y$ such that
\begin{align*}
\tnorm{(\mathcal W,\mathcal B)}_2
&\le
C(1 + R_x\sqrt{N} + R_x N \sqrt{M} + R_y M),
\end{align*}
and 
\begin{align*}
    \tnorm{(\mathcal W,\mathcal B)}_\infty
&\le
C\bigl(R_x N + M + R_y\bigr).
\end{align*}

\section{Universal approximation theorem}
In this section, we prove the Universal Approximation Theorem in $L^p(\Omega;\R_{+})$ (\Cref{UAT_LP}).

\begin{proof} We proceed according to the \textit{Strategy of the proof} after \Cref{UAT_LP}, in \Cref{sec:main_results}.

\medbreak

\noindent {\bf Step 1 (Hyperrectangles construction):} Let us consider $\mathcal{C}$, the smallest hyperrectangle containing $\Omega$, oriented according to the axes of the canonical basis of $\R^d$. Consider $0<h<1$ and $0<\delta\ll h$. Define an equispaced grid $G_\delta^h\subset \R^d$, of thickness $\delta$, oriented according to the axes of the canonical basis of $\R^d$. Define the family of hyperrectangles $\mathcal{H}=\{\mathcal{H}_i\}_{i=1}^{N_h}$ as $\mathcal{H}=\mathcal{C}\setminus G_\delta^h$. We also consider $\mathcal{H}^{G}:=\{\mathcal{H}^G_i\}_{i=1}^{N^G_h}$ a family of hyperrectangles such that  
\begin{align*}
G_\delta^h=\bigcup_{i=1}^{N_G}\mathcal{H}_i^G,
\end{align*}
see Figure \ref{fig:compres_1}. Note that the family $\mathcal{H}$ depends on $h$, and the family $\mathcal{H}^G$ depends on $h$ and $\delta$; however, this dependency will be omitted to simplify the notation.

The number of hyperrectangles $N_h$ on $\mathcal{H}$ satisfies    
\begin{align}\label{estimation_NG}
    N_{h}\leq h^{-d}C_{\Omega},
\end{align}
with $C_{\Omega}$ a constant depending on $m_d(\Omega)$, where $m_d(\cdot)$ denotes the Lebesgue measure in $\R^d$. Taking into account that the number of edges of a $d$-dimensional hypercube is $2d(d-1)$, the Lebesgue measure $m_d(G^h_\delta)$ of the grid $G_\delta^h$ intersecting $\mathcal{C}$ is bounded by
\begin{align}\label{eq:estimation_measure_G_h_d}
    m_d(G^h_\delta)\leq C_{\Omega, d} \delta (h+\delta)^{d-1}h^{-d}.
\end{align}
Thus, for any $\gamma>0$, taking $\delta=h^{1+\gamma}$, the volume of $G_\delta^h$ tends to zero as $h\to 0$. In the following, we will take $\gamma=p$. 
\begin{figure}
\centering
\subfloat[]{
\includegraphics[width=0.4\textwidth]{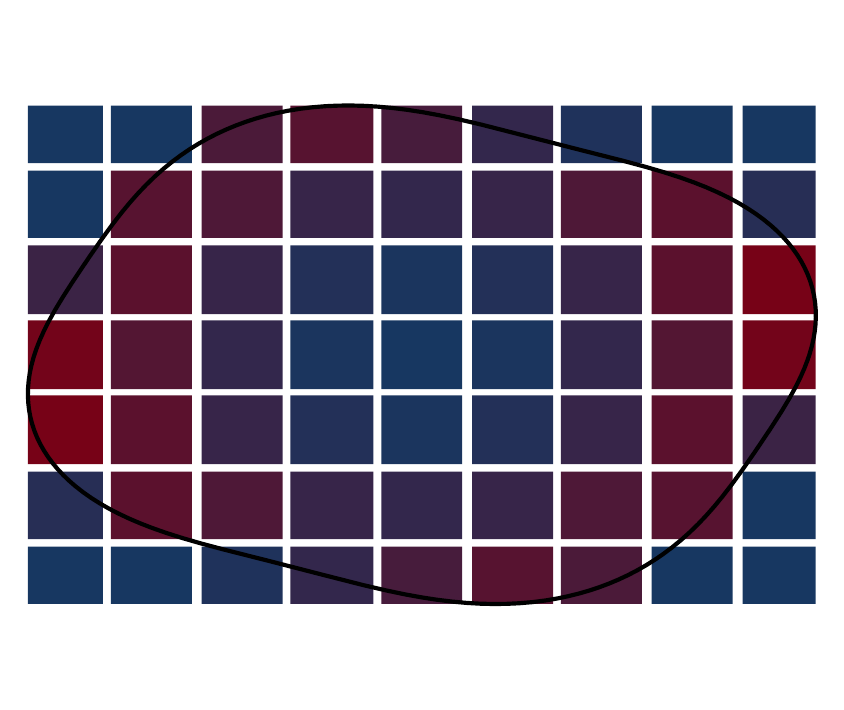}}
\subfloat[]{
\includegraphics[width=0.4\textwidth]{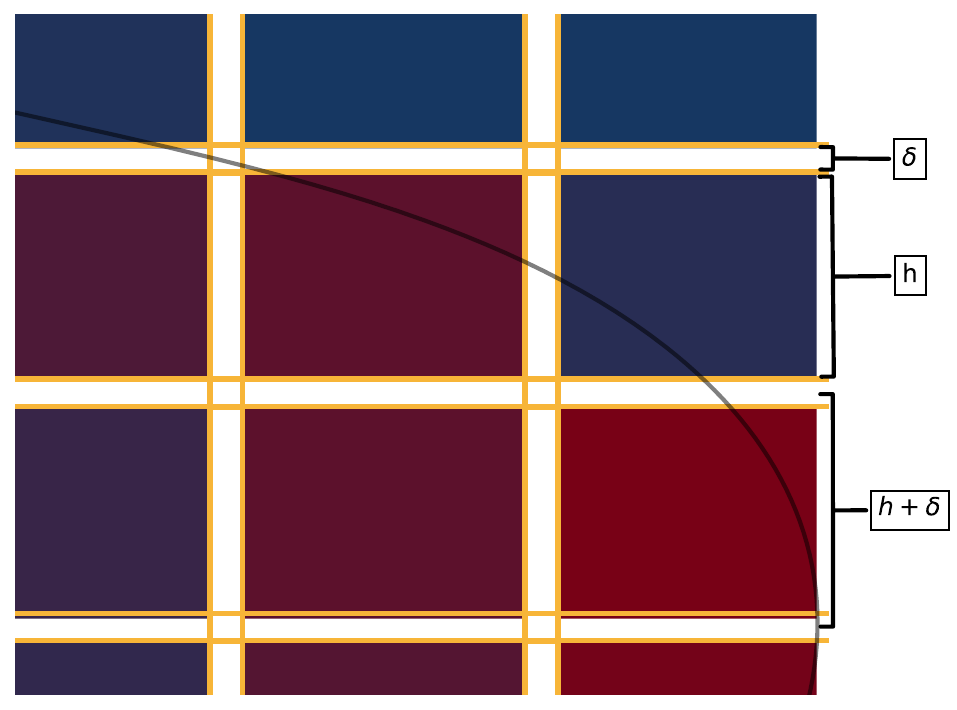}}
\caption{(A) Illustration of the simple function $f_h$ on $\mathcal{H}$, defined in \eqref{eq:simple_function_def}. (B) The main features of $G^h_\delta$ are represented: The mesh thickness $\delta$, the size $h>0$, and the hyperrectangles $\mathcal{H}^G_i$, illustrated with orange boundary.}\label{fig:compres_2}
\end{figure}
\smallbreak

\medbreak

Let us fix a function $f\in L^p(\Omega;\R_{+})$. Extending it by zero, we assume that $f\in L^p(\mathcal{C};\R_{+})$. By the density of simple functions, we know that $f$ can be approximated by a sequence of simple functions. In particular, we can construct a simple function supported on hyperrectangles as follows: Let us consider the constants
\begin{align*}
    f_{1,i}^h:=\frac{1}{m_d(\mathcal{H}_{i})}\int_{\mathcal{H}_i}f(x)\,dx, \quad \text{for }i\in\llbracket1,N_h\rrbracket,
\end{align*}
and 
\begin{align*}
    f_{2,i}^h:=\frac{1}{m_d(\mathcal{H}_{i}^G)}\int_{\mathcal{H}_i^G}f(x)\,dx, \quad \text{for }i\in\llbracket1,N_G\rrbracket.
\end{align*}
That is, $f_{1,i}^h$ and $f_{2,i}^h$ are the average value of the function $f$ in the hyperrectangle $\mathcal{H}_i$ and $\mathcal{H}_i^G$, respectively. Then, we introduce the simple function
\begin{align}\label{eq:simple_function_def}
    f_h(x)=\sum_{i=1}^{N_h} f_{1,i}^h \chi_{\mathcal{H}_i}(x) +\sum_{i=1}^{N_G} f_{2,i}^h \chi_{\mathcal{H}_i^G}(x),
\end{align}
where $\chi_{\mathcal{H}_i}$ denotes the characteristic function on the set $\mathcal{H}_i$, similar for $\mathcal{H}_i^G$. In the following, we denote by $M_{h}>0$ the number of values that the function $f_h$ takes on the family of hyperrectangles $\mathcal{H}$. Note that $M_h\leq N_h$. 

Let us observe that by the Lebesgue differentiation theorem, the sequence $f_h$ approximates $f$ a.e. on $\mathcal{C}$ as $h\to 0$, and therefore, due to the dominated convergence theorem, we have that $\|f-f_h\|_{L^p(\mathcal{C};\R_+)}\to0$ as $h\to 0$. In particular, for all $\varepsilon>0$ there exists $h_1>0$ small enough such that for every $0<h<h_1$ we have 
\begin{align}\label{eq:estima_simple_function}
    \|f-f_{h}\|_{L^p(\mathcal{C};\R_+)}<\varepsilon/2.
\end{align}
Moreover, as shown in \cite[Section 6.2]{MR1689432}, if $f \in W^{1,p}(\Omega; \mathbb{R}_+)$, there exists a constant $C > 0$ independent of $h > 0$ such that
\begin{align}\label{eq:estimation_f_h}
    \|f - f_h\|_{L^p(\Omega; \mathbb{R}_+)} \leq C \max\{\text{diam}(\mathcal{H}), \text{diam}(\mathcal{H^G})\} \|f\|_{W^{1,p}(\Omega; \mathbb{R}_+)}
\end{align}
where $\text{diam}(\mathcal{H}) = \max_{\mathcal{H}_i \in \mathcal{H}} \{\text{diam}(\mathcal{H}_i)\}$. 
Since $\text{diam}(\mathcal{H^G})< \text{diam}(\mathcal{H})$, and $\text{diam}(\mathcal{H})$ is at most $\sqrt{d}h > 0$, \eqref{eq:estimation_f_h} reduces to
\begin{align}\label{eq:estimation_f_h2}
    \|f - f_h\|_{L^p(\Omega; \mathbb{R}_+)} \leq C h \|f\|_{W^{1,p}(\Omega; \mathbb{R}_+)}
\end{align}
Thus, estimate \eqref{eq:estima_simple_function} is ensured by taking 
\begin{align}\label{values_h1}
 h_1 = \frac{\varepsilon}{2C \|f\|_{W^{1,p}(\Omega; \mathbb{R}_+)}}.
\end{align}

        \medbreak
       \noindent  { \bf Step 2 (Approximation of $f_h$ using a neural network):}
         In this step, we will construct a neural network approximating the simple function $f_h$. This is done by mapping the hyperrectangles of $\mathcal{H}$ into the $M_h$ values of $f_{h}$ via a neural network.  
        
        {\bf Step 2.1 (Compresion of one hyperrectanle)}: In the same spirit as the compression process in Section \ref{subsec:compression}, we first show that a single $\mathcal{H}_i$ can be compressed without mixing the other hyperrectangles. This allows compressing the whole family $\{\mathcal{H}_i\}_{i=1}^{N_h}$. 
        
        This is done in two stages. First, we apply a compression process driving the $d-$dimensional hyperrectangle into a $(d+1)-$dimensional Euclidean space, allowing us to drive a hyperrectangle to a point. In the second stage, we project the data to the $d-$dimensional space, keeping the structure of the hyperrectangle.
        
       \medbreak
        
   \noindent  \emph{\bf Step 2.1.1 (First layer):} Let us suppose we want to compress a fixed hyperrectangle $\mathcal{H}_{*}$ that is located on one edge of the hyperrectangle $\mathcal{C}$, and oriented in a canonical direction $e_{\hat{\eta}}$ for some  $\hat{\eta}\in \llbracket1,d\rrbracket$. We consider the following family of hyperplanes      \begin{align}\label{hiper_approximation}  
   \nonumber &H_{\eta}:=\{x\in\R^d\,:\,e_{\eta} x+b_\eta=0\},\quad \text{for } \eta\in\llbracket1,d\rrbracket,\\
    &H_{d+1}=\{x\in\R^d\,:\,-e_{\hat{\eta}} x+b_{\hat{\eta}}^*=0\},
        \end{align}
        and the respective activation regions (see Section \ref{single_hyper})
        \begin{align}\label{eq:regions_uat}
            \nonumber&R_\eta:=\{x\in\R^d\,:\,e_{\eta}x+b_\eta\geq 0 \},\quad \text{for } \eta\in\llbracket1,d\rrbracket,\\           
            &R_{d+1}:=\{x\in\R^d\,:\,-e_{\hat{\eta}}x+b_{\hat{\eta}}^*\geq 0 \}.
        \end{align}
       In particular, observe that \begin{align}\label{eq:empty_regions}
            R_{\hat{\eta}}\cap R_{d+1}  =\varnothing.
        \end{align}
        Here, we have one hyperplane for each canonical direction and one extra hyperplane in the direction $e_{\hat{\eta}}$, so there are in total $d+1$ hyperplanes. The constants $b_k$ are chosen in such a way that the hyperplanes are placed around the hyperrectangle $\mathcal{H}_*$, that is, $b_k$'s are taken such that 
        \begin{equation*}
    \sigma(e_{\eta}x+b_\eta)=0,\quad
\sigma(-e_{\hat{\eta}}x+b_{\hat{\eta}}^*)=0,
\quad \text{for all } x\in \mathcal{H}_{*},\, \eta\in\llbracket1,d\rrbracket.
        \end{equation*}

 Let us apply the map defined by the parameters from the hyperplanes \eqref{hiper_approximation}. This defines the new family of hyperrectangles $\{\mathcal{H}_i^1\}_{i=1}^N$. Namely, denote by $W^1\in\R^{d+1\times d}$ and $b^1\in\R^{d+1}$ the matrices given by
     \begin{align*}
       W^1=\begin{pmatrix}
           e_1| e_2| \dots| e_{\hat{\eta}}|(-1)e_{\hat{\eta}}| \dots| e_d \end{pmatrix}^{\top}, \quad \quad b^1=\begin{pmatrix}
           b_1| b_2| \dots| b_{\hat{\eta}}| b_{\hat{\eta}}^{*}| \dots | b_{d} \end{pmatrix}^{\top}.
     \end{align*}
     Then, the new family of hyperrectangles is given as
     \begin{align}\label{relation_x1_x}
         x_1=\vecsigma(W^1x+b^1)\in \mathcal{H}_i^1,\quad \text{for all }x\in \mathcal{H}_i,\, i\in \llbracket1,N\rrbracket.
     \end{align}     
   An illustration of the family $\{\mathcal{H}_i^1\}_{i=1}^N$ for a $2-$dimensional example is given  in Figure \ref{fig:complete_compress} (Appendix \ref{appendix_2}).

\medbreak

     \noindent \emph{\bf Step 2.1.2 (Second layer):} The previous selection of parameters can drive all points in $\mathcal{H}_{*}$ into $\mathcal{H}^1_{*} =\{{\bf 0}_{d+1}\}$, where ${\bf 0}_{d+1}$ denotes the null vector in $\mathbb{R}^{d+1}$, and since $d+1$ hyperplanes have been employed, our hyperrectangles are carried into a $d+1$-dimensional space. To recursively apply this process, we need to project the hyperrectangles into a $d$-dimensional space, ensuring that the previous steps define an injective mapping for the hyperrectangles, i.e., distinct hyperrectangles are carried to distinct locations without mixing them.
        
      For this purpose, let $I_n \subset \llbracket1,d+1\rrbracket$ be a set of indices with $n\geq1$ elements. We introduce the subregions $\mathcal{S}_{I_n}$ defined as
\begin{align}\label{definition_subregions}
\mathcal{S}_{I_n}=\left\{x\in \mathbb{R}^d: x\in\bigcap_{\eta\in I_n} R_{\eta} \text{ and } x\notin \bigcup_{\eta\in I_n^c} R_{\eta}\right\},
\end{align}
where $I_n^c$ is the complement of $I_n$ with respect to $\llbracket1,d+1\rrbracket$. Observe that if $x\in \mathcal{S}_{I_n}$, then $x$ belongs to $n$ regions. Likewise, if $\mathcal{H}_i\subset \mathcal{S}_{I_n}$, $\mathcal{H}_i$, it is in $n$ regions. Due to the construction of the hyperplanes, a hyperrectangle can belong to at most $d$ regions. See \Cref{fig:compres_3} (A). 
\begin{figure}
    \centering
\includegraphics[scale=0.22]{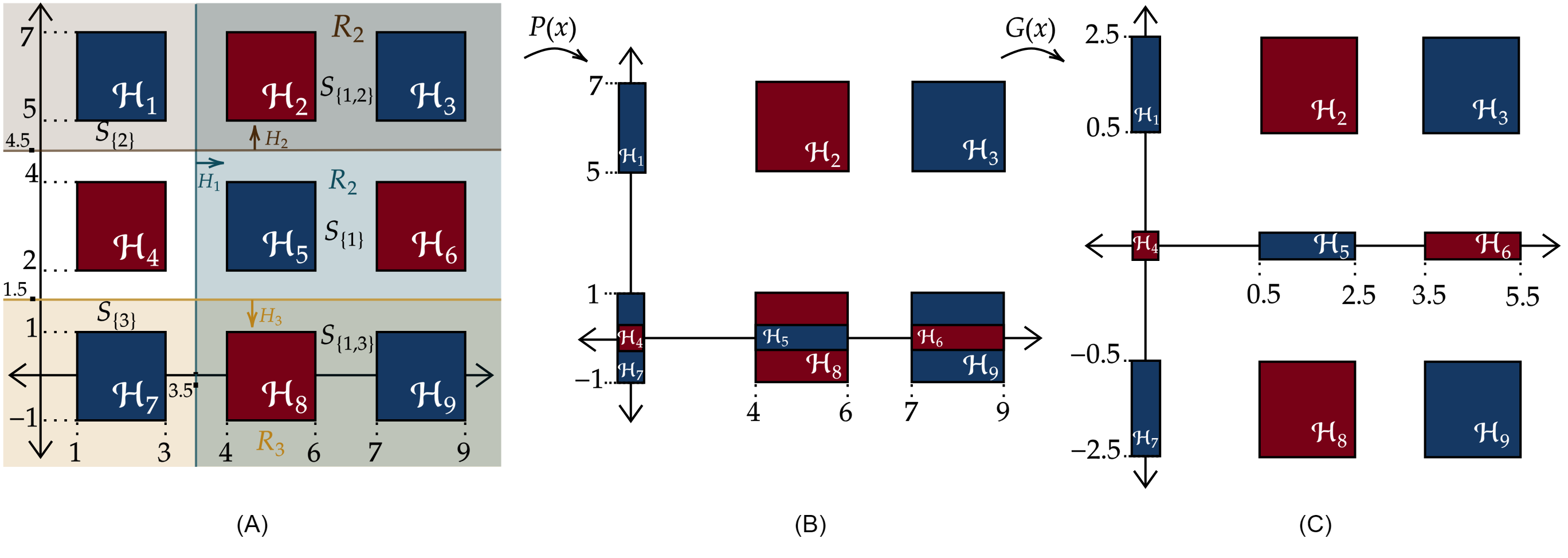}
    \caption{(A) We illustrate how the hyperplanes $H_\eta$ (three of them in this $2-$dimensional setting) enclose the hyperrectangle $\mathcal{H}_{4}$. Additionally, this defines the regions $R_\eta$ and the subregions $\mathcal{S}_{I_n}$ introduced in \eqref{eq:regions_uat} and  \eqref{definition_subregions}, respectively. Here, we can observe that the parameters of $H_1$ are $w_1=(1,0)$ and $b_1=-3.5$, for $H_2$ are $w_2=(0,1)$ and $b_2=-4.5$, and for $H_3$, $w_3=-(0,1)$ and $b_3=1.5$ . (B) We show how the function $P(x)$, defined in \eqref{eq:definition_F}, maps the hyperrectangles. In (B), we keep the same labels for each hyperrectangle as in (A), but it needs to be understood as $P(\mathcal{H}_i)$ for every $i$. The rectangles here, as $\mathcal{H}_1$, illustrate $1-$dimensional hyperrectangles (lines). For $\mathcal{H}_1$, we have that $p_1(x)=p_3(x)=0$ and $p_2(x)=1$ for all $x\in \mathcal{H}_1$, thus $P(x)=(0,x^{(2)})$. The small square illustrates a $0-$dimensional hyperrectangle, i.e., a point. The chosen parameters ensure that $P(\mathcal{H}_4)=\{{\bf0}_d\}$. (C) Here, we illustrate the image of $G$, taking as an input $P(\mathcal{H})$. Observe that $G$ translates the hyperrectangles such that there is no overlapping between them, recovering the same structure of (A), but with some hyperrectangles of smaller dimensions. In particular, those that belong to one subregion in (A) in (C) become lines. The function $F$ introduced in $\eqref{eq:definition_F}$ maps the hyperrectangles in (A) to the hyperrectangles in (C).
     }
    \label{fig:compres_3}
\end{figure}

Additionally, if $x$ belongs to $n$ regions, $\vecsigma(W^1x+b^1)$ is a vector with $n$ non zero coordinates. This motivates the introduction of the pattern activation function $p_\eta:\R^d\to\R$ defined as 
        \begin{align*}
            p_{\eta}(x)=\begin{cases}
                1\quad \text{if }x\in R_\eta,\\
                0\quad \text{if }x\notin R_\eta,
            \end{cases}\quad\text{for every }\eta\in\llbracket1,d+1\rrbracket.
        \end{align*}
        Since \eqref{eq:empty_regions}, we have that either \begin{align}\label{property_p_chiquita}
            p_{\hat{\eta}}(x)=0\quad \text{or}\quad p_{d+1}(x)=0,\quad \text{for every }x\in\R^d.
        \end{align}   
Furthermore, $p_{\eta}(x)=1$ if $x\in\mathcal{S}_{I_n}$ with $\eta\in I_n$, and $p_{\eta}(x)=0$ otherwise.
Let us introduce $F:\R^d\to\R^d$ defined as
\begin{align}\label{eq:definition_F}
F(x)=P(x)+G(x)=\left(\sum_{\eta=1}^{d+1} p_{\eta}(x)x^{(\eta)}e_{\eta}\right) +\left(\sum_{\eta=1}^{d+1}p_\eta(x)b_\eta e_\eta\right).
\end{align}
Figure \ref{fig:compres_3} illustrates how the map $F(x)$ acts on the hyperrectangles $\mathcal{H}$ in a two-dimensional example. The following lemma, proved in \Cref{appendix_2}, gives us important properties of $F$.

\begin{lemma}\label{lemma:all_subregions}
We have that $F(\mathcal{H}_*)= {\bf 0}_{d}$, where ${\bf 0}_{d}$ denotes the null vector in $\mathbb{R}^{d}$. Moreover, for any hyperrectangles $\mathcal{H}_i,\mathcal{H}_j\subset \mathcal{H}$ with $i\neq j$, we have that $F(\mathcal{H}_i)\cap F(\mathcal{H}_j)=\emptyset$. 
\end{lemma}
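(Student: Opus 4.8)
\textbf{Proof plan for Lemma~\ref{lemma:all_subregions}.}

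The plan is to track the effect of $F$ separately on each subregion $\mathcal{S}_{I_n}$, using the fact that on such a subregion the pattern functions $p_\eta$ are constant: $p_\eta \equiv 1$ for $\eta \in I_n$ and $p_\eta \equiv 0$ otherwise. First I would treat the hyperrectangle $\mathcal{H}_*$: by the choice of the biases $b_k$, the $d+1$ hyperplanes $H_1,\dots,H_{d+1}$ are placed so that $\sigma(e_\eta x + b_\eta) = 0$ and $\sigma(-e_{\hat\eta} x + b_{\hat\eta}^*) = 0$ for every $x \in \mathcal{H}_*$. This means $\mathcal{H}_*$ lies in the complement of every region $R_\eta$, so $p_\eta(x) = 0$ for all $\eta \in \llbracket 1, d+1 \rrbracket$ and all $x \in \mathcal{H}_*$. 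Consequently both sums defining $F$ in \eqref{eq:definition_F} vanish identically on $\mathcal{H}_*$, giving $F(\mathcal{H}_*) = {\bf 0}_d$.

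Next I would establish injectivity of $F$ at the level of hyperrectangles. Fix $\mathcal{H}_i$ and $\mathcal{H}_j$ with $i \neq j$. By construction each hyperrectangle of $\mathcal{H}$ is contained in a single subregion $\mathcal{S}_{I_n}$ (the hyperplanes are axis-aligned and positioned on the grid lines, so no $\mathcal{H}_i$ is cut by any $H_\eta$); let $I$ and $J$ be the index sets with $\mathcal{H}_i \subset \mathcal{S}_I$, $\mathcal{H}_j \subset \mathcal{S}_J$. On $\mathcal{S}_I$ the map $F$ restricts to the affine injection $x \mapsto \sum_{\eta \in I}(x^{(\eta)} + b_\eta)\, e_\eta$, which preserves the $\eta$-th coordinate (shifted by $b_\eta$) for $\eta \in I$ and collapses the coordinates $\eta \notin I$ to $0$. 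I would split into two cases. If $I \neq J$: pick $\eta_0$ in the symmetric difference, say $\eta_0 \in I \setminus J$. Then every point of $F(\mathcal{H}_j)$ has $\eta_0$-th coordinate equal to $0$, whereas every point of $F(\mathcal{H}_i)$ has $\eta_0$-th coordinate equal to $x^{(\eta_0)} + b_{\eta_0}$ with $x^{(\eta_0)}$ ranging over the projection of $\mathcal{H}_i$; since $\mathcal{H}_i \subset R_{\eta_0}$ strictly (it is not cut by $H_{\eta_0}$ and is not $\mathcal{H}_*$), this coordinate is bounded away from $0$, so $F(\mathcal{H}_i) \cap F(\mathcal{H}_j) = \emptyset$. (The case $\eta_0 \in J\setminus I$ is symmetric, and one must note $\eta_0 \neq \hat\eta, d+1$ cannot both occur thanks to \eqref{property_p_chiquita}, but this is handled automatically since at most one of $p_{\hat\eta}, p_{d+1}$ is ever nonzero.) If $I = J$: then $\mathcal{H}_i$ and $\mathcal{H}_j$ are distinct hyperrectangles inside the same subregion, hence disjoint axis-aligned boxes, and $F$ acts on $\mathcal{S}_I$ as an affine injection (it is a translation composed with a coordinate projection that is injective on $\mathcal{S}_I$ because the collapsed coordinates $\eta \notin I$ are pinned to fixed values on all of $\mathcal{S}_I$); therefore $F(\mathcal{H}_i)$ and $F(\mathcal{H}_j)$ remain disjoint.

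The main obstacle I anticipate is not the algebra but the bookkeeping needed to justify that $F$ restricted to a subregion is genuinely injective as a map on $\mathcal{S}_{I_n}$ — the projection $x \mapsto \sum_{\eta\in I_n} x^{(\eta)} e_\eta$ is only injective on $\mathcal{S}_{I_n}$ because membership in $\mathcal{S}_{I_n}$ forces the complementary coordinates into fixed bounded intervals determined by the $b_\eta$'s, so two points of $\mathcal{S}_{I_n}$ with the same $I_n$-coordinates need not literally coincide, but their images under $F$ do. What saves the argument is that we only need disjointness of the \emph{images of the hyperrectangles} $\mathcal{H}_i$, not global injectivity of $F$; since each $\mathcal{H}_i$ lives in exactly one subregion and distinct $\mathcal{H}_i$ in the same subregion occupy disjoint ranges of the surviving coordinates (they are a grid partition), the conclusion follows. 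I would close by remarking that the case analysis $I \neq J$ versus $I = J$ exhausts all possibilities and that the strict containment $\mathcal{H}_i \subset R_{\eta_0}$ (as opposed to $\mathcal{H}_i$ touching $H_{\eta_0}$) is guaranteed by the grid construction in Step~1, which keeps every $\mathcal{H}_i$ at positive distance from each hyperplane.
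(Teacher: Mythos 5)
Your overall route is the same as the paper's: you get $F(\mathcal{H}_*)={\bf 0}_{d}$ from the vanishing of all pattern functions on $\mathcal{H}_*$, and you split the disjointness claim according to whether the two hyperrectangles lie in the same subregion or in different ones (the paper isolates the same-subregion case as Lemma~\ref{lemma:single_subregion} and treats different subregions via a separating hyperplane together with the strict inequality $e_k\cdot x+b_k>0$). Within that plan there are two weak points. The smaller one is in your case $I\neq J$: the claim that every point of $F(\mathcal{H}_j)$ has vanishing $\eta_0$-th coordinate is not correct when $\eta_0\in\{\hat\eta,d+1\}$ and the partner index lies in $J$, because $p_{\hat\eta}$ and $p_{d+1}$ feed the \emph{same} coordinate $\hat\eta$ of $F$, so that coordinate can be nonzero on $\mathcal{H}_j$. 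This is repairable by a sign trichotomy (the $\hat\eta$-th coordinate of $F$ is strictly positive on $R_{\hat\eta}$, strictly negative on $R_{d+1}$, and zero when both pattern functions vanish, by \eqref{property_p_chiquita} and the fact that no hyperrectangle touches a hyperplane), but as written the step is false, and your parenthetical does not supply this fix.

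The more serious gap is in the case $I=J$, which is exactly the content of Lemma~\ref{lemma:single_subregion}. You rest the argument on the assertion that distinct hyperrectangles in the same subregion ``occupy disjoint ranges of the surviving coordinates (they are a grid partition)''. Being a grid partition only guarantees disjoint ranges along \emph{some} axis, not along a surviving one: if two boxes in the same subregion differed only in a collapsed coordinate, their images under $F$ as defined in \eqref{eq:definition_F} would coincide and the lemma would fail. What rules this scenario out is geometry you never invoke: $\mathcal{H}_*$ sits on an edge of $\mathcal{C}$ and the $d+1$ hyperplanes \eqref{hiper_approximation} are placed flush around it, so in every collapsed direction the inactive slab inside $\mathcal{C}$ is exactly one hyperrectangle thick; consequently two hyperrectangles in the same subregion have identical ranges in every collapsed coordinate and must be separated in some surviving coordinate, on which $F$ acts as the translation $x^{(\eta)}\mapsto x^{(\eta)}+b_\eta$. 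This one-slab-thickness fact is precisely where the choice of an edge hyperrectangle and of $d+1$ hyperplanes enters the proof, and without it your key step is unjustified (it genuinely fails if, in some direction $\eta\neq\hat\eta$, the inactive half-space of $H_\eta$ inside $\mathcal{C}$ contained two layers of hyperrectangles). With these two repairs, your argument coincides in substance with the paper's proof.
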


For each $\mathcal{H}_i$, there are two possibilities for $F(\mathcal{H}_i)$: it either retains the shape of a hyperrectangle, in which case $P(\mathcal{H}_i)$ is equal to $\mathcal{H}_i$ and $G(\mathcal{H}_i)$ acts as a translation, or it collapses into a lower-dimensional hyperrectangle, where $P(\mathcal{H}_i)$ projects the hyperrectangle and $G(\mathcal{H}_i)$ translates it. In fact, the dimension of the resulting hyperrectangle depends on the number of regions the original hyperrectangle $\mathcal{H}_i$ is associated with. See Figure \ref{fig:compres_3} for an illustration.

 \smallbreak
     Consider  $W^2:=(W^1)^\top \in\R^{d\times d+1}$ as the transpose of $W^1$, and a vector $b^2\in\R^d$ such that 
     \begin{align*}
         e_k(W^2x_1+b^2)>0, \quad \text{for all } x_1\in \mathcal{H}_i^1,\, i\in \llbracket1,N_h\rrbracket,
     \end{align*}
     and for all $k\in \llbracket1,d\rrbracket$. The family of hyperrectangles $\{\mathcal{H}_i^2\}_{i=1}^{N}$ is defined by
    \begin{align}\label{relation_x2_x1}
         x_2=\vecsigma(W^2x_1+b^2)\in \mathcal{H}_i^2,\quad \text{for all }x_1\in \mathcal{H}_i^1,\, i\in \llbracket1,N_h\rrbracket.
     \end{align} 
    Let us note that the hyperrectangles $\{\mathcal{H}_i^2\}_{i=1}^{N}$ are not mixed, that is, for $\mathcal{H}_i,\mathcal{H}_j\in \mathcal{H}$ with $i\neq j$ we have that $\mathcal{H}_i^2\cap \mathcal{H}_j^2=\emptyset$. 
    Furthermore, $\mathcal{H}_{*}$ is mapped to a single point.
    To verify the above, let us observe that  for $x\in\mathcal{H}$, we have 
    \begin{align}\label{H_2}
        x_2=\vecsigma(W^2\vecsigma(W^1x+b^1)+b^2) = W^2\vecsigma(W^1x+b^1)+b^2.
    \end{align}
   Let $x^{(i)}$ denote the $i-$th coordinate of $x$. Using the fact that 
$
        \vecsigma(W^1x^{(1)}+b^1)=p_1(x)(W^1x^{(1)}+b^1),
$
    for every $x\in\mathcal{H}_i$ and given \eqref{property_p_chiquita}, we obtain 
    \begin{align*}
        x_2&=W^2\begin{pmatrix}
            p_1(x)(e_1x+b_1)\\ p_2(x)(e_2x+b_2)\\ \vdots\\ p_{\hat{\eta}}(x)(e_{\hat{\eta}}x+b_{\hat{\eta}})\\ p_{d+1}(x)(-e_{\hat{\eta}}x+b_{\hat{\eta}}^*) \\ \vdots \\p_{d}(x)(e_dx+b_{d})
        \end{pmatrix} +b^2= \begin{pmatrix}
            p_1(x)(e_1x+b_1)\\ p_2(x)(e_2x+b_2)\\ \vdots \\ p_{\hat{\eta}}(x)(e_{\hat{\eta}}x+b_{\hat{\eta}})+p_{d+1}(x)(e_{\hat{\eta}}x-b_{\hat{\eta}}^*)\\ \vdots\\p_{d}(x)(e_dx+b_{d})
        \end{pmatrix} +b^2\\
        &= \left(\sum_{\eta=1}^d p_{\eta}(x)e_\eta x+p_{d+1}(x) e_{\hat{\eta}}x \right) +\left( \sum_{\eta=1}^d p_{\eta}(x)b_{\eta}e_\eta-p_{d+1}(x)b_{\hat{\eta}}^*e_{\hat{\eta}}\right)+ b^2=F(x) +b^2.
    \end{align*}
    The last constant, $b^2$, simply translates all hyperrectangles by the same magnitude. Therefore, due to Lemma \ref{lemma:all_subregions}, we conclude that the transformation $F(\mathcal{H}) + b^2$ does not mix the hyperrectangles, preserves their structure, and maps the hyperrectangle $\mathcal{H}_*$ to $\{\mathbf{0}_d\}$.
    
    For the example shown in \Cref{fig:compres_3}, the hyperrectangles $\{\mathcal{H}_i^2\}_{i=1}^{N_h}$ correspond to those in part (C), but translated to the positive quadrant $\R_+^2$. This translation is carried out by $b^2$, which, in this example, can be chosen as $b^2=(0.1, 2.6)$.

\medbreak
\noindent{\bf Step 2.2 (Compression of all hyperrectangles):} In the previous step, we successfully compressed $\mathcal{H}_*$ into a single point without mixing the other hyperrectangles. However, this process also slightly perturbed the remaining hyperrectangles, transforming them into hyperrectangles of lower dimensions. Specifically, hyperrectangles that were originally $n$-dimensional are now $(n-1)$-dimensional.

Let $E$ represent the set of edges of $\mathcal{C}$. We define $\{E_{i}\}_{i=1}^{d} \subset E$ as a set of orthogonal edges. The family of hyperrectangles $\mathcal{H}^E$ is then given by
$
    \mathcal{H}^E=\{ \mathcal{H}_i\in \mathcal{H}\,:\,\mathcal{H}_i \cap \{E_{i}\}_{i=1}^{d}\neq \emptyset\},
$
that is, the set of hyperrectangles in $\mathcal{H}$ intersecting at least one edge in $\{E_{i}\}_{i=1}^{d}$. We will prove that compressing the hyperrectangles in $\mathcal{H}^E$ into points also compresses all the hyperrectangles in $\mathcal{H}$ into points.
\smallbreak
    \begin{figure}
        \centering
\includegraphics[width=0.92\textwidth]{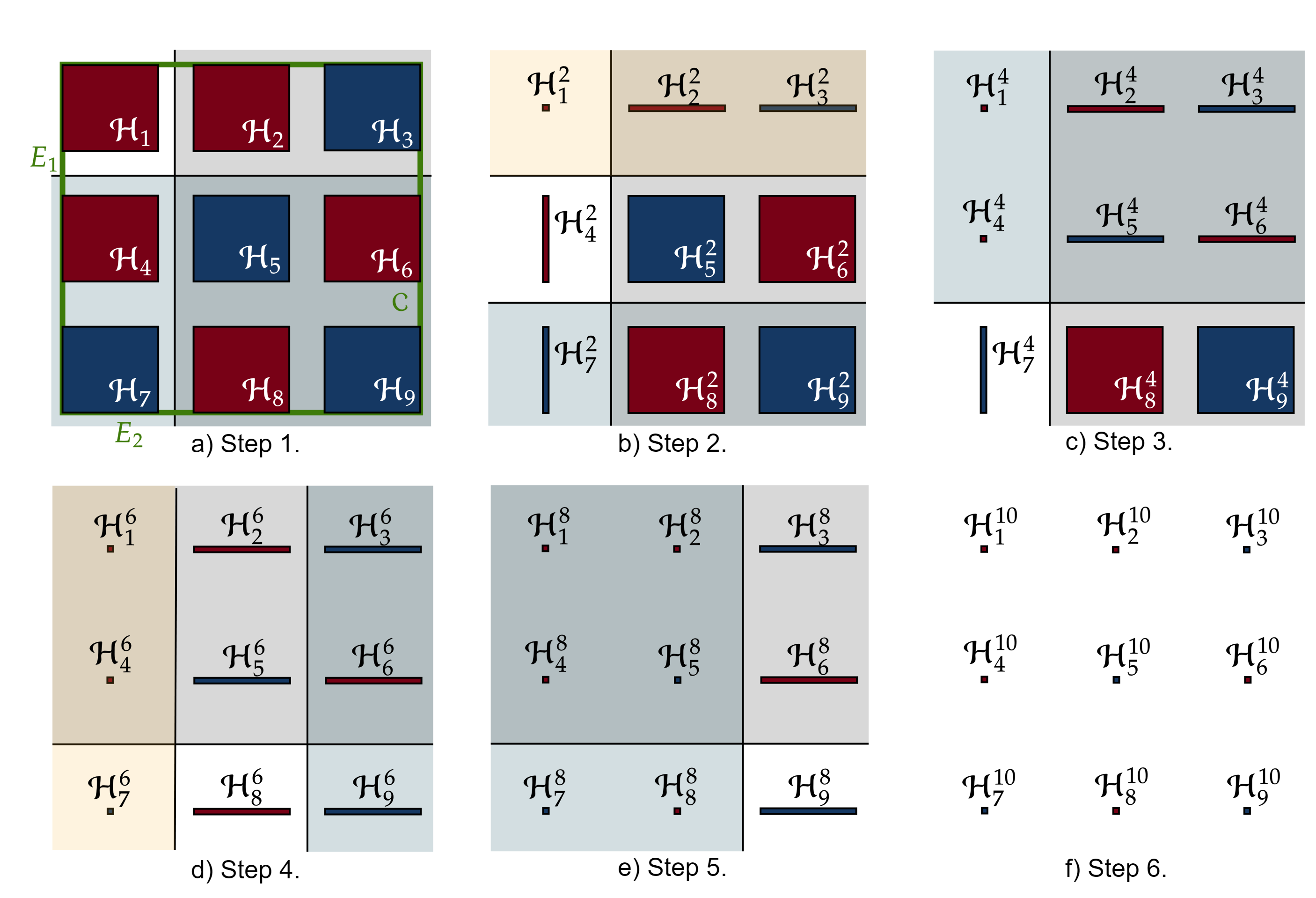}
        \caption{In the figure, we show an example of the compression process following the guidelines from Steps 2.1-2.2. First, in Step 1, we select two edges, $E_1$ and $E_2$ of $\mathcal{C}$, oriented according to different canonical vectors. Thus, $\mathcal{H}^E = \{\mathcal{H}_1, \mathcal{H}_4, \mathcal{H}_7, \mathcal{H}_8, \mathcal{H}_9\}$. Then, we compress the first hyperrectangle, $\mathcal{H}_1$, using two hyperplanes (a two-layer neural network with two neurons in the first layer and two neurons in the second layer). Here, $\mathcal{H}_2$ and $\mathcal{H}_3$ belong to $S_{\{1\}}$, and intersects $\mathcal{H}_4$ and $\mathcal{H}_7$  $S_{\{2\}}$. Next, in Step 2, we compress the second hyperrectangle on $E_1$, corresponding to $\mathcal{H}_4^2$. We use three hyperplanes to compress it (a two-layer neural network with three neurons in the first layer and two neurons in the second layer). Here, $\mathcal{H}_5^2$ and $\mathcal{H}_6^2$ belongs to $S_{\{1\}}$. In Step 3, we continue with $\mathcal{H}_7^4$. After compressing all the hyperrectangles along $E_1$, we have reduced every hyperrectangle to  $(d-1)$-dimensional objects ($d-1 = 1$ in this example). By applying the same process to the hyperrectangles along $E_2$ and continuing with Steps 4 and 5, we complete the compression process.}\label{fig:example_compresion}
    \end{figure}
    
We refer to Figure \ref{fig:example_compresion} for a graphical description of what follows.

Let $E_1$ be one of the selected orthogonal edges of $\mathcal{C}$. By Step 2.1, we can compress any hyperrectangle $\mathcal{H}_1 \in \mathcal{H}^E$ intersecting $E_1$ using a two-layer neural network. Moreover, observe that if a $d$-dimensional hyperrectangle $\mathcal{H}_i$ belongs to a subregion $S{I_n}$ with $n \leq d$, then it collapses into a $(d-1)$-dimensional hyperrectangle. Indeed, the condition $\mathcal{H}_i \subset S_{I_n}$ with $n \leq d$ implies that there exists an index $\bar{\eta} \in \llbracket 1, d+1 \rrbracket$ such that $p_{\bar{\eta}}(x) = 0$ for all $x \in \mathcal{H}_i$. Consequently, the $\bar{\eta}$-th coordinate of every point in $\mathcal{H}_i$ vanishes, collapsing $\mathcal{H}_i$ in to a $(d-1)-$dimensional hyperrectangle.

By applying the previous argument to all hyperrectangles in $\mathcal{H}^E$ intersecting $E_1$, we deduce that each $\mathcal{H}_i \in \mathcal{H}$ eventually enters a region $S_{I_n}$ with $n \leq d - 1$. As a result, there exists a coordinate $\bar{\eta} \in \llbracket 1, d+1 \rrbracket$ such that $p_{\bar{\eta}}(x) = 0$ for all $x \in \mathcal{H}_i$, and thus $\mathcal{H}_i$ collapses into a $(d-1)$-dimensional hyperrectangle. Then, we continue with the hyperrectangles that intersect $E_2$. Since $E_2$ is orthogonal to $E_1$, the vanishing coordinate is now $\tilde{\eta} \neq \bar{\eta}$, and each $(d-1)$-dimensional hyperrectangle is further collapsed into a $(d-2)$-dimensional hyperrectangle. Repeating this process for each edge $E_k$ with $k \in\{ 1, \dots, d\}$, and noting that each step corresponds to the vanishing of a new and different coordinate, we conclude that every $\mathcal{H}_i \in \mathcal{H}$ is eventually mapped to a $0$-dimensional object (a point). This concludes the compression process.

\medbreak

\begin{remark}
If we had not followed the edge-based strategy, we would have needed to compress hyperrectangles located inside $\mathcal{C}$. In the worst case, enclosing a $d$-dimensional hyperrectangle requires $2d$ hyperplanes, implying that the neural network would need to have width $2d$. By restricting the compression to hyperrectangles on the edges of $\mathcal{C}$, we reduce this requirement to $d+1$ hyperplanes per step. Therefore, this strategy allows us to reduce the width of the neural network from $2d$ to $d+1$ neurons per layer.
\end{remark}

\noindent {\bf Step 2.3 (Neuronal network construction):} Let $N_E^j$ be the  number of hyperrectangles on the edge $E_j$ for every $j\in \{1,\dots, d\}$. After applying Step 2, we have applied a two-layer neural network $N_E:=\sum_{j=1}^dN_E^j$ times, and we have constructed a family of parameters $\mathcal{W}^{2N_E}$ and $\mathcal{B}^{2N_E}$ such that
    \begin{align}\label{compression_of_H}
      \phi^{2N_E}(\mathcal{H})  :=\phi(\mathcal{W}^{2N_E},\mathcal{B}^{2N_E},\mathcal{H})=\{x_i\}_{i=1}^{N_h},
    \end{align}
    where $\{x_i\}_{i=1}^{N_h}\subset\R^d$ is a sequence of points.  Then we can apply Theorem \ref{multiclass_theorem} to find the parameters $\mathcal{W}^{L}$ and $\mathcal{B}^L$ with $L=2N_h+4M_h-1$, and an input-output map $\phi^{L}$ of \eqref{discrete_dynamics} such that
    \begin{align}\label{map_to_the_labels}
      \phi^{L}(x_i):=  \phi(\mathcal{W}^{L},\mathcal{B}^L,x_i)=f^h_i,\quad\text{for all }i\in \llbracket1,N_h\rrbracket.
    \end{align}
     Finally, composing the maps given by \eqref{compression_of_H} and \eqref{map_to_the_labels}, i.e., $\phi^{\mathcal{L}}= \phi^{L}\circ \phi^{2N_E}$ with $\mathcal{L}=L+2N_E$, we have that
    \begin{align}\label{final_map_hipercube}
        \phi^{\mathcal{L}}(x):=\phi(\mathcal{W}^\mathcal{L},\mathcal{B}^{\mathcal{L}},x)=f_h(x),\quad\text{for every } x\in \mathcal{H},
    \end{align}
where $\mathcal{W}^\mathcal{L}=\mathcal{W}^L\cup \mathcal{W}^{2N_h}$ and $\mathcal{B}^\mathcal{L}=\mathcal{B}^L\cup \mathcal{B}^{2N_h}$.

    \medbreak
\noindent \textbf{Step 3 (Error estimation):} Let us recall that in {\bf Step 1}, for a given $f\in L^p(\Omega;\R_{+})$ and $\varepsilon>0$, we have chosen $h>0$ small enough such that \eqref{eq:estima_simple_function} holds.

To estimate the error between $f_h$ and $\phi^{\mathcal{L}}$, since $\Omega \subset \mathcal{C}=\mathcal{H}\cup G_\delta^h$, we can write
\begin{align*}
    \|\phi^{\mathcal{L}}(x)-f_h(x)\|^p_{L^p(\Omega;\R_{+})} \leq \int_{\mathcal{H}} |\phi^{\mathcal{L}}(x)-f_h(x)|^p \, dx + \int_{G_{\delta}^h} |\phi^{\mathcal{L}}(x)-f_h(x)|^p \, dx.
\end{align*}
Owing to \eqref{final_map_hipercube}, the first term on the right-hand side vanishes. Thus,
\begin{align}\label{eq:prior_estimation}
    \|\phi^{\mathcal{L}}(x)-f_h(x)\|^p_{L^p(\Omega;\R_{+})} \leq \int_{G_{\delta}^h} |\phi^{\mathcal{L}}(x)-f_h(x)|^p \, dx \leq \int_{G_{\delta}^h} |f_h(x)|^p \, dx + \|\phi^{\mathcal{L}}\|_{L^\infty(G_{\delta}^h)}^p m_d(G^h_\delta).
\end{align}
Let us estimate each term on the right-hand side of \eqref{eq:prior_estimation}. Since $f_h$ converge to $f$ in $L^p(\Omega;\R_{+})$, in particular we have that 
\begin{align*}
    \|f_h-f\|_{L^p(\mathcal{H}_i^G;\R_{+})}= \|f^h_i-f\|_{L^p(\mathcal{H}_i^G;\R_{+})}\to0,\quad\text{ as }h\to0. 
\end{align*}
Due to the triangular inequality, we deduce that 
\begin{align}\label{eq:estimation_f_ih1}
|f_{i}^h|\leq C_1(1+\|f\|_{L^p(\Omega;\R_{+})}),    
\end{align}
 with $C_1>0$ a constant independent of $h$. Then, it follows that
\begin{align*}
    \int_{G_{\delta}^h} |f_h(x)|^p \, dx & = \int_{G_{\delta}^h} \left| \sum_{i=1}^{N_h^G} f_{i}^h \chi_{\mathcal{H}_i^G}(x) \right|^p \, dx =  \sum_{i=1}^{N_h^G} |f_{i}^h|^p m_d(\mathcal{H}_i^G) \\
    & \leq  C_1^p (1+\|f\|_{L^p(\Omega;\R_+)}^p) \sum_{i=1}^{N_h^G} m_d(\mathcal{H}_i^G)  =  C_1^p (1+\|f\|_{L^p(\Omega;\R_+)}^p) m_d(G^h_\delta).
 \end{align*}
Thus, using \eqref{eq:estimation_measure_G_h_d} we have 
\begin{align*}
\int_{G_{\delta}^h} |f_h(x)|^p \, dx \leq C_2\delta (h+\delta)^{d-1} h^{-d},
\end{align*}
with $C_2=  C_1^p C_{\Omega,d}(1+\|f\|_{L^p(\Omega;\R_+)}^p)$. Then, since we are taking 
 $\delta=h^{1+p}$, we obtain 
\begin{align}\label{eq:estimation_uat_1}
\int_{G_{\delta}^h} |f_h(x)|^p \, dx \leq C_2 (1+\|f\|_{L^p(\Omega;\R_+)}^p) h^{p}.
\end{align}
Before continuing, let us consider the following lemma, whose proof can be found in Appendix \ref{appendix_2}.
\begin{lemma}\label{estimation_norm_psi}
    Let  $\phi^{\mathcal{L}}$ be the map defined by \eqref{final_map_hipercube}, and denote by $l_{\mathcal{C}}$ the longest edge of $\mathcal{C}$. Then, for $h<l_{\mathcal{C}}\log(2)/(d+1)$ we have 
    \begin{align*}
            \|\phi^{\mathcal{L}}\|_{L^\infty(\mathcal{C};\R_{+})}\leq C_3\left(1+\delta (h+\delta)+h\right),
    \end{align*}
   with $C_3>0$ a constant depending on $d$, $m_d(\mathcal{C})$, and $\|f\|_{L^p(\Omega;\R_{+})}$.
\end{lemma}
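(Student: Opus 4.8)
The plan is to track how the $L^\infty$-norm of the running state propagates through the two composed networks $\phi^{2N_E}$ and $\phi^{L}$, and then convert a crude bound on $N_h$ and $M_h$ in terms of $h$ and $\delta$ into the stated estimate. First I would recall that $\phi^{\mathcal{L}} = \phi^L \circ \phi^{2N_E}$, where the first block is built in Step~2.1--2.2 from the hyperplane families \eqref{hiper_approximation}, and the second block is the $2$-wide network of Theorem~\ref{multiclass_theorem}. For the first block, each of the $2N_E$ layers has the form $x \mapsto \vecsigma(Wx+b)$ with $W$ a signed selection matrix (entries in $\{-1,0,1\}$, so $\|W\|_\infty \le 1$) and $b$ the vector of offsets $b_\eta, b_\eta^*$, each of which is bounded by the diameter of $\mathcal{C}$, hence by $\sqrt{d}\,l_{\mathcal{C}}$. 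Since $\sigma$ is $1$-Lipschitz and $\sigma(0)=0$, one layer satisfies $\|x^{j}\|_\infty \le \|x^{j-1}\|_\infty + C\,l_{\mathcal{C}}$ for a dimensional constant $C$; iterating over $2N_E$ layers starting from $\|x\|_\infty \le \mathrm{diam}(\mathcal{C})$ for $x \in \mathcal{C}$ gives a bound of the form $\|x_i\|_\infty \le C\,l_{\mathcal{C}}(1+N_E)$ for the compressed points $\{x_i\}_{i=1}^{N_h}$.

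Next I would estimate the second block. By Corollary~\ref{coro:estimation_norms} (applied with $R_x = C\,l_{\mathcal{C}}(1+N_E)$ being the radius of the ball containing $\{x_i\}$, and $R_y = $ the largest label, which here is $\max_i |f_i^h| \le C_1(1+\|f\|_{L^p})$ by \eqref{eq:estimation_f_ih1}) we get $\tnorm{(\mathcal{W}^L,\mathcal{B}^L)}_\infty \le C(R_x N_h + M_h + R_y)$. Propagating the $L^\infty$-norm through the $L = 2N_h+4M_h-1$ layers of this block — again each layer being $1$-Lipschitz up to an additive constant equal to the layerwise $\ell^\infty$ parameter bound times the current state norm plus the bias — yields $\|\phi^{\mathcal{L}}\|_{L^\infty(\mathcal{C})} \le $ (something geometric in $L$ and $\tnorm{\cdot}_\infty$). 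To get the \emph{linear}-in-$h$ form claimed in the lemma rather than an exponential one, I would use that the sorting/mapping stages are designed so that the output coordinate stays comparable to the target labels: more directly, the output of $\phi^{\mathcal{L}}$ restricted to $\mathcal{C}$ takes, on $\mathcal{H}$, exactly the values $\{f_i^h\}$, which are bounded by \eqref{eq:estimation_f_ih1}; on the grid $G_\delta^h$ the state is whatever the affine-ReLU maps produce on points at distance $O(\delta)$ from the $\mathcal{H}_i$'s, so a Lipschitz-in-input estimate on $\phi^{\mathcal{L}}$ with constant $\prod_j \|W_j\|_\infty$ gives the deviation from the nearby $f_i^h$ value to be at most $(\text{Lip})\cdot O(\delta(h+\delta))$ plus the base term $C_1(1+\|f\|_{L^p})$, which is the shape of the claimed bound (absorbing the $h$-term from the diameters of the $\mathcal{H}_i$ and the geometry of $\mathcal{C}$).

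The threshold $h < l_{\mathcal{C}}\log(2)/(d+1)$ should enter precisely to keep the product of layerwise expansion factors bounded: along the first block, when a single edge-hyperrectangle of size $h$ is compressed, the weight $w_3^1$-type entries in \eqref{parameter_1} (ratios of consecutive gaps $a_{r-1}+a_r$ over $a_{s}+a_{s+1}$) stay controlled provided the hyperrectangle side $h$ is small relative to the edge length $l_{\mathcal{C}}$; the factor $\log 2$ and the $d+1$ come from requiring $(1 + C h/l_{\mathcal{C}})^{d+1} \le 2$, i.e. that the per-step multiplicative distortion, raised to the number $d+1$ of hyperplanes used per compression, does not exceed $2$. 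I would make this quantitative by showing $(1+x)^{d+1}\le 2$ whenever $x \le \log(2)/(d+1)$ (since $(1+x)^{d+1}\le e^{(d+1)x}$), with $x = Ch/l_{\mathcal{C}}$. \textbf{The main obstacle} is controlling this multiplicative growth honestly: a naive layer-by-layer Lipschitz bound gives a constant that grows like $\tnorm{(\mathcal{W},\mathcal{B})}_\infty^{\mathcal{L}}$, which is far worse than the linear-in-$h$ bound asserted, so one must exploit the specific structure of the construction — that most layers either collapse coordinates to $0$ (non-expansive) or are designed to place points at prescribed, uniformly bounded target locations — to see that the effective amplification is $O(1+h)$ rather than exponential. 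I would organize this as a separate claim: along each of the four stages of Theorem~\ref{multiclass_theorem}, the state norm is bounded by the relevant $R_x$ or $R_y$ up to a universal constant, because each stage ends with data at explicitly prescribed positions. Assembling these three ingredients — first-block norm growth, second-block norm control via Corollary~\ref{coro:estimation_norms} and the target-value structure, and the role of the threshold on $h$ — gives the stated bound with $C_3$ depending only on $d$, $m_d(\mathcal{C})$, and $\|f\|_{L^p(\Omega;\R_+)}$.
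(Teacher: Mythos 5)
Your proposal has the right skeleton (split $\phi^{\mathcal{L}}=\phi^{L}\circ\phi^{2N_E}$, track the state norm through both blocks, use the boundedness of the values $f_i^h$), but it does not close the very obstacle it flags, and the two quantitative facts that actually make the lemma true are missing. First, your bound for the compression block is too crude: bounding each bias by $\mathrm{diam}(\mathcal{C})$ gives $\|x_i\|_\infty\le C\,l_{\mathcal{C}}(1+N_E)$, and since $N_E\sim l_{\mathcal{C}}/(h+\delta)$ this blows up as $h\to0$; feeding such an $R_x$ into Corollary~\ref{coro:estimation_norms} can therefore never produce an $h$-uniform constant $C_3$. The paper instead uses that, after the first layer, every hyperplane is placed inside the grid and the data are re-centred, so the biases at layer $j\ge2$ are of size $C\bigl(h+\delta/2^{\lfloor j/2\rfloor}\bigr)$; summed over the $2N_E$ layers this gives $N_E\,(h+\delta)=O(l_{\mathcal{C}}d)$, i.e. the bounded estimate \eqref{eq:estimation_norm_phi_NE}. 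Second, you never exploit that the compression halves the inter-datum gaps at each step, so the $N_h$ compressed points are mutually within distance $\delta/2^{2N_E}$; consequently the biases of the $O(N_h+M_h)$ layers of the classification block are of size $C\,\delta\,2^{-2N_E}$, and their total contribution is $O\bigl(N_h\,\delta\,2^{-2N_E}\bigr)$. This is exactly where the hypothesis $h<l_{\mathcal{C}}\log(2)/(d+1)$ is used in the paper: with $N_h\lesssim(h+\delta)^{-d}$ and $N_E\gtrsim l_{\mathcal{C}}/(h+\delta)$ (see \eqref{eq:num_NE}), the inequality $e^{y}\ge\sum_{k=0}^{d+1}y^k/k!$ shows $2^{-C_2/(h+\delta)}(h+\delta)^{-d}\le C(h+\delta)$, producing the $\delta(h+\delta)$ term. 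Your explanation of the threshold — requiring $(1+Ch/l_{\mathcal{C}})^{d+1}\le2$ to control a per-step multiplicative distortion — is not the mechanism and would not yield that term.

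Relatedly, your fallback argument for the grid region is unsupported: a Lipschitz estimate "with constant $\prod_j\|W_j\|_\infty$" is not available, because individual layers such as those in \eqref{parameter_1} have $\ell^\infty$ row sums close to $2$, so that product can be exponential in $N_h$; and the separate claim you propose ("each stage ends with data at prescribed bounded positions") controls $\phi^{\mathcal{L}}$ only on the hyperrectangles $\mathcal{H}$, whereas the lemma requires an $L^\infty$ bound on all of $\mathcal{C}$, including $G^h_\delta$. The paper handles this through the affine representation \eqref{eq:norm_phi_arbitrary} combined with the exponentially small bias scales described above. To repair your proof you would need to (i) replace the $O(l_{\mathcal{C}})$ per-layer bias bound in the first block by $O(h+\delta)$, (ii) prove and use the geometric decay of the data scale to $\delta\,2^{-2N_E}$ before invoking the classification block, and (iii) derive the threshold on $h$ from dominating the polynomial factor $N_h$ by the exponential factor $2^{-2N_E}$, rather than from a per-step distortion count.
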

Let us assume that  $h<l_{\mathcal{C}}\log(2)/(d+1)$. Then, using Lemma \ref{estimation_norm_psi}, \eqref{eq:estimation_measure_G_h_d}, and since $\delta<h^{\gamma+1}$ we deduce
\begin{align}\label{eq:estimation_uat_3}
\|\phi^{\mathcal{L}}\|_{L^\infty(G_{\delta}^h;\R_{+})} m_d(G^h_\delta)&\leq  C_3\left(1+\delta (h+\delta)+h\right) C_{\Omega,d}  \delta (h+\delta)^{d-1} h^{-d}\leq4 C_3 C_{\Omega,d} h^{p}.
\end{align}
  Finally, putting together  \eqref{eq:estimation_uat_1} and \eqref{eq:estimation_uat_3}, we deduce that
\begin{align}\label{eq:estimation_nn_simplefunction}
    \|\phi^{\mathcal{L}} - f_h\|^p_{L^p(\Omega;\R_{+})} & \leq C_2 h^{p}+ (4 C_3 C_{\Omega,d}h^p)^p \leq \hat{C} h^p,
\end{align}
where $\hat{C} = C_2 + (4 C_3 C_{\Omega,d})^p$ is a positive constant depending on $d$, $p$, $m_d(\mathcal{C})$, and $\|f\|_{L^p(\Omega;\R_+)}$. Thus, taking $h < \varepsilon / (2\hat{C}^{1/p} )$, we deduce that 
\begin{align}\label{eq:estimation_network_epsi2}
    \|\phi^{\mathcal{L}} - f_h\|_{L^p(\Omega;\R_{+})} < \frac{\varepsilon}{2}.
\end{align}
Therefore, taking $h < \min\{\varepsilon / (2\hat{C}^{1/p} ), h_1\}$, with $h_1$ defined in \eqref{values_h1}, we have that \eqref{eq:estima_simple_function} and \eqref{eq:estimation_nn_simplefunction} hold, and consequently, we conclude that 
\begin{align*}
    \|f - \phi^{\mathcal{L}}\|_{L^p(\Omega;\R_{+})} < \varepsilon.
\end{align*}

\medbreak
\noindent {\bf Step 4 (Depth Estimation):} To estimate the depth of the neural network, we first estimate  $\min\{\varepsilon / (2\hat{C}^{1/p} ), h_1\}$. Assume that $f\in W^{1,p}(\Omega;\R_+)$. Then, let us observe that in an analogous way to \eqref{eq:estimation_f_h2}, we can deduce that 
\begin{align}\label{eq:estimation_f_h_3}
    |f_i^h|\leq \|f\|_{L^{p}(\Omega;\R_{+})} +h\|f\|_{W^{1,p}(\Omega;\R_{+})}.
\end{align}
Therefore, using \eqref{eq:estimation_f_h_3} instead \eqref{eq:estimation_f_ih1}, we have $C_2\leq  C_1^p C_{\Omega,d}\|f\|_{W^{1,p}(\Omega;\R_+)}^p$. Moreover, due to Lemma \ref{estimation_norm_psi}, we have that $C_3= C\|f\|_{W^{1,p}(\Omega;\R_+)}^p$. Consequently, we deduce that the constant $\hat{C}$ can be taken as
\begin{align*}
    \hat{C}\leq ( C_1^p C_{\Omega,d} +  4 C C_{\Omega,d})\|f\|_{W^{1,p}(\Omega;\R_+)}^p =: \tilde{C}^p\|f\|_{W^{1,p}(\Omega;\R_+)}^p
\end{align*}
Consequently, using \eqref{values_h1} we have
\begin{align*}
    \min\left\{\frac{\varepsilon}{2\hat{C}^{1/p} }, h_1\right\}& = \min\left\{\frac{\varepsilon}{2\hat{C}^{1/p} }, \frac{\varepsilon}{2C \|f\|_{W^{1,p}(\Omega; \mathbb{R}_+)}}\right\}\geq \frac{\varepsilon}{2\|f\|_{W^{1,p}(\Omega; \mathbb{R}_+)}}\min\left\{\frac{1}{\tilde{C} }, \frac{1}{ C }\right\}\\
    &=:\frac{\varepsilon C_5}{2\|f\|_{W^{1,p}(\Omega; \mathbb{R}_+)}}.
\end{align*}
Then, in particular, we take
\begin{align}\label{eq:h_for_the_convergence}
    h= \frac{\varepsilon C_5}{2\|f\|_{W^{1,p}(\Omega; \mathbb{R}_+)}}.
\end{align}
Now, due to Step 2.3, we know that $\mathcal{L}=2N_h+4M_h-1+2N_E$. Moreover, observe that $N_E$ and $N$ can be estimated by
 \begin{align}\label{eq:num_NE}
      N_E \leq d\left\lceil \frac{l_{\mathcal{C}}}{h+\delta} \right\rceil,\quad\text{and} \quad N_h\leq d\left\lceil \frac{l_{\mathcal{C}}}{h+\delta} \right\rceil^d.
 \end{align}
with $\delta = h^{1+p}$. Then, using $M_h\leq N_h$ and the estimations \eqref{eq:num_NE} and \eqref{eq:h_for_the_convergence}, we deduce the upper bound for the depth
\begin{align*}
    \mathcal{L} \leq C_6\left(\|f\|_{W^{1,p}(\Omega;\mathbb{R}_+)}^{d}\varepsilon^{-d} + \|f\|_{W^{1,p}(\Omega;\mathbb{R}_+)}\varepsilon^{-1}+1\right),
\end{align*}
where $C_6$ is a positive constant that depends on $m_d(\mathcal{C})$, $p$, and $d$. This concludes the proof of \eqref{upper_bound_L_introl}.
\end{proof}

\section{Proof of the training theorems}\label{sec:proo_training}
In this section, we provide the proof of Theorems \ref{th:estima_optimal _control_Jlambda} and \ref{th:estimation_functional_dif_activation} and Corollary \ref{coro:convergence_gelu_relu}.

\begin{proof}[Proof of Theorem \ref{th:estima_optimal _control_Jlambda}]
By the definition of minimizer, we have
\begin{align}\label{eq:estimationfunctionallambda}
\nonumber\mathcal{J}_{\lambda}(\mathcal{W}_\lambda^L,\mathcal{B}_\lambda^L)&\leq \mathcal{J}_{\lambda}(\mathcal{W}^L_*,\mathcal{B}^L_*)= \lambda\tnorm{(\mathcal{W}_*^L,\mathcal{B}_*^L)}^2_2+\frac{1}{N}\sum_{i=1}^N\loss\left(\phi(\mathcal{W}_*^L,\mathcal{B}^L_*,x_i), y_i\right)\\
&=\lambda\tnorm{(\mathcal{W}_*^L,\mathcal{B}_*^L)}^2_2+ \mathcal{J}_{0}(\mathcal{W}^L_*,\mathcal{B}^L_*)=\lambda\tnorm{(\mathcal{W}_*^L,\mathcal{B}_*^L)}^2_2,
\end{align}
where we have used \eqref{eq:functional_zero}. Now, by the definition of the functional $\mathcal{J}_\lambda$ and \eqref{eq:estimationfunctionallambda}, we obtain  
\begin{align}\label{eq:pre_lambda_to_zero}
    \frac{1}{N}\sum_{i=1}^N\loss\left(\phi(\mathcal{W}^L_\lambda,\mathcal{B}^L_\lambda,x_i), y_i\right)\leq  \mathcal{J}_{\lambda}(\mathcal{W}^L_\lambda,\mathcal{B}_\lambda^L)\leq \lambda\tnorm{(\mathcal{W}_*^L,\mathcal{B}_*^L)}^2_2.
\end{align}
Now, due to Corollary \ref{coro:estimation_norms}, we know that $\tnorm{(\mathcal{W}_*^L,\mathcal{B}_*^L)}_2$ can be uniformly bounded with respect to $\lambda$. Therefore, taking $\lambda \to 0$ in \eqref{eq:pre_lambda_to_zero}, we conclude \eqref{eq:limit_lambda_to_zero}. 

Next, by the definition of $\mathcal{J}_\lambda$, we obtain
\begin{align}\label{eq:estimation_sec_lambda_para}
\lambda\tnorm{(\mathcal{W}^L_\lambda,\mathcal{B}^L_\lambda)}_2\leq\mathcal{J}_\lambda(\mathcal{W}^L_\lambda,\mathcal{B}^L_\lambda)\leq \lambda\tnorm{(\mathcal{W}_*^L,\mathcal{B}_*^L)}^2_2.
\end{align}
Then, dividing \eqref{eq:estimation_sec_lambda_para} by $\lambda>0$, we deduce that the sequence $\{{(\mathcal{W}_\lambda^L,\mathcal{B}_\lambda^L)}\}_{\lambda>0}$ is bounded by $\tnorm{(\mathcal{W}_*^L,\mathcal{B}_*^L)}_2$. Moreover, the Bolzano–Weierstrass Theorem ensures that this sequence has a subsequence that converges to some $(\mathcal{W}_0^L,\mathcal{B}_0^L)$. Denoting such a subsequence again by $\{{(\mathcal{W}_\lambda^L,\mathcal{B}_\lambda^L)}\}_{\lambda>0}$, we observe that 
\begin{align}\label{eq:estimation_smart}
   \mathcal{J}_0(\mathcal{W}^L_\lambda,\mathcal{B}^L_\lambda) = \mathcal{J}_\lambda(\mathcal{W}^L_\lambda,\mathcal{B}^L_\lambda)-\lambda\tnorm{(\mathcal{W}^L_\lambda,\mathcal{B}^L_\lambda)}_2\leq\mathcal{J}_\lambda(\mathcal{W}^L_\lambda,\mathcal{B}^L_\lambda).
\end{align}
Then, combining \eqref{eq:estimationfunctionallambda} and \eqref{eq:estimation_smart}, and using the continuity of $\mathcal{J}_0$, we get 
\begin{align*}
\mathcal{J}_0(\mathcal{W}_0^L,\mathcal{B}_0^L) = \mathcal{J}_0\left(\lim_{\lambda\to0} (\mathcal{W}^L_\lambda,\mathcal{B}^L_\lambda)\right) =   \lim_{\lambda\to0} \mathcal{J}_0(\mathcal{W}^L_\lambda,\mathcal{B}^L_\lambda) = 0,
\end{align*}
concluding that $(\mathcal{W}_0^L,\mathcal{B}_0^L)$ is a minimizer of $\mathcal{J}_0$. Now, let us consider
\begin{align*}
(\tilde{\mathcal{W}}_0^L,\tilde{\mathcal{B}}_0^L) \in \argmin\left\{\tnorm{(\mathcal{W}^L,\mathcal{B}^L) }_2 \,:\,\mathcal{J}_0(\mathcal{W}^L,\mathcal{B}^L)=0 \right\},
\end{align*}
that is, $(\tilde{\mathcal{W}}_0^L,\tilde{\mathcal{B}}_0^L)$ is a parameter of minimal norm minimizing $\mathcal{J}_0$. In particular, since $(\tilde{\mathcal{W}}_0^L,\tilde{\mathcal{B}}_0^L)$ is a minimizer of $\mathcal{J}_0$, \eqref{eq:uniform_estimatin_control_lambda} implies that 
\begin{align}\label{eq:minimal_norm_lambda}
\tnorm{(\mathcal{W}_\lambda^L,\mathcal{B}_\lambda^L)}_2\leq \tnorm{(\tilde{\mathcal{W}}_0^L,\tilde{\mathcal{B}}_0^L)}_2,\quad \text{for all }\lambda>0.
\end{align}
Taking $\lambda \to 0$ in \eqref{eq:minimal_norm_lambda}, we obtain  
\begin{align}\label{eq:minimal_norm}
\tnorm{(\mathcal{W}_0^L,\mathcal{B}_0^L)}_2\leq \tnorm{(\tilde{\mathcal{W}}_0^L,\tilde{\mathcal{B}}_0^L)}_2.
\end{align}
If \eqref{eq:minimal_norm} holds with strict inequality, then we contradict the fact that $(\tilde{\mathcal{W}}_0^L,\tilde{\mathcal{B}}_0^L)$ is a parameter of minimal norm. Therefore, we must have $$\tnorm{(\mathcal{W}_0^L,\mathcal{B}_0^L)}_2= \tnorm{(\tilde{\mathcal{W}}_0^L,\tilde{\mathcal{B}}_0^L)}_2,$$
that is, $(\mathcal{W}_0^L,\mathcal{B}_0^L)$ is also a parameter of minimal norm that minimize $\mathcal{J}_0$.
\end{proof}

\begin{proof}[Proof of Theorem \ref{th:estimation_functional_dif_activation}]
We begin by analyzing the deviation between $x_i^j$ and $\hat x^{j}_i$, the solutions of \eqref{discrete_dynamics} and \eqref{eq:discrete_dynamics_different_activation} at layer $j$. We have
\begin{align}\label{eq:recursive_inequality}
\nonumber \|x^j_i-\hat x_i^j\|& \leq \|\hat\vecsigma_j((\hat{W}_j\hat x_i^{j-1}+\hat{b}_j) - \vecsigma_j(W_j\hat x_i^{j-1}+b_j)\| + \|\vecsigma_j(W_j \hat x_i^{j-1}+b_j) - \vecsigma_j(W_jx_i^{j-1}+b_j)\|\\
&\leq \nu_j + \|W_j\|\|x^{j-1}_i - \hat x_i^{j-1}\| \leq \nu_j + \tnorm{(\mathcal{W}_*^L,\mathcal{B}_*^L)}_2 \|x^{j-1}_i - \hat x_i^{j-1}\|,
\end{align}
where we have used the fact that the radii $R_j$ defined in \eqref{eq:difinition_radios_error} are large enough to control the deviation between the activations at each layer. We also used the Lipschitz continuity of the activation functions on compact sets.

Since $\|x^0_i - \hat x_i^0\| = 0$, an iterative application of \eqref{eq:recursive_inequality} yields
\begin{align*}
\|\hat \phi(\hat{\mathcal W}_\lambda^L,\hat{\mathcal{B}}_\lambda^L, x_i) - \phi({\mathcal W}^L_*,{\mathcal{B}}^L_*, x_i)\|^2 &= \|x^{L}_i - \hat x_i^{L}\|^2 \leq \left( \sum_{j=1}^L \nu_j \tnorm{(\mathcal{W}_*^L,\mathcal{B}_*^L)}_2^{L-j} \right)^2\\
&\leq \|\nu\|_2^2 \sum_{j=1}^L \tnorm{(\mathcal{W}_*^L,\mathcal{B}_*^L)}_2^{2(L-j)} \\
&= \|\nu\|_2^2 \left( \frac{\tnorm{(\mathcal{W}_*^L,\mathcal{B}_*^L)}_2^{2L} - 1}{\tnorm{(\mathcal{W}_*^L,\mathcal{B}_*^L)}_2^2 - 1} \right) =: \mathscr{E}(\nu,\mathcal{W}_*^L,\mathcal{B}_*^L),
\end{align*}
where we used the Cauchy–Schwarz inequality and the formula for the sum of a geometric series.

Now, consider the following compact set
\begin{align*}
    \mathcal{K}_{\nu} := \left\{(z,y)\in \R^m \times \R^m \,:\, \|z\|\leq \mathscr{E}(\nu,\mathcal{W}_*^L,\mathcal{B}_*^L),\; y\in \{y_i\}_{i=1}^N\right\},
\end{align*}
and let us define
\begin{align}\label{eq:definitionA_error}
    \mathscr{A}_{\loss}(\nu,\mathcal{W}_*^L,\mathcal{B}_*^L) := \max_{(z,y)\in \mathcal{K}_{\nu}} \loss(z+y,y).
\end{align}
Due to the continuity and nonnegativity of $\loss$, it follows that $\mathscr{A}_{\loss} < \infty$ and $\mathscr{A}_{\loss} \geq 0$. Moreover, as $\|\nu\|_2 \to 0$, we have $\mathscr{E} \to 0$, and hence the compact set $\mathcal{K}_{\nu}$ converges to
\begin{align*}
    \mathcal{K}_0 = \left\{(z,y)\in \R^m \times \R^m \,:\, \|z\|\leq 0,\; y\in \{y_i\}_{i=1}^N\right\}.
\end{align*}
Consequently, we deduce that 
\begin{align*}
    \mathscr{A}_{\loss}(0,\mathcal{W}_*^L,\mathcal{B}_*^L) = \max_{(z,y)\in \mathcal{K}_0} \loss(y,y) = 0,
\end{align*}
since $\loss(y,y) = 0$. Thus, $\mathscr{A}_{\loss}(\nu,\mathcal{W}_*^L,\mathcal{B}_*^L) \to 0$ as $\|\nu\|_2 \to 0$.

Now, by the definition of minimizer and using \eqref{eq:definitionA_error}, we obtain
\begin{align*}
\hat{\mathcal{J}}_{\lambda}(\hat{\mathcal{W}}^L_\lambda,\hat{\mathcal{B}}^L_\lambda) &\leq \hat{\mathcal{J}}_{\lambda}({\mathcal{W}}^L_*,{\mathcal{B}}^L_*) = \lambda \tnorm{({\mathcal{W}}^L_*,{\mathcal{B}}^L_*)}^2_2 + \frac{1}{N} \sum_{i=1}^N \loss\left( \hat\phi({\mathcal{W}}^L_*,{\mathcal{B}}^L_*,x_i), y_i \right) \\
&= \lambda \tnorm{({\mathcal{W}}^L_*,{\mathcal{B}}^L_*)}^2_2 + \frac{1}{N} \sum_{i=1}^N \loss\left( \hat\phi({\mathcal{W}}^L_*,{\mathcal{B}}^L_*,x_i) - \phi({\mathcal{W}}^L_*,{\mathcal{B}}^L_*,x_i) + \phi({\mathcal{W}}^L_*,{\mathcal{B}}^L_*,x_i), y_i \right) \\
&= \lambda \tnorm{({\mathcal{W}}^L_*,{\mathcal{B}}^L_*)}^2_2 + \frac{1}{N} \sum_{i=1}^N \loss\left( \left[ \hat\phi({\mathcal{W}}^L_*,{\mathcal{B}}^L_*,x_i) - \phi({\mathcal{W}}^L_*,{\mathcal{B}}^L_*,x_i) \right]+y_i , y_i \right) \\
&\leq \lambda \tnorm{({\mathcal{W}}^L_*,{\mathcal{B}}^L_*)}^2_2 + \frac{1}{N} \sum_{i=1}^N \mathscr{A}_{\loss}(\nu,\mathcal{W}_*^L,\mathcal{B}_*^L) = \lambda \tnorm{({\mathcal{W}}^L_*,{\mathcal{B}}^L_*)}^2_2 + \mathscr{A}_{\loss}(\nu,\mathcal{W}_*^L,\mathcal{B}_*^L),
\end{align*}
which concludes the proof.
\end{proof}

\begin{proof}[Proof of Corollary \ref{coro:convergence_gelu_relu}]
Let us analyze the behavior of $\sigma_\varepsilon$ when $\varepsilon\to0$. We first observe that 
    \begin{align*}
        \lim_{u\to0^+ } \erf\left(\frac{1}{u}\right) = 1,\quad \text{and}\quad  \lim_{u\to0^- } \erf\left(\frac{1}{u}\right) = -1.
    \end{align*}
Now, taking $u=\varepsilon\sqrt{2}/x$, we observe that since $\varepsilon>0$, we have 
\begin{align*}
        \lim_{\varepsilon\to0^+ } \erf\left(\frac{x}{\varepsilon \sqrt{2}}\right) = \begin{cases}
            1&\text{if }x>0,\\
            -1& \text{if }x<0.
        \end{cases}
    \end{align*}
Consequently, we have
\begin{align}\label{eq:limit_gelu}
   \lim_{\varepsilon\to0^+ } \sigma_\varepsilon(x) =     \lim_{\varepsilon\to0^+ } \frac{x}{2}\left(1+\erf\left(\frac{x}{\varepsilon \sqrt{2}}\right) \right)= \begin{cases}
            x&\text{if }x>0,\\
            0& \text{if }x<0 ,
        \end{cases} = \ReLU(x) =\sigma(x).
    \end{align}
Therefore, since \eqref{eq:limit_gelu} holds for every $x$, we deduce that $\sigma_\varepsilon \to \sigma$ uniformly in $\R$. The rest of the proof follows as a direct application of Theorem \ref{th:estimation_functional_dif_activation}, by observing that when $\varepsilon\to 0$ then $\nu\to0$ as well. 
\end{proof}

\section{Further comments and open problems}\label{further_comentaries}
In this paper, we have demonstrated that a 2-wide deep neural network can address any classification problem with no more than \(O(N)\) layers. Additionally, we have established a universal approximation theorem for \(L^p(\Omega;\R_+)\) functions, requiring a neural network width \(d+1\). Notably, our proofs are fully constructive, explicitly detailing the parameters to be utilized, giving a fully geometric interpretation of the architecture employed, and providing a formal proof of each statement.

These explicit constructions yield bounds on the parameters and provide a priori estimates for minimizers of standard regularized training loss functionals in supervised learning. As the regularization parameter vanishes, the trained networks converge to exact classifiers.

In the following, we present some interesting open-related questions.
\smallbreak

 \noindent{\bf (1) Understanding $n$-wide deep neural networks.} The construction of the parameters in Theorem \ref{multiclass_theorem} provides a clear and geometric interpretation of why and how the neural network achieves memorization. In this context, it would also be interesting to describe and explain geometrically other results in the literature, such as the $3$-wide deep neural networks constructed in \cite{park2021provable}, or the two-layer network constructed in \cite{yun2019small} that achieves memorization with $O(N^{1/2})$ neurons.
\smallbreak

 \noindent{\bf (2) Topology of the Dataset.} The first step in the proof of Theorem \ref{multiclass_theorem} involves projecting the data into a one-dimensional space. This reduction simplifies the data structure, facilitating the development of our algorithm. However, this projection results in the possible loss of the original data distribution, since it could, for instance, disperse initially clustered points.
Therefore, to take advantage of the initial data distribution, we could project the data in a space of dimension greater than one, using more hyperplanes (neurons) at the first step (layer). One possibility is to find a low-dimensional space in which the points can be embedded, preserving distances. This is precisely what the Johnson-Lindenstrauss lemma states, ensuring the existence of a linear map that projects points in a lower dimensional space, preserving distances. This could reduce the number of hidden layers that Theorem \ref{multiclass_theorem} uses. Manipulating data in dimensions higher than $2$ can considerably reduce the number of layers needed,  \cite{park2021provable}.
\smallbreak

 \noindent{\bf (3) Width versus Depth. } As we have seen in the bibliographic discussion of Section \ref{sec:related_work}, networks with one hidden layer can memorize $N$ data points with $O(N)$ neurons, while adding an extra layer, making it possible using $O(\sqrt{N})$ neurons, \cite{yun2019small}. In the context of deep neural networks, width $12$ allows memorization with $O(N^{1/2}+\log(N))$ neurons,  \cite{park2021provable}, and width $3$ with $O(N^{2/3}\log(N))$ neurons. In this paper, we have shown that $O(N)$ neurons suffice for width $2$. These results exhibit a trade-off between the depth and width of the network. A systematic analysis of this compromise between depth and width would be desirable. 
\smallbreak

 \noindent{\bf (4) Extension of the Universal Approximation Theorem.} As observed in the proof of Theorem \ref{UAT_LP}, Theorem \ref{multiclass_theorem} was utilized to map the resulting points to their respective labels. However, a universal approximation theorem can also be concluded using other networks, not necessarily of width 2 \cite{park2021provable,vardi2021optimal,yun2019small,zhang2017understanding}. By maintaining a width of $d+1$, we can incorporate the neural network with width 3 introduced in \cite{park2021provable} to ensure universal approximation when $d \geq 2$. Combining this result with our strategy may lead to a neural network with reduced depth compared to the one given by Theorem \ref{UAT_LP}.

On the other hand, the first step of the proof of \Cref{UAT_LP} uses an approximation by simple functions of finite volume type on a regular set of hyperrectangles, ensuring an error of order $h$ \cite{davydov2010algorithms}. However, more sophisticated nonlinear approximation procedures, such as those based on \textit{dyadic partitions}, could achieve better convergence rates (see \cite{davydov2010algorithms}). Nevertheless, to approximate functions using the neural network, it would be necessary to develop an iterative algorithm operating over a non-uniform grid.
\smallbreak

\noindent{\bf (5) Minimal width universal approximation theorem for more general spaces.} Universal approximation theorems have been extended to Sobolev and Besov spaces, as discussed in \cite{devore2021neural, Siegel}. In \cite{Siegel}, it is shown that the class $W^{s,q}$, when compactly embedded in $L^p$, can be approximated by neural networks. For $p \leq q$, piecewise polynomial approximations on uniform grids are applicable, which neural networks can approximate. Additionally, the neural network in \cite{Siegel} uses a width  $25d + 31$. The uniformity of the grid allows us to extend our methodology to estimate the network depth while maintaining a small width and understanding the parameters involved in the approximation. However, for $p > q$, nonlinear or adaptive methods are required. Although \cite{Siegel} constructs a neural network to approximate nonlinear functions on non-uniform grids, the choice of parameters lacks clear geometric intuition, and no algorithmic procedure is provided.

\appendix
\section{}\label{appendix_1}
\begin{proof}(Proof of Proposition \ref{key_proposition})
Recall that the classes under consideration are defined as follows:
\begin{align}\label{Conjuntos_xx_ap}
    \mathcal{C}_k=\{x_i \text{ with } i\in \llbracket1,N\rrbracket \,:\, y_i=k \},\quad\text{and}\quad \mathcal{C}=\bigcup_{k=0}^{M-1} \mathcal{C}_k.
\end{align}
To prove that we can compress the $M$ classes, we proceed by induction. For the first class, i.e. $k=0$, the proof follows directly by noting that the assumptions of the statement ensure the existence of parameters $\mathcal{W}^{L_0},\mathcal{B}^{L_0}$ and $\tilde z_0\in\R^2$ such that $\phi(\mathcal{W}^{L_0},\mathcal{B}^{L_0},\mathcal{C}_{0})=\tilde z_0$.

Let us assume that the statement holds for some $0<k<M-1$. Thus, there exists a collection of points $\{z_j\}_{j=0}^k\subset \R^2$, $\tilde L_1>1$, and parameters $\mathcal{W}^{\tilde L_1},\mathcal{B}^{\tilde L_1}$ such that the input-output map satisfies
\begin{align*}
\phi_k(\mathcal{W}^{\tilde L_1},\mathcal{B}^{\tilde L_1},\mathcal{C}_{j})=z_j, \quad \text{ for every }j\in\llbracket0,k\rrbracket .
\end{align*}
In particular, we have that 
\begin{align*}
\phi_k(\mathcal{W}^{\tilde L_1},\mathcal{B}^{\tilde L_1},\mathcal{C}_{k+1})=\hat{\mathcal{C}}_{k+1},\quad \text{where } \hat{\mathcal{C}}_{k+1}=\{z_i \text{ with } i\in \llbracket1,N\rrbracket\,:\, y_i={k+1}\}.
\end{align*}
Let us prove the statement for $k+1$. Denote by $\hat{\mathcal{C}}=\phi^k(\mathcal{W}^{\tilde L_1},\mathcal{B}^{\tilde L_1},\mathcal{C})$. By hypothesis, there exist $\hat{z}_{k+1}\in\R^2$, $\tilde L_2\geq 1$, $\mathcal{W}^{\tilde L_2}$ and $\mathcal{B}^{\tilde L_2}$ such that 
\begin{align}\label{condition_1_propo1_ap}
\phi_{k+1}(\mathcal{W}^{\tilde L_2},\mathcal{B}^{\tilde L_2},\hat{\mathcal{C}}_{k+1})=\hat{z}_{k+1},\qquad  \phi_{k+1}(\mathcal{W}^{\tilde L_2},\mathcal{B}^{\tilde L_2},\hat{\mathcal{C}}\setminus\hat{\mathcal{C}}_{k+1})\neq \hat{z}_{k+1},
\end{align}
and 
\begin{align}\label{condition_2_propo1_ap}
    \phi_{k+1}(\mathcal{W}^{\tilde L_2},\mathcal{B}^{\tilde L_2},z^1)\neq  \phi_{k+1}(\mathcal{W}^{\tilde L_2},\mathcal{B}^{\tilde L_2},z^2),\quad\text{for all   } z^1,\,z^2\in\hat{\mathcal{C}}\setminus \hat{\mathcal{C}}_{k+1}, \, z^1\neq z^2.
\end{align}
Since the composition of input-output maps is again an input-output map of \eqref{discrete_dynamics}, we can compose $\phi_{k+1}$ and $\phi_k$. Then, by \eqref{condition_1_propo1_ap}-\eqref{condition_2_propo1_ap}, the map $\hat{\phi}:=\phi_{k+1}\circ\phi_k $ satisfies that 
\begin{align}\label{eq:last_relation_prop1_ap}
    \hat{\phi}(\mathcal{W}^{\tilde L_3},\mathcal{B}^{\tilde L_3},\mathcal{C}_j)=\hat{z}_j\quad \text{for every }j\in \llbracket 0,k+1\rrbracket,
\end{align}
where $\tilde L_3=\tilde L_2+\tilde L_1$, $\mathcal{W}^{\tilde L_3}=\mathcal{W}^{\tilde L_2}\cup\mathcal{W}^{\tilde L_1}$, and $\mathcal{B}^{\tilde L_3}=\mathcal{B}^{\tilde L_2}\cup\mathcal{B}^{\tilde L_1}$. This concludes the induction. The result follows by taking $k=M-2$ in \eqref{eq:last_relation_prop1_ap}.
\end{proof}

\begin{proof}(Proof of Corollary \ref{corollary_clasification_R})
Let us consider the dataset $\{(x_i, y_i)\}_{i=1}^N \subset \mathbb{R}^{d} \times \{\alpha_0, \dots, \alpha_{M-1}\}$, where $\{\alpha_k\}_{k=0}^{M-1} \subset \mathbb{R}$. Without loss of generality, assume that $y_i < y_j$ for every $i < j$, where $i, j \in \{0, \dots, M-1\}$. If $y_0 \geq 0$, we conclude by applying \Cref{multiclass_theorem}. If $y_0 < 0$, we consider a new set of labels given by $\hat{y}_i = y_i - y_0$ for every $i \in \{0, \dots, M-1\}$, noting that $\hat{y}_0 = 0$. Furthermore, $\{\hat{y}_i\}_{i=1}^N \subset \{\hat{\alpha}_k\}_{k=0}^{M-1} \subset \mathbb{R}_+$. Then, according to \Cref{multiclass_theorem}, there exist parameters $\mathcal{W}^L$ and $\mathcal{B}^L$ such that for $L = 2N + 4M - 1$ and $w_{\text{max}} = 2$, the input-output map of \eqref{discrete_dynamics_2} with $A_j^1 = \text{Id}_{d_j}$ (the identity matrix in $\mathbb{R}^{d_j \times d_j}$) for all $j \in \{1, \dots, L\}$, satisfies
\begin{align}\label{eq:input_output_map_1}
\phi^L(x_i) = \phi(\mathcal{A}^L, \mathcal{W}^L, \mathcal{B}^L, x_i) = \hat{y}_i, \quad \text{for all } i \in \{1, \dots, N\}.
\end{align}
It now suffices to construct a mapping that transforms each $\hat{y}_i$ into $y_i$ for every $i \in \{1, \dots, N\}$. Consider the parameters 
\begin{align*}
    w_1^{1} = -1, \quad w_2^{1} = 1, \quad b_1^{1} = -y_0, \quad b_2^{1} = y_0, \quad \text{and } A^{1} = (-1, 1),
\end{align*}
where $W^{1} = (w_1^{1}, w_2^{1})^\top$ and $b^{1} = (b_1^{1}, b_2^{1})^\top$. We denote by $\phi^{1}$ the input-output map of \eqref{discrete_dynamics_2} defined by $W^{1}, b^{1}$, and $A^{1}$. For every $\hat{y}_i$, we have that $-\hat{y}_i + y_0 \geq 0$ or $\hat{y}_i - y_0 \geq 0$. If $-\hat{y}_i + y_0 \geq 0$, then
\begin{align*}
    \phi^{1}(\hat{y}_i) &= A^{1} \cdot \vecsigma(W^{1} \hat{y}_i + b^{1}) = (-1, 1) \cdot \begin{pmatrix} \sigma(-\hat{y}_i + y_0) \\ \sigma(\hat{y}_i - y_0) \end{pmatrix} \\
    &= -\sigma(-\hat{y}_i + y_0) + \sigma(\hat{y}_i - y_0) = -(-\hat{y}_i + y_0) = y_i.
\end{align*}
Similarly, for $\hat{y}_i$ such that $\hat{y}_i - y_0 \geq 0$, we obtain $\phi^{1}(\hat{y}_i) = (\hat{y}_i - y_0) = y_i$. Therefore, the input-output map $\phi^{L+1} := \phi^{1} \circ \phi^{L}$ can memorize the dataset $\{(x_i, y_i)\}_{i=1}^N$. Moreover, since the width and depth of the neural network defined by $\phi^{1}$ are $2$ and $1$, respectively, the resulting neural network defined by $\phi^{L+1}$ has a width  $2$ and a depth $2N + 4M$.

\end{proof}

\section{}\label{appendix_2}

\begin{lemma}\label{lemma:single_subregion}
    For every $I_n$ with $n\geq 1$ and for all $\mathcal{H}_j,\mathcal{H}_i \in \mathcal{S}_{I_n}$ with  $i\neq j$, we have that $F(\mathcal{H}_i)\cap F(\mathcal{H}_j)=\emptyset.$
\end{lemma}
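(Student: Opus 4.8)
\emph{Proof proposal.} The plan is to restrict $F$ to each subregion $\mathcal{S}_{I_n}$, note that there it acts as a projection onto a subset of coordinates followed by a translation, and then show that two distinct cells of $\mathcal{H}$ lying in the same $\mathcal{S}_{I_n}$ are already separated along one of the coordinates this projection keeps alive. Concretely, on $\mathcal{S}_{I_n}$ one has $p_\eta\equiv 1$ for $\eta\in I_n$ and $p_\eta\equiv 0$ otherwise, so \eqref{eq:definition_F} (and its expansion in \textbf{Step 2.1.2}) yields constants $c_\eta$ depending only on $I_n$ with $F(x)^{(\eta)}=x^{(\eta)}+c_\eta$ for every $\eta$ in the index set $K:=I_n\cap\llbracket1,d\rrbracket$, enlarged by the index $\hat\eta$ whenever $d+1\in I_n$ (recall $\hat\eta\notin I_n$ in that case, by \eqref{eq:empty_regions}), while $F(x)^{(\eta)}=0$ for $\eta\notin K$. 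Hence for $x,y\in\mathcal{S}_{I_n}$ we have $F(x)=F(y)$ iff $x^{(\eta)}=y^{(\eta)}$ for all $\eta\in K$, and therefore for $\mathcal{H}_i,\mathcal{H}_j\subseteq\mathcal{S}_{I_n}$ the conclusion $F(\mathcal{H}_i)\cap F(\mathcal{H}_j)=\emptyset$ is equivalent to $\mathcal{H}_i$ and $\mathcal{H}_j$ having disjoint coordinate intervals along some $\eta\in K$.

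\emph{Two structural facts about the hyperplanes.} Next I would make precise two properties of the construction in \textbf{Step 2.1.1}. (i) Each $H_\eta$, $\eta\in\llbracket1,d+1\rrbracket$, lies inside a gap of the grid $G_\delta^h$ bordering $\mathcal{H}_*$; hence no cell of $\mathcal{H}$ is split by any $H_\eta$, every $p_\eta$ is constant on each cell, and the relation $\mathcal{H}_\ell\subseteq\mathcal{S}_{I_n}$ is well posed. (ii) Up to reflecting coordinates (the orientation already encoded in $W^1$), $\mathcal{H}_*$ attains the minimal $\eta$-th coordinate inside $\mathcal{C}$ for every $\eta\neq\hat\eta$. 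Combining (i) and (ii): for $\eta\in\llbracket1,d\rrbracket\setminus\{\hat\eta\}$ the non-activation side $R_\eta^c=\{x^{(\eta)}<-b_\eta\}$ meets only the single column of cells containing $\mathcal{H}_*$, so any two cells contained in $R_\eta^c$ have the same $\eta$-th coordinate interval; and, since $R_{\hat\eta}\cap R_{d+1}=\varnothing$, the set $R_{\hat\eta}^c\cap R_{d+1}^c=\{\,b_{\hat\eta}^*<x^{(\hat\eta)}<-b_{\hat\eta}\,\}$ is exactly the slab swept out by the $\hat\eta$-column of $\mathcal{H}_*$, so any two cells contained in it share the same $\hat\eta$-th coordinate interval.

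\emph{Main step.} Let $\mathcal{H}_i\neq\mathcal{H}_j$ with $\mathcal{H}_i,\mathcal{H}_j\subseteq\mathcal{S}_{I_n}$. Being distinct cells of the regular grid, they have disjoint intervals along some coordinate in $\llbracket1,d\rrbracket$; I claim such a coordinate can be chosen in $K$. Arguing by contradiction, suppose $\mathcal{H}_i$ and $\mathcal{H}_j$ agree along every coordinate of $K$, so they differ along some $\eta_0\in\llbracket1,d\rrbracket\setminus K$. If $\eta_0\neq\hat\eta$ then $\eta_0\notin I_n$, hence $p_{\eta_0}\equiv0$ on $\mathcal{S}_{I_n}$, i.e.\ $\mathcal{H}_i,\mathcal{H}_j\subseteq R_{\eta_0}^c$, and facts (i)--(ii) force them to share their $\eta_0$-th interval -- a contradiction. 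If $\eta_0=\hat\eta$ then $\hat\eta\notin K$ forces simultaneously $\hat\eta\notin I_n$ and $d+1\notin I_n$, hence $p_{\hat\eta}\equiv p_{d+1}\equiv0$ on $\mathcal{S}_{I_n}$, i.e.\ $\mathcal{H}_i,\mathcal{H}_j\subseteq R_{\hat\eta}^c\cap R_{d+1}^c$, and again they share their $\hat\eta$-th interval -- a contradiction. Therefore $\mathcal{H}_i$ and $\mathcal{H}_j$ have disjoint intervals along some $\eta^*\in K$; since $F$ translates that coordinate by the common constant $c_{\eta^*}$, the images $F(\mathcal{H}_i)$ and $F(\mathcal{H}_j)$ have disjoint $\eta^*$-th coordinate ranges, whence $F(\mathcal{H}_i)\cap F(\mathcal{H}_j)=\emptyset$.

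\emph{Main obstacle.} The only delicate part is to make facts (i) and (ii) fully rigorous from the (somewhat informal) description of how the $b_\eta$ are chosen in \textbf{Step 2.1.1}: namely, that because each $H_\eta$ sits in a grid gap adjacent to $\mathcal{H}_*$ and $\mathcal{H}_*$ lies on an edge of $\mathcal{C}$, the non-activated side of each single $H_\eta$ (resp.\ the common non-activated side of the pair $H_{\hat\eta},H_{d+1}$) meets exactly one column of cells. Everything after that is bookkeeping with canonical coordinates, and the disjointness of the two intervals is preserved under $F$ precisely because $F$ acts on the surviving coordinate by a translation independent of the cell.
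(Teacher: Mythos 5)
Your proposal is correct, but it takes a genuinely different route from the paper's proof — and it is in fact the more careful of the two. The paper argues by contradiction on points: it takes $x_1\in\mathcal{H}_1$, $x_2\in\mathcal{H}_2$ with $F(x_1)=F(x_2)$, uses $p_\eta(x_1)=p_\eta(x_2)$ (same subregion) and the coordinatewise identity $p_\eta(x_1)\bigl(x_1^{(\eta)}+b_\eta\bigr)=p_\eta(x_2)\bigl(x_2^{(\eta)}+b_\eta\bigr)$, and reads this as forcing $x_1^{(\eta)}=x_2^{(\eta)}$ for every $\eta$, contradicting $x_1\neq x_2$. That deduction is valid only on the active coordinates (where $p_\eta=1$); on the collapsed ones the identity reduces to $0=0$, so the paper's argument silently passes over the case in which the two cells differ only along collapsed coordinates. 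Your proof treats exactly this case: you first identify $F$ restricted to $\mathcal{S}_{I_n}$ as translation in the surviving coordinates and annihilation of the rest, and then use the placement of the hyperplanes (your facts (i)--(ii): each $H_\eta$ lies in the grid gap adjacent to $\mathcal{H}_*$, which is extremal in every direction $\eta\neq\hat\eta$, and $H_{\hat\eta},H_{d+1}$ bound the single $\hat\eta$-slab of $\mathcal{H}_*$) to conclude that all cells of a given subregion share the same interval along every collapsed direction, so two distinct cells must be separated along a surviving direction, which $F$ merely translates by a constant common to the subregion. This geometric input is genuinely needed: if a hyperplane were placed farther from $\mathcal{H}_*$, two cells of the same subregion differing only in a killed coordinate would have identical images and the lemma would fail — so your argument supplies precisely the ingredient the paper's proof leaves implicit. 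The only caveat, which you flag yourself, is that facts (i)--(ii) must be read off from the somewhat informal choice of the $b_\eta$'s in Step 2.1.1 (consistent with the figures and with the appendix remark that these hyperplanes lie inside $G_\delta^h$); granting that reading of the construction, your proof is sound, at the price of a longer, more bookkeeping-heavy argument than the paper's (incomplete) two-line contradiction.
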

\begin{proof}(Proof of Lemma \ref{lemma:single_subregion})
    By contradiction, assume that there exist $\mathcal{H}_1$ and $\mathcal{H}_2$ in some $\mathcal{S}_{I_n}$ such that $F(\mathcal{H}_1)\cap F(\mathcal{H}_2)\neq\emptyset.$ Therefore, there exist $x_1\in \mathcal{H}_1$ and $x_2\in \mathcal{H}_2$ such that $F(x_1)=F(x_2)$. Since $\mathcal{H}_1$ and $\mathcal{H}_2$ are different hyperrectangles, there exists $k\in\llbracket1,d+1\rrbracket$ such that $x^{(k)}_1\neq x^{(k)}_2$. Due to the fact that $\mathcal{H}_1$ and $\mathcal{H}_2$ belong to the same subregion, we have that 
    \begin{align}\label{eq:relation_p_lemma1}
        p_\eta(x_1)=p_\eta(x_2), \quad\text{for every }\eta\in \llbracket1,d+1\rrbracket.
    \end{align}
    Since $F(x_1)=F(x_2)$ on each coordinate, we will have  
    \begin{align}\label{eq:Fx1=Fx2}
     p_\eta(x_1)x_1^{(\eta)}+ p_\eta(x_1)b_\eta =  p_\eta(x_2)x_2^{(\eta)}+ p_\eta(x_2)b_\eta, \quad\text{for every }\eta\in \llbracket1,d+1\rrbracket. 
    \end{align}
    Therefore, using \eqref{eq:relation_p_lemma1}, we conclude that $x_1^{(\eta)}=x_2^{(\eta)}$ for every $\eta\in \llbracket1,d+1\rrbracket$, which is a contradiction  since $x^{(k)}_1\neq x^{(k)}_2$.
\end{proof}

\begin{proof}(Proof of Lemma \ref{lemma:all_subregions})
From the definition of $p_\eta(x)$, we immediately have that $F(\mathcal{H}_*)= \mathbf{0}_{d}$. For the second part of the lemma, we proceed by contradiction. Let us assume that there exist $\mathcal{H}_1,\mathcal{H}_2$ such that $F(\mathcal{H}_1)\cap F(\mathcal{H}_2)\neq\emptyset$.
If $\mathcal{H}_1$ and $\mathcal{H}_2$ belong to the same subregion, we are done due to Lemma \ref{lemma:single_subregion}. 

Therefore, we can assume that $\mathcal{H}_1$ and $\mathcal{H}_2$ are in different subregions. This implies that a hyperplane separates them. Thus, there exists $k\in\llbracket1,d+1\rrbracket$ such that 
\begin{align}\label{eq:dichotomic_b}
    x_1^{(k)}<b_k<x_2^{(k)} \quad\text{or}\quad x_2^{(k)}<b_k<x_1^{(k)},
\end{align}
and that $p_k(x_1)\neq p_{k}(x_2)$ for every $x_1\in \mathcal{H}_1$ and $x_2\in \mathcal{H}_2$. Without loss of generality, we assume that 
\begin{align}\label{eq:dicotomic_p}
    p_k(x_1)=1\quad \text{and}\quad p_k(x_2)=0,
\end{align}
that is, $\mathcal{H}_1\subset R_k$, and due to the fact that the hyperrectangles do not intersect the hyperplanes, we have that 
\begin{align}\label{eq:stric_inequality_xk}
    e_k\cdot x_1+b_k>0.
\end{align}
We continue the proof by dividing it into two cases. 
\smallbreak
\noindent $\bullet$ The case $P(x_1) = P(x_2)$: In such case, we have that $p_\eta(x_1)x_1^{(\eta)}=p_\eta(x_2)x_2^{(\eta)}$ for all $\eta\in\llbracket1,d+1\rrbracket$. Using \eqref{eq:dicotomic_p}, we deduce that $x_{1}^{(k)}=0$. Thus, due to \eqref{eq:dichotomic_b}, we have that $b_{k}\neq 0$. Since $F(x_1)=F(x_2)$ and we have assumed that $P(x_1)=P(x_2)$, then $G(x_1)=G(x_2)$. The last equality implies that $p_\eta(x_1)b_\eta= p_\eta(x_2)b_\eta$ for all $\eta\in\llbracket1,d+1\rrbracket$, therefore, applying \eqref{eq:dicotomic_p}, we conclude that $b_k=0$, which is a contradiction. 
\smallbreak
\noindent$\bullet$ The case $P(x_1) \neq P(x_2)$. When $p_{k}(x_1)x_1^{(k)}\neq p_{k}(x_2)x_2^{(k)}$, due to \eqref{eq:dicotomic_p}, necessarily $x_1^{(k)}\neq 0$. As before, using \eqref{eq:dicotomic_p} and the fact that $F(x_1)=F(x_2)$, we deduce that $x_1^{(k)}=-b^k$. Therefore, considering \eqref{eq:stric_inequality_xk}, we face a contradiction. 
\end{proof}

\begin{proof}[Proof of the Lemma \ref{estimation_norm_psi}] Let us begin by observing that Corollary \ref{coro:estimation_norms} cannot be directly applied since now we have an infinite number of data points. However, we can provide a similar estimation by carefully analyzing the parameters used in the proof of Theorem \ref{UAT_LP}. According to Step 2.3 in the proof of Theorem \ref{UAT_LP}, the map $\phi^{\mathcal{L}}=\phi^L\circ \phi^{2N_E}$ drives the hyperrectangles defined in $\mathcal{H}$ into their respectively labels. Therefore, to estimate the norm of $\phi^\mathcal{L}$, we divide the proof into two parts.
\medbreak

\noindent\textit{ \bf Norm of $\phi^{2N_E}$.} To estimate the norm of $\phi^{2N_E}$, we make the following observations:
\smallbreak

\noindent 1) Due to the fact that the hyperplanes defined in \eqref{hiper_approximation} must belong to $G^h_\delta$, we have
    \begin{align*}
        \|b_\eta\|_{\infty}\leq C_\eta (h+\delta/2)+m_d(\Omega), \quad\text{for every }\eta\in \llbracket1,d+1\rrbracket,
    \end{align*}
    where $m_d(\mathcal{C})$ is the Lebesgue measure of $\mathcal{C}$ and the $C_\eta$'s are positive  uniformly bounded constants. Thus, $\|b^1\|_{\infty}\leq Ch+m_d(\Omega)$. Moreover, by definition, $\|W^1\|_{\infty}=1$.
   \medbreak

\noindent 2) With the parameters derived in Step 2 of the proof of Theorem \ref{UAT_LP}, the hyperrectangles are mapped to a $d+1$-dimensional space. Since $\|W^1\|=1$, the hyperrectangles are mapped according to their distance to the hyperplane, which is less than $\delta/2$. Furthermore, all hyperrectangles are no farther away than $C(h+\delta/2)$. Thus, the parameters $b^2_\eta$ introduced in Step 2 of the proof of Theorem \ref{UAT_LP} satisfy
$
        \|b^2\|_{\infty}\leq C(h+\delta/2).
$
    By definition, again, $\|W^2\|_{\infty}=1$.
\medbreak

\noindent 3) When projecting the hyperrectangles into the $d$-dimensional space, they remain no farther away than $C(h+\delta/2)$. Additionally, the hyperrectangles are contained within the interior of a ball $B_0(C(h+\delta/2))$, centered at zero with radius $C(h+\delta/2)$ (see Figure \ref{fig:complete_compress}).
\medbreak

\noindent 4) Note that if we apply similar parameters, this time to compress $\mathcal{H}_2^2$, 
 from \Cref{fig:complete_compress}, the distance between hyperrectangles becomes $\delta/4$. Generally, the distance between hyperrectangles in step $j$ is $\delta/(2^{\lfloor j/2\rfloor})$.

\begin{figure}
    \centering
    \includegraphics[height=0.33\textwidth]{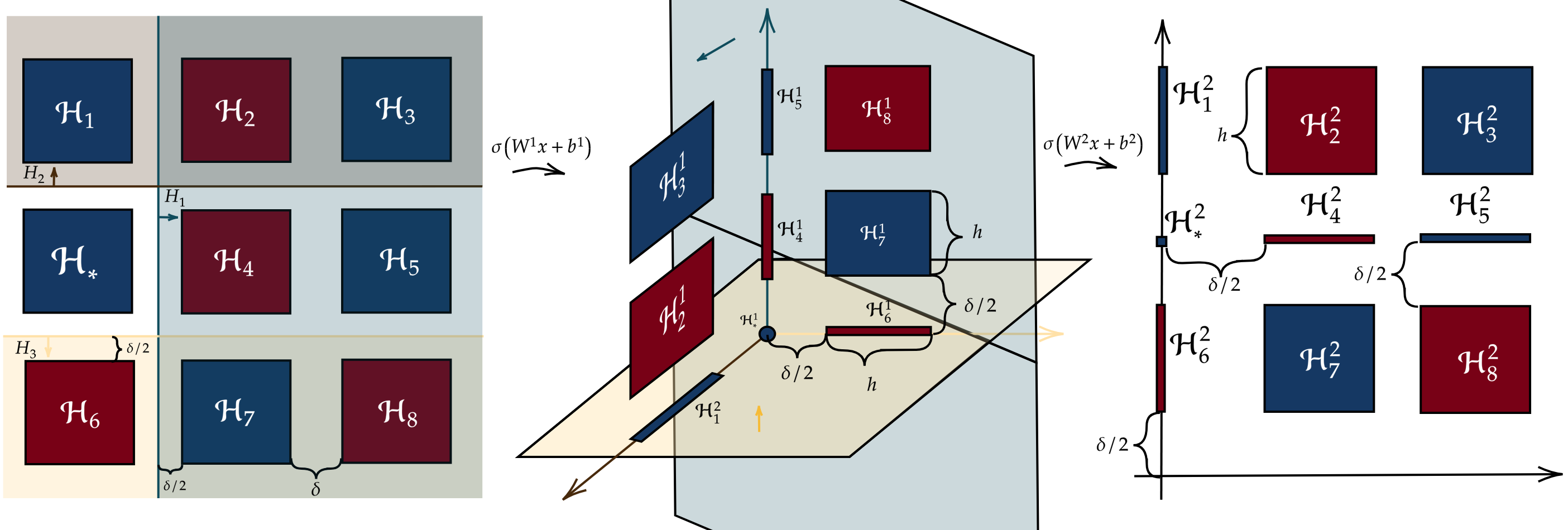}
    \caption{Illustration of the initial steps in the compression process. For a specific $2-$dimensional example, we show how the parameters $(W^1, b^1)$ and $(W^2, b^2)$ affect the hyperrectangles, reducing  distances. The first figure shows hyperrectangles separated by a $\delta$ distance. We choose hyperplanes with normal vectors $(0,1)$, $(0,-1)$, and $(1,0)$ to collapse the hypercube $\mathcal{H}^*$. This action maps the hyperrectangles into a $3-d$ space. Subsequently, using two hyperplanes with normal vectors $(0,0,1)$ and $(1,-1,0)$, we map the hyperrectangles onto a $2-d$ space, where the distance between the hyperrectangles is now $\delta/2$ close to zero, and the farthest hyperrectangle is at most at distance $C(h+\delta/2)$.}
    \label{fig:complete_compress}
\end{figure}

In the compression phase, we apply an iterative process where the parameters are selected based on the same criteria. Therefore, we conclude that
\begin{align*}
    \|b^j\|_{\infty} = \begin{cases}
        C(h+\delta/2) + m_d(\mathcal{C}) &\text{ if } j=1,\\
        C(h+\delta/(2^{\lfloor j/2\rfloor})) &\text{ otherwise,}
    \end{cases} \quad \|W^j\|_{\infty} = 1, \quad \forall j \in \llbracket1,2dN_e\rrbracket.
\end{align*}
Consequently, 
\begin{align}\label{eq:estimation_norm_phi_NE}
    \|\phi^{2N_E}(x)\| \leq  \|x\| + m_d(\mathcal{C}) + 2dN_E C\left( h+\frac{\delta}{2} \right).
\end{align}
\medbreak

\noindent \emph{\bf  Norm of $\phi^{L}$}. We have shown that in the compression process, the data is driven into a ball $B_0(C(h+\delta/2^{2N_E}))$, where $C$ is a constant depending on $m_d(\mathcal{C})$. Moreover, the distance between the points does not exceed $\delta/(2^{2N_E})$. Therefore, the resulting $N$ points from the compression process reside within that ball. Note that the map $\phi^L = (\phi_3^{L_3} \circ \phi_2^{L_2} \circ \phi_1^{L_1} \circ \phi_0^{L_0})$ corresponds to the map constructed in Section \ref{formal_proof}. Thus, starting from the output of $\phi^{2N_E}$, we analyze each map $\phi_i^{L_i}$.
\smallbreak

\noindent {1) Precondition of the data:} In this  phase, $b_1$ is chosen large enough such that  $\vecsigma$ acts as the identity function. Considering as an input data point the output of the map $\phi^{2N_E}$, then it is enough to take $b$ larger than $C(h+\delta/2^{2N_E})$. This implies:
\begin{align*}
    \|W_0\|_{\infty} = 1, \quad \|b_0\|_{\infty} \leq 2C(h+\delta/2^{2N_E}).
\end{align*}
 \smallbreak

\noindent {2) Compression process:} After data preconditioning, all the datasets have been projected to the real line, and the distance between points does not exceed $C\left(\delta/2^{2N_E}\right)$. We place the hyperplanes in the \emph{``Compression process"} depending on 
 the data location. Therefore, we deduce that
\begin{align*}
    \|W_j\|_{\infty} \leq 1, \quad \|b_j\|_{\infty} \leq C\left(\frac{\delta}{2^{2N_E}}\right), \quad \text{ for all } j \in \llbracket1,\dots,2N+1\rrbracket.
\end{align*}
  \smallbreak

\noindent {3) Data sorting:} In this step, we place the hyperplanes depending on the data location. Then, we obtain that 
\begin{align*}
    \|W_j\|_{\infty} \leq 1, \quad \|b_j\|_{\infty} \leq C\left(\frac{\delta}{2^{2N_E}}\right), \quad \text{ for all } j \in \llbracket2N+2,\dots,2N+2M+2\rrbracket.
\end{align*}
  \smallbreak

 \noindent {4) Mapping to the respective label:} In this step, we expand or contract the data to map them to their respective labels. In Theorem \ref{UAT_LP}, the labels are defined by the different values $\{f^h_i\}_i$ that the function $f_h$ takes. Then, we deduce that 
\begin{align*}
\|\phi^{L_3}\|_{L^\infty(\mathcal{C};\R_{+})}\leq\max\{\max_{i}\{f_i^h\},m_d(\mathcal{C})\}.
\end{align*}
Then, analogous to \eqref{eq:estimation_f_ih1}, we obtain that 
\begin{align*}
|f_{i}^h|\leq C(1+\|f\|_{L^p(\mathcal{H}_i;\R_{+})})\leq C(1+\|f\|_{L^p(\Omega;\R_{+})}).
\end{align*}
Consequently, there exists a constant $C > 0$ that only depends on $\|f\|_{L^p(\Omega; \mathbb{R}_+)}$ and $m_d(\mathcal{C})$, such that $\|\phi^{L_3}\|_{L^\infty(\mathcal{C};\R_+)} \leq C$.
  \medbreak
  
On the other hand, given $\hat{L}>0$ and a family of parameters $\mathcal{W}^{\hat{L}}=\{W^{i}\}_{i=1}^{\hat{L}}$ and $\mathcal{B}^{\hat{L}}=\{b^{i}\}_{i=1}^{\hat{L}}$, the norm of $\phi^{\hat{L}}:=\phi^{\hat{L}}(\mathcal{W}^{\hat{L}},\mathcal{B}^{\hat{L}},\cdot)$,  the input-output map of \eqref{discrete_dynamics}, can be bounded by
\begin{align}\label{eq:norm_phi_arbitrary}
    \|\phi^{\hat{L}}\|_{L^{\infty}(\mathcal{C};\R_{+})}\leq \underset{x\in\mathcal{C}}{\esssup}\left\| \prod_{j=1}^{\hat{L}}W^{j} x + \sum_{i=1}^{\hat{L}-1}\left(\prod_{j=i}^{\hat{L}-1}W^{j+1} \right) b^i + b^{\hat{L}}\right\|_{\infty}
\end{align}
Consequently, using the fact that in the compression process and data sorting, we are using $2N_h$ and $2M_h+1$ layers, respectively, and the estimation of the parameters norms, we can apply  \eqref{eq:norm_phi_arbitrary} to deduce that 
\begin{multline*}    \|\phi^{\mathcal{L}}\|_{L^\infty(\mathcal{C};\R_{+})}=\|\phi^{L}\circ \phi^{2N_E}\|\leq C\,\underset{x\in\mathcal{C}}{\esssup}\|\phi^{2N_E}(x)\|\\+ 2N_hC\left( \frac{\delta}{2^{2N_E}}\right) +(2M_h+1)C\left(\frac{\delta}{2^{2N_E}}\right) +2C\left(h+\frac{\delta}{2^{N_E}}\right). 
\end{multline*}
Then, using \eqref{eq:estimation_norm_phi_NE} we have
\begin{align}\label{eq:final_estimation_phi_nh}
\|\phi^{\mathcal{L}}\|_{L^\infty(\mathcal{C};\R_{+})}\leq 2C m_d(\mathcal{C}) + 2dN_E C\left( h+\frac{\delta}{2} \right)+ (2N_h+2M_h+1)C\left( \frac{\delta}{2^{2N_E}}\right) +2C\left(h+\frac{\delta}{2^{N_E}}\right).
\end{align}
Denote by $l_\mathcal{C}$ the largest edge of $\mathcal{C}$. Then, applying \eqref{eq:num_NE} in \eqref{eq:final_estimation_phi_nh} there exists a positive constant $C_1$, independent of $h$ and $\delta$, such that
\begin{align}\label{eq:estimation_inf_phi_1}
\|\phi^{\mathcal{L}}\|_{L^\infty(\mathcal{C};\R_{+})}&\leq C_1\left(1+\delta \frac{2^{\frac{-C_2}{h+\delta}}}{(h+\delta)^d}+\delta2^{\frac{-C_2}{h+\delta}}+h\right),
\end{align}
where $C_2=2l_{\mathcal{C}}$. Now, since $e^{y}\geq \sum_{k=0}^{d+1}y^k/k!$ using the change of variable $y=log(2)C_2/(h+\delta)$ we deduce that
\begin{align*}
    \frac{2^{\frac{-C_2}{h+\delta}}}{{(h+\delta)^{d}}}&\leq \frac{1}{(h+\delta)^{d}}\left( \sum_{k=0}^{d+1}\frac{(C_2\log(2))^k(h+\delta)^{-k}}{k!}\right)^{-1}\\
    &=\frac{1}{(h+\delta)^{d}}\left( \frac{1}{(h+\delta)^{d+1}}\sum_{k=0}^{d+1}\frac{(C_2\log(2))^k(h+\delta)^{(d+1)-k}}{k!}\right)^{-1}\\
    &=(h+\delta)\left(\frac{(C_2\log(2))^{d+1}}{(d+1)!}+\sum_{k=0}^{d}\frac{(C_2\log(2))^k(h+\delta)^{(d+1)-k}}{k!}\right)^{-1}.
\end{align*}
Therefore, for $h<\frac{l_{\mathcal{C}}\log(2)}{(d+1)}$ we have that 
\begin{align*}
    \frac{2^{\frac{-C_2}{h+\delta}}}{{(h+\delta)^{d}}}\leq \frac{(dp+1)!}{(2l_{\mathcal{C}}\log(2))^{d+1}}(h+\delta),
\end{align*}
Similarly, using the inequality $e^{y}\geq 1+y$ and the change of variable $y=log(2)C_2/(h+\delta)$, we deduce that for $h<l_{\mathcal{C}}\log(2)$ we have that $2^{\frac{-C_2}{h+\delta}}\leq (l_{\mathcal{C}}\log(2))^{-1}(h+\delta)$. Consequently, from \eqref{eq:estimation_inf_phi_1} we obtain the inequality 
\begin{align*}
\|\phi^{\mathcal{L}}\|_{L^\infty(\mathcal{C};\R_{+})}\leq C\left(1+\delta (h+\delta)+h\right),
\end{align*}
with $C>0$ a constant depending on $d$, $m_d(\mathcal{C})$, and $\|f\|_{L^p(\Omega;\R_{+})}$. This concludes the proof.
\end{proof}

\section*{Acknowledgments}
The authors wish to express their gratitude to D. Ruiz-Balet and A. Alcalde for their insightful discussions and to A.\'Alvarez-L\'opez and T. Crin-Barat, for taking the time to critically review our manuscript.
\medbreak
M. Hern\'{a}ndez has been funded by the Transregio 154 Project, Mathematical Modelling, Simulation, and Optimization Using the Example of Gas Networks of the DFG, project C07, and the fellowship "ANID-DAAD bilateral agreement". 

E. Zuazua was funded by the European Research Council (ERC) under the European Union's Horizon 2030 research and innovation programme (grant agreement NO: 101096251-CoDeFeL), the Alexander von Humboldt-Professorship program, the ModConFlex Marie Curie Action, HORIZON-MSCA-2021-dN-01, the COST Action MAT-DYNNET, the Transregio 154 Project of the DFG, AFOSR Proposal 24IOE027 and grants PID2020-112617GB-C22/AEI/10.13039/501100011033 and TED2021131390B-I00/ AEI/10.13039/501100011033 of MINECO (Spain), and Madrid GovernmentUAM Agreement for the Excellence of the University Research Staff in the context of the V PRICIT (Regional Programme of Research and Technological Innovation).

 Both authors have been partially supported by the DAAD/CAPES Programs for Project-Related Personal, grant 57703041 'Control and numerical analysis of complex system'.

\bibliographystyle{abbrv} 
\bibliography{Biblio_classification.bib}
\vfill

\end{document}